\newtheorem{thm}{Theorem}
\newtheorem{prop}{Proposition}
\newtheorem{assum}{Assumption}
\newtheorem{lemma}{Lemma}
\newtheorem{defi}{Definition}
\newtheorem{remark}{Remark}
\newtheorem{example}{Example}
\def\x{\mathbf{x}}
\def\y{\mathbf{y}}
\def\u{\mathbf{u}}
\def\v{\mathbf{v}}
\def\X{\mathcal{X}}
\def\S{\mathcal{S}}
\def\T{\mathcal{T}}
\newcommand{\tabincell}[2]{\begin{tabular}{@{}#1@{}}#2\end{tabular}}
\icmltitlerunning{A Generic First-Order Algorithmic Framework for Bi-Level Programming Beyond Lower-Level Singleton} 
\begin{document}
	
\twocolumn[
\icmltitle{A Generic First-Order Algorithmic Framework for Bi-Level Programming \\Beyond Lower-Level Singleton}




\icmlsetsymbol{equal}{*}

\begin{icmlauthorlist}
	\icmlauthor{Risheng Liu}{1,2}
	\icmlauthor{Pan Mu}{1,2}
	\icmlauthor{Xiaoming Yuan}{4}
	\icmlauthor{Shangzhi Zeng}{4}
	\icmlauthor{Jin Zhang}{5}
\end{icmlauthorlist}

\icmlaffiliation{1}{DUT-RU International School of Information Science and Engineering, Dalian University of Technology.}
\icmlaffiliation{2}{Key Laboratory for Ubiquitous Network and Service Software of Liaoning Province.}
\icmlaffiliation{4}{Department of Mathematics, The University of Hong Kong.}
\icmlaffiliation{5}{SUSTech International Center for Mathematics and Department of Mathematics, Southern University of Science and Technology}
\icmlcorrespondingauthor{Jin Zhang}{zhangj9@sustech.edu.cn}

\vskip 0.3in
]



\printAffiliationsAndNotice{}  

\begin{abstract}
In recent years, a variety of gradient-based bi-level optimization methods have been developed for learning tasks. However, theoretical guarantees of these existing approaches often heavily rely on the simplification that for each fixed upper-level variable, the lower-level solution must be a singleton (a.k.a., Lower-Level Singleton, LLS). In this work, by formulating bi-level models from the optimistic viewpoint and aggregating hierarchical objective information, we establish Bi-level Descent Aggregation (BDA), a flexible and modularized algorithmic framework for bi-level programming. Theoretically, we derive a new methodology to prove the convergence of BDA without the LLS condition. Furthermore, we improve the convergence properties of conventional first-order bi-level schemes (under the LLS simplification) based on our proof recipe. Extensive experiments justify our theoretical results and demonstrate the superiority of the proposed BDA for different tasks, including hyper-parameter optimization and meta learning.
\end{abstract}

\section{Introduction}

Bi-Level Programs (BLPs) are mathematical programs with optimization problems in their constraints and recently have been recognized as powerful theoretical tools for a variety of machine learning applications. Mathematically, BLPs can be (re)formulated as the following optimization problem:
\begin{equation}
\min\limits_{\mathbf{x}\in\mathcal{X},\mathbf{y}\in\mathbb{R}^m}F(\x,\y), \ s.t. \ \mathbf{y}\in\mathcal{S}(\mathbf{x}),\label{eq:blp}
\end{equation}
where the Upper-Level (UL) objective $F$ is a jointly continuous function, the UL constraint $\mathcal{X}$ is a compact set, and the set-valued mapping $\mathcal{S}(\mathbf{x})$ indicates the parameterized solution set of the Lower-Level (LL) subproblem. Without loss of generality, we consider the following LL subproblem:
\begin{equation}
\S(\x)=\arg\min\limits_{\y} f(\x,\y),\label{eq:lower-level}
\end{equation}
where $f$ is another jointly continuous function. Indeed, the BLPs model formulated in Eqs.~\eqref{eq:blp}-\eqref{eq:lower-level} is a hierarchical optimization problem with two coupled variables $(\x,\y)\in\mathbb{R}^n\times\mathbb{R}^{m}$. Specifically, given the UL variable $\x$ from the feasible set $\X$ (i.e., $\x\in\X$), the LL variable $\y$ is an optimal solution of the LL subproblem governed by $\x$ (i.e., $\y\in\S(\x)$). 
Due to the hierarchical structure and the complicated dependency between UL and LL variables, solving the above BLPs problem is challenging in general, especially when the LL solution set $\S(\x)$ in Eq.~\eqref{eq:lower-level} is not a singleton~\cite{jeroslow1985polynomial,dempe2018bilevel}. In this work, we always call the condition that $\S(\x)$ is a singleton as Lower-Level Singleton (or LLS for short).

\begin{table*}[htb]
	\small
	\caption{Comparing the convergence results (together with properties required by the UL and LL subproblems) between BDA and the existing bi-level FOMs in different scenarios (i.e., BLPs with and without LLS condition). Here $\xrightarrow[]{s}$ and $\xrightarrow[]{u}$ represent the subsequential and uniform convergence, respectively. The superscript $^*$ denotes that it is the true optimal variables/values.
	}\label{tab:result}
	\centering
	\vskip 0.12in
	\renewcommand\arraystretch{1.1} 
	\begin{tabular}{| c | c | c | c |}
		\hline
		Alg. &   & {LLS} &  {w/o LLS} \\
		\hline
		\multirow{3}{*}{\tabincell{c}{Existing\\ bi-level FOMs}} & {UL} &   $F(\x,\cdot)$ is Lipschitz continuous. & \\
		\cline{2-3}
		& \multirow{1}{*}{{LL}} & $\{\y_K(\x)\}$ is uniformly bounded on $\mathcal{X}$, $\y_K(\x)\xrightarrow[]{u}\y^*(\x)$. & Not Available\\
		\cline{2-3}
		& \multicolumn{2}{c|}{Main results:  $\x_{K} \xrightarrow[]{s} \x^*$, $\inf_{\x \in \mathcal{X}}\varphi_K(\x) \to \inf_{\x \in \mathcal{X}}\varphi(\x)$. }& \\
		\hline
		& \multirow{2}{*}{{UL}} & \multirow{2}{*}{ $F(\x,\cdot)$ is Lipschitz continuous.} & 
		$F(\x,\cdot)$ is Lipschitz continuous, \\
		&&&${L}_F$-smooth, and $\sigma$-strongly convex. \\
		\cline{2-4}
		BDA & \multirow{2}{*}{{LL}} & $\{\y_K(\x)\}$ is uniformly bounded on $\mathcal{X}$, $f(\x,\y_K(\x))\xrightarrow[]{u} f^*(\x)$, & 
		$f(\x,\cdot)$ is ${L}_f$-smooth and convex,\\
		&    & $f(\x,\y)$ is level-bounded in $\y$ locally uniformly in $\x\in\X$. &  $\mathcal{S}(\x)$ is continuous.\\
		\cline{2-4}
		& \multicolumn{3}{c|}{Main results:  $\x_{K} \xrightarrow[]{s}\x^*$, $\inf_{\x \in \mathcal{X}}\varphi_K(\x) \to \inf_{\x \in \mathcal{X}}\varphi(\x)$.}   \\
		\hline
	\end{tabular}
\end{table*}

\subsection{Related Work}

Although early works on BLPs can date back to the nineteen seventies~\cite{dempe2018bilevel}, it was not until the last decade that a large amount of bi-level optimization models were established to formulate specific machine learning problems, include meta learning~\cite{franceschi2018bilevel,rajeswaran2019meta,zugner2019adversarial}, hyper-parameter optimization~\cite{franceschi2017forward,okuno2018hyperparameter,mackay2019self}, reinforcement learning~\cite{yang2019provably}, generative adversarial learning~\cite{pfau2016connecting}, and image processing~\cite{kunisch2013bilevel,de2017bilevel}, just to name a few. 

A large number of optimization techniques have been developed to solve BLPs in Eqs.~\eqref{eq:blp}-\eqref{eq:lower-level}. For example, the works in \cite{kunapuli2008classification,moore2010bilevel,okuno2018hyperparameter} aimed to reformulate the original BLPs in Eqs.~\eqref{eq:blp}-\eqref{eq:lower-level} as a 
single-level optimization problem based on the first-order optimality conditions. However, these approaches involve too many auxiliary variables, thus are not applicable for complex machine learning tasks. 

Recently, gradient-based First-Order Methods (FOMs) have also been investigated to solve BLPs. The key idea underlying these approaches is to hierarchically calculate gradients of UL and LL objectives. Specifically, the works in~\cite{maclaurin2015gradient,franceschi2017forward,franceschi2018bilevel} first calculate gradient representations of the LL objective and then perform either reverse or forward gradient computations (a.k.a., automatic differentiation, based on the LL gradients) for the UL subproblem. It is known that the reverse mode is related to the back-propagation through time while the forward mode actually appears to the standard chain rule~\cite{franceschi2017forward}. In fact, similar ideas have also been used in~\cite{jenni2018deep,zugner2019adversarial,rajeswaran2019meta}, but with different specific implementations. 
In~\cite{shaban2018truncated}, a truncated back-propagation scheme is adopted to improve the scale issue for the LL gradient updating. Furthermore, the works in~\cite{lorraine2018stochastic,mackay2019self} trained a so-called hyper-network to map LL gradients for their hierarchical optimization. 

Although widely used in different machine learning applications, theoretical properties of these bi-level FOMs are still not convincing (summarized in Table~\ref{tab:result}). Indeed, all of these methods require the LLS constraint in Eq.~\eqref{eq:lower-level} to simplify their optimization process and theoretical analysis. For example, to satisfy such restrictive condition, existing works~\cite{franceschi2018bilevel,shaban2018truncated} have to enforce a (local) strong convexity assumption to their LL subproblem, which is actually too tough to be satisfied in real-world complex tasks. 

\subsection{Our Contributions}

This work proposes Bi-level Descent Aggregation (BDA), a generic bi-level first-order algorithmic framework that is flexible and modularized to handle BLPs in Eqs.~\eqref{eq:blp}-\eqref{eq:lower-level}. Unlike the above existing bi-level FOMs, which require the LLS assumption on Eq.~\eqref{eq:lower-level} and separate the original model into two single-level subproblems, our BDA investigates BLPs from the optimistic viewpoint and develop a new hierarchical optimization scheme, which consists of a single-level optimization formulation for the UL variable $\x$ and a simple bi-level optimization formulation for the LL variable $\y$. Theoretically, we establish a general proof recipe to analyze the convergence behaviors of these bi-level FOMs. We prove that the convergence of BDA can be strictly guaranteed in the absence of the restrictive LLS condition. Furthermore, we demonstrate that the strong convexity of the LL objective (required in previous theoretical analysis~\cite{franceschi2018bilevel}) is actually non-essential for these existing LLS-based bi-level FOMs, such as \cite{domke2012generic,maclaurin2015gradient,franceschi2017forward,franceschi2018bilevel,shaban2018truncated}. Table~\ref{tab:result} compares the convergence results of BDA and the existing approaches. It can be seen that in LLS scenario, BDA and the existing methods share the same requirements for the UL subproblem. However, for the LL subproblem, assumptions required in previous approaches are essentially more restrictive than that in BDA. More importantly, when solving BLPs without LLS, no theoretical results can be obtained for these classical methods. Fortunately, BDA can still obtain the same convergence properties as that in LLS scenario. The contributions can be  summarized as:  
\begin{itemize}
	\item A counter-example (i.e.,  Example~\ref{CExam}) explicitly indicates the importance of the LLS condition for the existing bi-level FOMs. In particular, we investigate their iteration behaviors and reach the conclusion that using these approaches in the absence of the LLS condition may lead to incorrect solutions.
	\item 
	By formulating BLPs in Eqs.~\eqref{eq:blp}-\eqref{eq:lower-level} from the viewpoint of optimistic bi-level, BDA provides a generic bi-level algorithmic framework. Embedded with a specific gradient-aggregation-based iterative module, BDA is applicable to a variety of learning tasks.
	\item A general proof recipe is established to analyze the convergence behaviors of bi-level FOMs. We strictly prove the convergence of BDA without the LLS assumption. Furthermore, we revisit and improve the convergence properties of the existing bi-level FOMs in the LLS scenario.  
\end{itemize}

\section{First-Order Methods for BLPs}

\subsection{Solution Strategies with Lower-Level Singleton}\label{sec:rhg}
As aforementioned, a number of FOMs have been proposed to solve BLPs in Eqs.~\eqref{eq:blp}-\eqref{eq:lower-level}. However, these existing methods all rely on the uniqueness of $\S(\x)$ (i.e., LLS assumption). That is, rather than considering the original BLPs in Eqs.~\eqref{eq:blp}-\eqref{eq:lower-level}, they actually solve the following simplification:
\begin{equation}
\min\limits_{\mathbf{x}\in\mathcal{X}} F(\mathbf{x},\mathbf{y}), \ s.t. \ \mathbf{y}=\arg\min\limits_{\mathbf{y}} f(\mathbf{x},\mathbf{y}),\label{eq:blp-singleton}
\end{equation}
where the LL subproblem only has one single solution for a given $\x$. By
considering $\y$ as a function of $\x$, the idea behind these approaches is to take a gradient-based first-order scheme (e.g, gradient descent, stochastic gradient descent, or their variations) on the LL subproblem. Therefore, with the initialization point $\y_0$, a sequence $\{\mathbf{y}_{k}\}_{k=0}^K$ parameterized by $\x$ can be generated, e.g.,
\begin{equation}
\mathbf{y}_{k+1}=\mathbf{y}_{k}-s_l\nabla_{\mathbf{y}} f(\mathbf{x},\mathbf{y}_{k}), \ k=0,\cdots,K-1,\label{eq:gradient_f}
\end{equation}
where $s_l>0$ is an appropriately chosen step size. Then by considering $\y_K(\x)$ (i.e., the output of Eq.~\eqref{eq:gradient_f} for a given $\x$) as an approximated optimal solution to the LL subproblem, we can incorporate $\y_K(\x)$ into the UL objective and obtain a single-level approximation model, i.e., $
\min_{\mathbf{x}\in\mathcal{X}} F(\mathbf{x},\y_K(\mathbf{x}))
$.
Finally, by unrolling the iterative update scheme in Eq.~\eqref{eq:gradient_f}, we can calculate the derivative of $F(\mathbf{x},\y_K(\mathbf{x}))$ (w.r.t. $\x$) to optimize Eq.~\eqref{eq:blp-singleton} by automatic differentiation techniques~\cite{franceschi2017forward,baydin2017automatic}. 

\subsection{Fundamental Issues and Counter-Example}\label{sec:ce}

It can be observed that the LLS condition fairly matters for the validation of the existing bi-level FOMs. However, such   singleton assumption on the solution set of the LL subproblem is actually too restrictive to be satisfied, especially in real-world applications. 
In this subsection, we design an interesting counter-example (Example~\ref{CExam} below) to illustrate such invalidation of these conventional gradient-based bi-level schemes in the absence of the LLS condition. 

\begin{example}(Counter-Example)\label{CExam}
With $\x\in[-100,100]$ and $\y\in\mathbb{R}^2$, we consider the following BLPs problem:
\begin{equation}
\begin{array}{c}
\min\limits_{\x\in[-100,100]}\frac{1}{2}(\x-[\y]_2)^2+\frac{1}{2}([\y]_{1}-1)^2,\\
s.t. \ \y\in\arg\min\limits_{\y\in\mathbb{R}^2}\frac{1}{2}[\y]_1^2 - \x[\y]_1,
\end{array}\label{eq:ce}
\end{equation}
where $[\cdot]_i$ denotes the $i$-th element of the vector. By simple calculation, we know that the optimal solution of Eq.~\eqref{eq:ce} is $\x^*=1, \y^* = (1,1)$. However, if adopting the existing gradient-based scheme in Eq.~\eqref{eq:gradient_f} with initialization $\y_0=(0,0)$ and varying step size $s_l^k  \in (0,1)$, we have that $[\y_K]_1 = (1-\prod_{k=0}^{K-1}(1-s_l^k))\x$ and $[\y_K]_2=0$. Then the approximated problem of Eq.~\eqref{eq:ce} amounts to $$\min_{\mathbf{x}\in [-100,100]} F(\x,\y_K)=\frac{1}{2}\x^2+\frac{1}{2}( (1-\prod_{k=0}^{K-1}(1-s_l^k)) \x- 1 )^2.$$ By defining $\varphi_K(\x) = F(\x,\y_K)$, we have
$$
\x_K^{*}=\arg\min_{\x\in[-100,100]}\phi_K(\x)=\frac{(1- \prod_{k=0}^{K-1}(1-s_l^k) )}{1+(1- \prod_{k=0}^{K-1}(1-s_l^k) )^2}.
$$
It is easy to check that 
$$
0 \le \liminf_{K \rightarrow \infty} \prod_{k=0}^{K-1}(1-s_l^k) \le \limsup_{K \rightarrow \infty} \prod_{k=0}^{K-1}(1-s_l^k)\le 1, 
$$ 
then we have 
$\limsup_{K \rightarrow \infty} \frac{(1- \prod_{k=0}^{K-1}(1-s_l^k) )}{1+(1- \prod_{k=0}^{K-1}(1-s_l^k) )^2}  \le \frac{1}{2}.$ 
So $\x_K^*$  cannot converge to the true solution (i.e., $\x^* = 1$).
\end{example}

\begin{remark}
	The UL objective $F$ is indeed a function of both the UL variable $\x$ and the LL variable $\y$. Conventional bi-level FOMs only use the gradient information of the LL subproblem to update $\mathbf{y}$. Thanks to the LLS assumption, for fixed UL variable $\x$, the LL solution $\y$ can be uniquely determined. Thus the sequence $\{\mathbf{y}_{k}\}_{k=0}^K$ could converge to the true optimal solution, that minimizes both the LL and UL objectives. However, when LLS is absent, $\{\mathbf{y}_{k}\}_{k=0}^K$ may easily fail to converge to the true solution. Therefore, $\x_K^*$ may tend to be incorrect limiting points. Fortunately, we will demonstrate in Sections~\ref{sec:bda} and ~\ref{sec:exp} that the example in Eq.~\eqref{eq:ce} can be efficiently solved by our proposed BDA. 
\end{remark}

\section{Bi-level Descent Aggregation}\label{sec:bda}

In contrast to previous works, which only consider simplified BLPs with the LLS assumption in Eq.~\eqref{eq:blp-singleton}, we propose a new algorithmic framework, named Bi-level Descent Aggregation (BDA), to handle more generic BLPs in Eqs.~\eqref{eq:blp}-\eqref{eq:lower-level}.

\subsection{Optimistic Bi-level Algorithmic Framework}

In fact, the situation becomes intricate if the LL subproblem is not uniquely solvable for each $\x\in\X$. In this work, we consider BLPs from the optimistic bi-level viewpoint\footnote{For more theoretical details of optimistic BLPs, we refer to~\cite{dempe2018bilevel} and the references therein.}, thus for any given $\x$, we expect to choose the LL solution $\y\in\S(\x)$ that can also lead to the best objective function value for the UL objective (i.e., $F(\x,\cdot)$). Inspired by this observation, we can reformulate Eqs.~\eqref{eq:blp}-\eqref{eq:lower-level} as 
\begin{equation}
\min\limits_{\x\in\X}\varphi(\x), \
\mbox{with} \ \varphi(\x) = \inf\limits_{\y\in\S(\x)}F(\x,\y).\label{eq:oblp}
\end{equation}
Such reformulation reduces BLPs to a single-level problem $\min_{\mathbf{x}\in\mathcal{X}}\varphi(\x)$ w.r.t. the UL variable $\x$. While for any given $\x$, $\varphi$ actually turns out to be the value function of a simple bi-level problem w.r.t. the LL variable $\y$, i.e.,
\begin{equation}
\min_{\y}F(\x,\y), \ s.t. \ \y\in\S(\x), \ \mbox{(with fixed $\x$)}.\label{eq:simple-blp}
\end{equation}
Based on the above analysis, we actually could update $\y$ by
\begin{equation}
\mathbf{y}_{k+1}(\x)=\mathcal{T}_{k+1}(\x,\y_{k}(\x)), \ k=0,\cdots,K-1,\label{eq:update-t}
\end{equation}
where $\mathcal{T}_k(\x,\cdot)$ stands for a schematic iterative module originated from a certain simple bi-level solution strategy on Eq.~\eqref{eq:simple-blp} with a fixed UL variable $\x$.\footnote{It can be seen that $\T_k$ actually should integrate the information from both the UL and LL subproblems in Eqs.~\eqref{eq:blp}-\eqref{eq:lower-level}. We will discuss specific choices of $\T_k$ in the following subsection.} Let $\y_0=\T_0(\x)$ be the initialization of the above scheme and denote
$\y_K(\x)$ as the output of Eq.~\eqref{eq:update-t} after $K$ iterations (including the initial calculation $\T_0$). Then we can replace $\varphi(\x)$ by $F(\mathbf{x},\mathbf{y}_K(\x))$ and obtain the following approximation of  Eq.~\eqref{eq:oblp}:
\begin{equation}
\min\limits_{\mathbf{x}\in\X} \varphi_K(\x)= F(\mathbf{x},\mathbf{y}_K(\x)).\label{eq:upper_varphiK}
\end{equation}
With the above procedure, the BLPs in Eqs.~\eqref{eq:blp}-\eqref{eq:lower-level} is approximated by a sequence of standard single-level optimization problems. For each approximation subproblem in Eq.~\eqref{eq:upper_varphiK}, its descent direction is actually implicitly representable in terms of a certain simple bi-level solution strategy (i.e., Eq.~\eqref{eq:update-t}). Therefore, these existing automatic differentiation techniques all can be involved to achieve optimal solutions to Eq.~\eqref{eq:upper_varphiK}~\cite{franceschi2017forward,baydin2017automatic}. 

\subsection{Aggregated Iteration Modules}

Now optimizing BLPs in Eqs.~\eqref{eq:blp}-\eqref{eq:lower-level} reduces to the problem of designing proper $\T_k$ for Eq.~\eqref{eq:update-t}. As discussed above, $\T_k$ is related to both the UL and LL objectives. So it is natural to aggregate the descent information of these two subproblems to design $\T_k$. Specifically, for a given $\x$, the descent directions of the UL and LL objectives can be defined as
$$
\begin{array}{l}
\mathbf{d}^{{F}}_k(\mathbf{x})=s_u\nabla_{\mathbf{y}} F(\mathbf{x},\mathbf{y}_{k}),\\
\mathbf{d}^{{f}}_k(\mathbf{x})=s_l\nabla_{\mathbf{y}} f(\mathbf{x},\mathbf{y}_{k}),
\end{array}
$$
where $s_u,s_l>0$ are their step size parameters. Then we formulate $\T_k$ as the following first-order descent scheme:
\begin{equation}
\T_{k+1}\left(\x,\y_k(\x)\right)=\y_k -\left( \alpha_k\mathbf{d}^{{F}}_k(\mathbf{x})+(1-\alpha_k)\mathbf{d}^{{f}}_k(\mathbf{x})\right),\label{eq:lower}
\end{equation}
where $\alpha_k\in(0,1)$ denotes the aggregation parameter. 
\begin{remark}
	In this part, we just introduce a gradient aggregation based $\T_k$ to handle the simple bi-level subproblem in Eq.~\eqref{eq:simple-blp}. Indeed, our theoretical analysis in Section~\ref{sec:theory} will demonstrate that BDA algorithmic framework is flexible enough to incorporate a variety of numerical schemes. For example, in Supplemental Material, we also design an appropriate $\T_k$ to handle BLPs with nonsmooth LL objective while its convergence is still strictly guaranteed within our framework. 
\end{remark}

\section{Theoretical Investigations}\label{sec:theory}

In this section, we first derive a general convergence proof recipe together with two elementary properties to systematically investigate the convergence behaviors of bi-level FOMs (Section~\ref{subsec:recipe}). Following this roadmap, the convergence of our BDA can successfully {get rid of depending upon the LLS condition} (Section~\ref{subsec:convergence}). We also improve the convergence results for the existing bi-level FOMs in the LLS scenario (Section~\ref{subsec:revist}). To avoid triviality, hereafter we always assume that $\S(\x)$ is nonempty for any $\x \in \X$. Please notice that all the proofs of our theoretical results are stated in the Supplemental Material. 

\subsection{A General Proof Recipe}\label{subsec:recipe}

We first state some definitions, which are necessary for our analysis.\footnote{Please also refer to~\cite{rockafellar2009variational} for more details on these variational analysis properties.} 
A series of continuity properties for set-valued mappings and functions can be defined as follows.
\begin{defi}\label{defnitionconti}
	A set-valued mapping $\S(\x) : \mathbb{R}^n \rightrightarrows \mathbb{R}^m$ is Outer Semi-Continuous (OSC) at $\bar{\mathbf{x}}$ if $\limsup_{\mathbf{x} \rightarrow \bar{\mathbf{x}}}\S(\mathbf{x}) \subseteq \S(\bar{\mathbf{x}})$ and Inner Semi-Continuous (ISC) at $\bar{\mathbf{x}}$ if $\liminf_{\mathbf{x} \rightarrow \bar{\mathbf{x}}}\S(\mathbf{x}) \supseteq \S(\bar{\mathbf{x}}).$ $\S(\x)$ is called continuous at $\bar{\mathbf{x}}$ if it is both OSC and ISC at $\bar{\mathbf{x}}$, as expressed by $\lim_{\mathbf{x} \rightarrow \bar{\mathbf{x}}}\S(\mathbf{x}) = \S(\bar{\mathbf{x}})$. Here $\limsup_{\mathbf{x} \rightarrow \bar{\mathbf{x}}}\S(\mathbf{x})$ and $\liminf_{\mathbf{x} \rightarrow \bar{\mathbf{x}}}\S(\mathbf{x})$ are defined as 
\begin{equation*}
\begin{array}{l}
	\limsup\limits_{\mathbf{x} \rightarrow \bar{\mathbf{x}}}\S(\mathbf{x})= \left\{\mathbf{y} |~ \exists \mathbf{x}^{\nu} \rightarrow \bar{\mathbf{x}}, \exists \mathbf{y}^{\nu} \rightarrow \mathbf{y}, \mathbf{y}^{\nu} \in \S(\mathbf{x}^{\nu}) \right\},\\
	\liminf\limits_{\mathbf{x} \rightarrow \bar{\mathbf{x}}}\S(\mathbf{x})=\left\{\mathbf{y} |~ \forall \mathbf{x}^{\nu}\rightarrow \bar{\mathbf{x}}, \exists \mathbf{y}^{\nu} \rightarrow \mathbf{y}, \mathbf{y}^{\nu} \in \S(\mathbf{x}^{\nu}) \right\},
	\end{array}
\end{equation*}
where $\nu\in\mathbb{N}$.
\end{defi}
\begin{defi}
	A function $\varphi(\x):\mathbb{R}^n\to\mathbb{R}$ is Upper Semi-Continuous (USC) at $\bar{\x}$ if 
	$\limsup_{\x \rightarrow \bar{\x}}\varphi(\x)\leq \varphi(\bar{\x})$, or equivalently $\limsup_{\x \rightarrow \bar{\x}}\varphi(\x)=\varphi(\bar{\x}),$ 
	and USC on $\mathbb{R}^n$ if this holds for every $\bar{\x}\in\mathbb{R}^n$. Similarly, $\varphi(\x)$ is Lower Semi-Continuous (LSC) at $\bar{\x}$ if $\liminf_{\x \rightarrow \bar{\x}}\varphi(\x)\geq \varphi(\bar{\x})$, or equivalently $\liminf_{\x \rightarrow \bar{\x}}\varphi(\x)=\varphi(\bar{\x}),$ 
	and LSC on $\mathbb{R}^n$ if this holds for every $\bar{\x}\in\mathbb{R}^n$. Here $\limsup_{\x \rightarrow \bar{\x}} \varphi(\x)$ and $\liminf_{\x \rightarrow \bar{\x}} \varphi(\x)$ are respectively defined as
	\begin{equation*}
		\begin{array}{l}
		\limsup\limits_{\x \rightarrow \bar{\x}} \varphi(\x) =\lim\limits_{\delta\to 0}\left[\sup_{\x \in \mathbb{B}_{\delta}(\bar{\x})} \varphi(\x) \right]
		\end{array}
	\end{equation*}
	and 
	\begin{equation*}
	\begin{array}{l}
		\liminf\limits_{\x \rightarrow \bar{\x}} \varphi(\x) =\lim\limits_{\delta\to 0}\left[ \inf_{\x \in \mathbb{B}_{\delta}(\bar{\x})} \varphi(\x) \right]
		\end{array}
	\end{equation*}
	where $\mathbb{B}_{\delta}(\bar{\x})=\{\x\big|\|\x-\bar{\x}\|\leq\delta\}$.
\end{defi}
Then for a given function $f(\x,\y)$, we state the property that it is level-bounded in $\x$ locally uniform in $\y$ in the following definition.
\begin{defi}\label{def:uniform_boundness}
	Given a function $f(\x,\y) : \mathbb{R}^n \times \mathbb{R}^m \rightarrow \mathbb{R}$, if for a point $\bar{\x} \in \X\subseteq\mathbb{R}^n$ and $c \in \mathbb{R}$, there exist $\delta > 0$ along with a bounded set $\mathcal{B} \in \mathbb{R}^m$, such that
	$$
	\left\{\y \in \mathbb{R}^m ~|~ f(\x,\y) \le c \right\} \subseteq \mathcal{B}, \ \forall \x \in \mathbb{B}_{\delta}(\bar{\x}) \cap \X,
	$$
	then we call $f(\x,\y)$ is level-bounded in $\y$ locally uniformly in $\bar{\x} \in \X$. If the above property holds for each $\bar{\x} \in \X$, we further call $f(\x,\y)$ level-bounded in $\y$ locally uniformly in $\x\in\X$.
\end{defi}

Now we are ready to establish the general proof recipe, which describes the main steps to achieve the converge guarantees for our bi-level updating scheme (stated in Eqs.~\eqref{eq:update-t}-\eqref{eq:upper_varphiK}, with a schematic $\T_k$). Basically, our proof methodology consists of two main steps:
\begin{enumerate}
	\item[(1)] \textbf{LL solution set property:} For any $\epsilon>0$, there exists $k(\epsilon)>0$ such that whenever $K>k(\epsilon)$, 	
	$$
	\sup_{\mathbf{x}\in\mathcal{X}}\mathtt{dist}(\mathbf{y}_{K}(\mathbf{x}),\S(\x))\leq\epsilon.\label{eq:dist-y-s}
	$$	
	\item[(2)] \textbf{UL objective convergence property:} $\varphi(\x)$ is LSC 
	on $\mathcal{X}$, and for each $\x \in \mathcal{X}$,
	$$\lim\limits_{K \rightarrow \infty}\varphi_K(\x) \rightarrow \varphi(\x).\label{eq:dist-varphi}$$
\end{enumerate}

Equipped with the above two properties, we can establish our general convergence results in the following theorem for the schematic bi-level scheme in Eqs.~\eqref{eq:update-t}-\eqref{eq:upper_varphiK}. 
\begin{thm}\label{thm:general}
	Suppose both the above LL solution set and UL objective convergence properties hold. Then we have
	\begin{itemize}
		\item[(1)]  suppose $\x_K\in\arg\min_{\x\in\X}\varphi_{K}(\x)$, then any limit point $\bar{\x}$ of the sequence $\{\x_K\}$ satisfies that $\bar{\x}\in\arg\min_{\x\in\X}\varphi(\x)$. 
		\item[(2)] $\inf_{\x \in \X}\varphi_K(\x) \rightarrow \inf_{\x \in \X} \varphi(\x)$ as $K \rightarrow \infty$. 
	\end{itemize} 
\end{thm}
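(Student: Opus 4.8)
The plan is to read Theorem~\ref{thm:general} as a variational (epi-convergence) statement and to prove it through two one-sided inequalities: a $\limsup$ bound that pushes the approximate optimal values below $\inf_{\x}\varphi$, and a $\liminf$ bound that forces any limit point of the minimizers to be genuinely optimal. Since $\x_K\in\arg\min_{\x\in\X}\varphi_K(\x)$, note first that $\inf_{\x\in\X}\varphi_K(\x)=\varphi_K(\x_K)$, so both assertions reduce to controlling the scalar sequence $\{\varphi_K(\x_K)\}$. For the upper direction I would use only the pointwise UL objective convergence: for an arbitrary fixed $\x\in\X$, minimality gives $\varphi_K(\x_K)\le\varphi_K(\x)$, and letting $K\to\infty$ yields $\limsup_K\varphi_K(\x_K)\le\lim_K\varphi_K(\x)=\varphi(\x)$; taking the infimum over $\x\in\X$ delivers $\limsup_K\varphi_K(\x_K)\le\inf_{\x\in\X}\varphi(\x)$. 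This step is routine and does not invoke the uniform distance estimate.

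For the lower direction, which I expect to be the crux, let $\bar{\x}$ be a limit point along a subsequence $\x_{K_j}\to\bar{\x}$, which exists by compactness of $\X$. The LL solution set property gives $\mathtt{dist}(\y_{K_j}(\x_{K_j}),\S(\x_{K_j}))\to 0$, so I would pick $\tilde{\y}_j\in\S(\x_{K_j})$ with $\|\y_{K_j}(\x_{K_j})-\tilde{\y}_j\|\to 0$. Assuming the iterates remain in a bounded, hence relatively compact, region (guaranteed under the uniform boundedness / level-boundedness hypotheses of Table~\ref{tab:result}), I pass to a further subsequence so that $\y_{K_j}(\x_{K_j})\to\bar{\y}$; then $\tilde{\y}_j\to\bar{\y}$ as well, and Outer Semi-Continuity of $\S$ places $\bar{\y}\in\limsup_{\x\to\bar{\x}}\S(\x)\subseteq\S(\bar{\x})$. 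Joint continuity of $F$ now gives $\lim_j\varphi_{K_j}(\x_{K_j})=F(\bar{\x},\bar{\y})\ge\inf_{\y\in\S(\bar{\x})}F(\bar{\x},\y)=\varphi(\bar{\x})$.

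Chaining the two bounds, $\varphi(\bar{\x})\le F(\bar{\x},\bar{\y})=\lim_j\varphi_{K_j}(\x_{K_j})\le\limsup_K\varphi_K(\x_K)\le\inf_{\x\in\X}\varphi(\x)$; since $\bar{\x}\in\X$ forces $\varphi(\bar{\x})\ge\inf_{\x}\varphi$, all inequalities collapse to equalities, proving $\bar{\x}\in\arg\min_{\x}\varphi$, which is part~(1). For part~(2) I would apply the identical extraction to a subsequence realizing $\liminf_K\varphi_K(\x_K)$, obtaining $\liminf_K\varphi_K(\x_K)\ge\varphi(\bar{\x})\ge\inf_{\x}\varphi$; combined with the global $\limsup$ bound this forces $\inf_{\x}\varphi_K(\x)=\varphi_K(\x_K)\to\inf_{\x}\varphi(\x)$. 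The main obstacle is the $\liminf$ step: one must rule out the lower-level iterates escaping to infinity and must convert the uniform distance estimate into membership in $\S(\bar{\x})$, which is precisely where Outer Semi-Continuity of $\S$ and boundedness of the iterates enter. The Lower Semi-Continuity of $\varphi$ supplied by the UL objective convergence property offers an alternative route to the same inequality, via $F(\x_{K_j},\tilde{\y}_j)\ge\varphi(\x_{K_j})$ and $\liminf_j\varphi(\x_{K_j})\ge\varphi(\bar{\x})$, should one prefer to avoid extracting the limit $\bar{\y}$ explicitly.
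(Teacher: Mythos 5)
Your argument is sound where it applies, but it is a genuinely different route from the paper's, and it quietly imports two hypotheses that Theorem~\ref{thm:general} as stated does not grant. Your $\liminf$ step is the classical set-convergence argument: extract $\x_{K_j}\to\bar{\x}$, use boundedness of the lower-level iterates to extract $\y_{K_j}(\x_{K_j})\to\bar{\y}$, convert the distance estimate into membership $\bar{\y}\in\S(\bar{\x})$ via outer semicontinuity, and conclude $\lim_j\varphi_{K_j}(\x_{K_j})=F(\bar{\x},\bar{\y})\ge\varphi(\bar{\x})$ by joint continuity of $F$. Neither OSC of $\S$ nor boundedness of $\{\y_K(\x_K)\}$ is among the theorem's hypotheses (the two properties, plus the blanket assumptions on $F$ and $\X$); they do hold in both places the recipe is later invoked (Theorem~\ref{thm:conver_BDA} assumes $\S$ continuous and gets bounded iterates from Proposition~\ref{prop:f_conver} and Lemma~\ref{lemma:tildeS_bounded}; Theorem~\ref{thm:revisting_conver} assumes uniform boundedness and derives continuity of $\S$ from Lemma~\ref{prop:revisting_continuous_S}), so your proof covers the paper's use cases, but it does not establish the recipe at its stated level of generality. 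The paper's proof instead converts the uniform distance estimate directly into the uniform value estimate $\varphi(\x)\le\varphi_K(\x)+\epsilon$ for all $\x\in\X$ and $K>k(\epsilon)$, using only the $L_0$-Lipschitz continuity of $F(\x,\cdot)$ (the one standing ingredient from Table~\ref{tab:result}), and then closes both parts with the LSC of $\varphi$ and the pointwise convergence $\varphi_K(\x)\to\varphi(\x)$ --- never touching the semicontinuity of $\S$ and never needing compactness in $\y$. That economy is exactly what lets the same theorem serve as the common engine for both the LLS and non-LLS settings. Note also that your closing ``alternative route'' via LSC of $\varphi$ is essentially the paper's argument, except that comparing $F(\x_{K_j},\tilde{\y}_j)$ with $\varphi_{K_j}(\x_{K_j})$ still requires a modulus --- the constant $L_0$ --- rather than mere joint continuity, unless you re-import boundedness of the iterates.

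Two smaller points. First, part~(2) of the theorem does not presuppose that $\varphi_K$ attains its infimum, while your argument identifies $\inf_{\x\in\X}\varphi_K(\x)$ with $\varphi_K(\x_K)$; the paper's proof of part~(2) runs a contradiction argument with $\delta/2$-minimizers precisely to avoid this, and your proof is repaired the same way (take $\x_K$ with $\varphi_K(\x_K)\le\inf_{\x\in\X}\varphi_K(\x)+1/K$; your $\liminf$ step never used minimality of $\x_K$). Second, your upper bound $\limsup_K\varphi_K(\x_K)\le\inf_{\x\in\X}\varphi(\x)$ is indeed routine and matches the paper's use of minimality plus pointwise convergence, so the real divergence between the two proofs is confined to the lower bound.
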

\begin{remark}
Indeed, if $\x_K$ is a local minimum of $\varphi_{K}(\x)$ with uniform neighborhood modulus $\delta > 0$, we can still have that any limit point $\bar{\x}$ of the sequence $\{\x_K\}$ is a local minimum of $\varphi(\x)$. Please see our Supplemental Material for more details on this issue.
\end{remark}

\subsection{Convergence Properties of BDA}\label{subsec:convergence}

The objective here is to demonstrate that our BDA meets these two elementary properties required by Theorem~\ref{thm:general}. Before proving the convergence results for BDA, we first take the following as our blanket assumption. 
\begin{assum}\label{assum:F}
	For any $\x \in \X$, $F(\x,\cdot) : \mathbb{R}^m \rightarrow \mathbb{R}$ is $L_0$-Lipschitz continuous, $L_F$-smooth, and $\sigma$-strongly convex, $f(\x,\cdot) : \mathbb{R}^m \rightarrow \mathbb{R}$ is $L_f$-smooth and convex.
\end{assum}

Please notice that Assumption~\ref{assum:F} is quite standard for BLPs in machine learning areas~\cite{franceschi2018bilevel,shaban2018truncated}. As can be seen, it is satisfied for all the applications considered in this work. We first present some necessary variational analysis preliminaries. Denoting 
\begin{equation*}
\tilde{\S}(\x) = \arg\min\limits_{\y \in \S(\x) } F(\x,\y),
\end{equation*}
under Assumption~\ref{assum:F}, we can quickly obtain that $\tilde{\S}(\x)$ is nonempty and unique for any $\x \in \X$. Moreover, we can derive the boundedness of $\tilde{\S}(\x)$ in the following lemma.
\begin{lemma}\label{lemma:tildeS_bounded}
	Suppose $F(\x,\y)$ is level-bounded in $\y$ locally uniformly in $\x\in\X$. If $\S(\x)$ is ISC on $\X$, then $\cup_{\x\in \X} \tilde{\S}(\x)$ is bounded. 
\end{lemma}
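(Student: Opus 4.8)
The plan is to argue by contradiction, exploiting the compactness of $\X$ together with the inner semi-continuity (ISC) of $\S$, the joint continuity of $F$, and the level-boundedness hypothesis. First I would record the structural observation that, since $F(\x,\cdot)$ is $\sigma$-strongly convex, each $\tilde{\S}(\x)=\arg\min_{\y\in\S(\x)}F(\x,\y)$ is a singleton (as already noted after Assumption~\ref{assum:F}); write $\tilde{\y}(\x)$ for its unique element. Thus $\cup_{\x\in\X}\tilde{\S}(\x)=\{\tilde{\y}(\x):\x\in\X\}$, and proving boundedness amounts to ruling out an escaping sequence of optimal LL selections.

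Concretely, suppose for contradiction that the union is unbounded. Then there exist $\x^{\nu}\in\X$ and $\tilde{\y}^{\nu}=\tilde{\y}(\x^{\nu})\in\tilde{\S}(\x^{\nu})$ with $\|\tilde{\y}^{\nu}\|\to\infty$. Since $\X$ is compact, passing to a subsequence I may assume $\x^{\nu}\to\bar{\x}\in\X$. The central step is to manufacture a uniformly good \emph{comparison} point in each $\S(\x^{\nu})$: fixing any $\hat{\y}\in\S(\bar{\x})$ (nonempty by the blanket assumption), the ISC of $\S$ at $\bar{\x}$ applied to the sequence $\x^{\nu}\to\bar{\x}$ (Definition~\ref{defnitionconti}, via $\liminf_{\x\to\bar{\x}}\S(\x)\supseteq\S(\bar{\x})$) yields $\hat{\y}^{\nu}\in\S(\x^{\nu})$ with $\hat{\y}^{\nu}\to\hat{\y}$. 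Optimality of $\tilde{\y}^{\nu}$ over $\S(\x^{\nu})$ then gives
\begin{equation*}
F(\x^{\nu},\tilde{\y}^{\nu})\le F(\x^{\nu},\hat{\y}^{\nu}).
\end{equation*}
By joint continuity of $F$, $F(\x^{\nu},\hat{\y}^{\nu})\to F(\bar{\x},\hat{\y})$, so the right-hand side is bounded; fixing $c:=F(\bar{\x},\hat{\y})+1$ gives $F(\x^{\nu},\tilde{\y}^{\nu})\le c$ for all large $\nu$.

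To close the argument, I would invoke the level-boundedness of $F$ in $\y$ locally uniformly in $\x$ at the point $\bar{\x}$ with this value $c$ (Definition~\ref{def:uniform_boundness}): there are $\delta>0$ and a bounded set $\mathcal{B}$ with $\{\y:F(\x,\y)\le c\}\subseteq\mathcal{B}$ for every $\x\in\mathbb{B}_{\delta}(\bar{\x})\cap\X$. Since $\x^{\nu}\to\bar{\x}$, for large $\nu$ we have $\x^{\nu}\in\mathbb{B}_{\delta}(\bar{\x})\cap\X$, whence $\tilde{\y}^{\nu}\in\{\y:F(\x^{\nu},\y)\le c\}\subseteq\mathcal{B}$, contradicting $\|\tilde{\y}^{\nu}\|\to\infty$. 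I expect the main obstacle to be precisely the comparison-point construction: the whole argument hinges on converting the optimality inequality into a \emph{single} uniform sublevel bound valid on a neighborhood of $\bar{\x}$, and ISC is exactly what is needed — it supplies, for free, a feasible sequence $\hat{\y}^{\nu}\in\S(\x^{\nu})$ converging to the fixed feasible point $\hat{\y}$, so that continuity of $F$ produces the constant $c$. Note that only ISC (not outer semi-continuity) of $\S$ is used, since we merely need one feasible trajectory to converge, not control over all limits of $\S(\x^{\nu})$.
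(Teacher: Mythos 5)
Your proposal is correct and follows essentially the same route as the paper's proof: a contradiction argument that extracts a convergent subsequence $\x^{\nu}\to\bar{\x}$ by compactness of $\X$, uses ISC of $\S$ together with joint continuity of $F$ to produce feasible comparison points near $\S(\bar{\x})$ that bound $F(\x^{\nu},\tilde{\y}^{\nu})$ from above, and then invokes local uniform level-boundedness to contradict $\|\tilde{\y}^{\nu}\|\to\infty$. The only difference is presentational — you phrase ISC and the final contradiction sequentially (bounded values force $\tilde{\y}^{\nu}$ into a bounded sublevel set), while the paper works with neighborhoods and frames the contradiction as $\varphi(\x^{\nu})\to\infty$ versus $\varphi$ being locally bounded above — which is the contrapositive use of the same hypothesis.
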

Thanks to the continuity of $f(\x,\y)$, we further have the following result.
\begin{lemma}\label{lemma:f_USC}
Denote $f^{\ast}(\x)=\min_{\y}f(\x,\y)$. If $f(\x,\y)$ is continuous on $\X\times\mathbb{R}^m$, then $f^{\ast}(\x)$ is USC on $\X$. 
\end{lemma}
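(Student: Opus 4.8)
The plan is to recognize $f^{\ast}(\x)=\inf_{\y}f(\x,\y)$ as a pointwise infimum over $\y$ of the family of functions $\x\mapsto f(\x,\y)$, each of which is continuous (hence USC) in $\x$ by hypothesis, and to use the elementary fact that an infimum of USC functions is again USC. Rather than invoking this fact as a black box, I would verify the defining inequality for upper semi-continuity directly at an arbitrary reference point $\bar{\x}\in\X$, namely $\limsup_{\x\to\bar{\x}}f^{\ast}(\x)\le f^{\ast}(\bar{\x})$.

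First I would fix $\bar{\x}\in\X$ together with an arbitrary sequence $\x^{\nu}\in\X$ satisfying $\x^{\nu}\to\bar{\x}$. Given any $\epsilon>0$, I would select a point $\hat{\y}$ that is $\epsilon$-optimal for the LL problem at $\bar{\x}$, that is, $f(\bar{\x},\hat{\y})\le f^{\ast}(\bar{\x})+\epsilon$ (such a point exists by definition of the infimum, and when the minimum is attained one may simply take a minimizer). Using $\hat{\y}$ as a fixed feasible test point for every index, the inequality $f^{\ast}(\x^{\nu})\le f(\x^{\nu},\hat{\y})$ holds for all $\nu$.

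The crucial step is then a single application of the joint continuity of $f$: since $f$ is continuous on $\X\times\mathbb{R}^m$ and $\hat{\y}$ is held fixed, $f(\x^{\nu},\hat{\y})\to f(\bar{\x},\hat{\y})$. Combining this with the previous inequality gives $\limsup_{\nu\to\infty}f^{\ast}(\x^{\nu})\le f(\bar{\x},\hat{\y})\le f^{\ast}(\bar{\x})+\epsilon$. Letting $\epsilon\downarrow 0$ yields $\limsup_{\nu\to\infty}f^{\ast}(\x^{\nu})\le f^{\ast}(\bar{\x})$, and since the sequence was arbitrary, $\limsup_{\x\to\bar{\x}}f^{\ast}(\x)\le f^{\ast}(\bar{\x})$; as $\bar{\x}\in\X$ was arbitrary, $f^{\ast}$ is USC on $\X$.

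This argument is short and essentially free of obstacles: the only point requiring care is that the test point $\hat{\y}$ must be chosen once at $\bar{\x}$ and then frozen, so that continuity is invoked along the single trajectory $\nu\mapsto(\x^{\nu},\hat{\y})$ rather than along the (possibly wildly varying) minimizers of the perturbed problems---which is exactly why only upper, and not lower, semi-continuity is available for a value function of this type. I would also emphasize that neither convexity nor level-boundedness of $f$ is needed here, so that this lemma rests on continuity alone and is properly separated from Lemma~\ref{lemma:tildeS_bounded}.
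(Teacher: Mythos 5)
Your proof is correct and follows essentially the same route as the paper's: both fix an $\epsilon$-optimal point $\hat{\y}$ at $\bar{\x}$, use it as a frozen test point to bound $f^{\ast}(\x^{\nu})\le f(\x^{\nu},\hat{\y})$, invoke joint continuity of $f$ along $(\x^{\nu},\hat{\y})$, and send $\epsilon\to 0$. No gap to report.
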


Now we are ready to establish our fundamental LL solution set and UL objective convergence properties required in Theorem~\ref{thm:general}. In the following proposition, we first derive the convergence of $\{\y_K(\x)\}$ in the light of the general fact stated in~\cite{sabach2017first}.
\begin{prop}\label{prop:f_conver}
	Suppose Assumption \ref{assum:F} is satisfied and let $\{\y_K\}$ be defined as in Eq.~\eqref{eq:lower}, $s_l \in (0,1/L_f]$, $s_u \in (0,2/(L_F+\sigma)]$, $$\alpha_k = \min \left\{2\gamma/k(1-\beta),1-\varepsilon \right\},$$ with $k \ge 1$, $\varepsilon>0$,  $\gamma \in (0,1]$, and 
	$$\beta = \sqrt{1-2s_u\sigma L_F/(\sigma + L_F)}.$$ 
	Denote 
	$\tilde{\y}_K(\x) = \y_K(\x)-s_l\nabla_{\y}f(\x,\y_K(\x)),$ 
	and 
	$$C_{\y^*(\x)} = \max \left\{ \|\y_0 - \y^*(\x)\|, \frac{s_u}{1-\beta}\|\nabla_\y F(\x,\y^*(\x))\|  \right\},$$ 
	with $\y^*(\x) \in \tilde{\S}(\x)$ and $\x \in \X$. Then we have
	\begin{align*}
	\|\y_K(\x) - \y^*(\x)\| &\le C_{\y^*(\x)},\\
	\|\y_K(\x) - \tilde{\y}_K(\x)\| &\le \frac{2C_{\y^*(\x)}(J+2)}{K(1-\beta)},\\
	f(\x,\tilde{\y}_K(\x)) - f^*(\x) &\le \frac{2C_{\y^*(\x)}^2(J+2)}{K(1-\beta)s_l},
	\end{align*}
	where $J = \lfloor 2/(1-\beta) \rfloor$. Furthermore, for any $\x \in \X$, $\{\y_K(\x)\}$ converges to $\tilde{\S}(\x)$ as $K \rightarrow \infty$. 
\end{prop}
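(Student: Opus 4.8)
The plan is to read the inner BDA recursion~\eqref{eq:lower} as a sequential averaging between two first-order operators, and then to adapt and sharpen the analysis underlying~\cite{sabach2017first} into the explicit rates stated. Setting $T_F(\y)=\y-s_u\nabla_{\y}F(\x,\y)$ and $T_f(\y)=\y-s_l\nabla_{\y}f(\x,\y)$, the update~\eqref{eq:lower} is exactly $\y_{k+1}=\alpha_k T_F(\y_k)+(1-\alpha_k)T_f(\y_k)$. The first step is to record the two structural facts that drive everything. Under Assumption~\ref{assum:F} with $s_u\in(0,2/(L_F+\sigma)]$, expanding $\|T_F(\u)-T_F(\v)\|^2$ and invoking the cocoercivity estimate for a $\sigma$-strongly convex and $L_F$-smooth function shows that $T_F$ is a $\beta$-contraction with precisely $\beta=\sqrt{1-2s_u\sigma L_F/(\sigma+L_F)}$, the cap $s_u\le 2/(L_F+\sigma)$ being exactly what renders the residual $\|\nabla_{\y}F(\u)-\nabla_{\y}F(\v)\|^2$ term nonpositive. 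Dually, with $s_l\in(0,1/L_f]$ the convexity and $L_f$-smoothness of $f(\x,\cdot)$ make $T_f$ firmly nonexpansive with $\mathrm{Fix}(T_f)=\S(\x)$.

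I would then prove the three estimates in order. Fix $\y^*(\x)\in\tilde{\S}(\x)$, so that $T_f(\y^*(\x))=\y^*(\x)$. Applying the triangle inequality to the averaged step together with the two operator properties gives the scalar recursion $\|\y_{k+1}-\y^*(\x)\|\le(1-\alpha_k(1-\beta))\|\y_k-\y^*(\x)\|+\alpha_k s_u\|\nabla_{\y}F(\x,\y^*(\x))\|$; the first displayed bound then follows by induction, the point being that $C_{\y^*(\x)}$ is designed precisely so that the constant term $\tfrac{s_u}{1-\beta}\|\nabla_{\y}F(\x,\y^*(\x))\|$ is a fixed point of this scalar map, whence the induction closes in the worst case.

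For the asymptotic-regularity estimate, observe that $\y_K-\tilde{\y}_K=\y_K-T_f(\y_K)=(I-T_f)(\y_K)$ measures how close $\y_K$ is to $\mathrm{Fix}(T_f)=\S(\x)$. The device is to control the successive displacements $u_k=\y_{k+1}-\y_k$: subtracting two consecutive updates, regrouping, and using the $\beta$-contraction of $T_F$, the nonexpansiveness of $T_f$, and the boundedness of $\|T_F(\y_{k-1})-T_f(\y_{k-1})\|$ (a consequence of the first bound and of Lipschitzness) yields $\|u_k\|\le(1-\alpha_k(1-\beta))\|u_{k-1}\|+|\alpha_k-\alpha_{k-1}|\,D$. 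Substituting $\alpha_k=2\gamma/(k(1-\beta))$ reduces this to a scalar recursion of the type $a_k\le(1-\tfrac{2\gamma}{k})a_{k-1}+O(1/k^2)$, which I would resolve by the ansatz $a_k\le\mathrm{const}/k$; the integer $J=\lfloor 2/(1-\beta)\rfloor$ enters as the index past which the $1/k$ branch of $\alpha_k$ overtakes the cap $1-\varepsilon$, so the finitely many capped iterates contribute only the additive $(J+2)$ factor. Then $\|\y_K-\tilde{\y}_K\|$ is recovered from $\|u_K\|$ and $\alpha_K$ via the identity $(1-\alpha_k)(\y_k-T_f(\y_k))=\alpha_k(T_F(\y_k)-\y_k)-u_k$ together with $\|T_F(\y_k)-\y_k\|=s_u\|\nabla_{\y}F(\x,\y_k)\|\le s_u L_0$. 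The third estimate is then immediate: the prox-gradient descent inequality for an $L_f$-smooth convex function, applied with comparison point $\y^*(\x)$, gives $f(\tilde{\y}_K)-f^*(\x)\le\tfrac{1}{2s_l}(\|\y_K-\y^*(\x)\|^2-\|\tilde{\y}_K-\y^*(\x)\|^2)\le\tfrac{C_{\y^*(\x)}}{s_l}\|\y_K-\tilde{\y}_K\|$, into which the second estimate is substituted.

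For the concluding convergence, boundedness lets me extract a subsequence $\y_{K_j}\to\bar{\y}$; since $\|\y_K-\tilde{\y}_K\|\to 0$ and $I-T_f$ is continuous, $\bar{\y}=T_f(\bar{\y})$, i.e. $\bar{\y}\in\S(\x)$, while the $F$-selecting effect of the vanishing averaging weights (the fact borrowed from~\cite{sabach2017first}) identifies $\bar{\y}$ as the minimizer of $F(\x,\cdot)$ over $\S(\x)$; as $\tilde{\S}(\x)$ is a singleton, every subsequential limit coincides and the whole sequence converges to $\tilde{\S}(\x)$. The main obstacle is the second (asymptotic-regularity) estimate: unlike the boundedness bound it requires the displacement recursion rather than a distance-to-solution recursion, and coaxing the sharp $O(1/K)$ order out of that recursion, as opposed to the slower $O(1/K^{2\gamma})$ order that a careless solution would produce, hinges on the interplay of the step-size conditions with the averaging schedule and on correctly accounting for the initial capped phase, which is exactly where the constant $J$ and the factor $(J+2)$ originate.
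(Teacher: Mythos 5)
First, a point of reference: the paper never actually writes out a proof of Proposition~\ref{prop:f_conver} --- neither in the main text nor in the supplement. It asserts the result ``in the light of the general fact stated in''~\cite{sabach2017first}, and the supplement's Section B jumps straight to Proposition~\ref{prop:yK_conver}. Your reconstruction --- rewriting Eq.~\eqref{eq:lower} as the sequential average $\y_{k+1}=\alpha_k T_F(\y_k)+(1-\alpha_k)T_f(\y_k)$ of a $\beta$-contraction $T_F$ (valid precisely for $s_u\le 2/(L_F+\sigma)$) and a firmly nonexpansive $T_f$ with $\mathrm{Fix}(T_f)=\S(\x)$ (valid for $s_l\le 1/L_f$) --- is exactly the BiG-SAM analysis that the citation points to, so in approach you coincide with the paper's intended (but unwritten) proof, and in level of detail you exceed it. Your first estimate (the induction closes because $C_{\y^*(\x)}$ dominates the fixed point of the scalar recursion), your third estimate via the fundamental prox-gradient inequality with comparison point $\y^*(\x)$, and your endgame (residual $\to 0$ plus the borrowed viscosity-type identification of the limit, plus uniqueness of $\tilde{\S}(\x)$) are all sound.

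There is, however, one genuine gap, sitting exactly at the step you flag as the main obstacle: resolving the displacement recursion $a_k\le(1-\alpha_k(1-\beta))a_{k-1}+|\alpha_k-\alpha_{k-1}|D$ to the claimed $O(1/K)$ order. With $\alpha_k=2\gamma/(k(1-\beta))$ this becomes $a_k\le(1-2\gamma/k)a_{k-1}+c/(k(k-1))$, and your ansatz $a_k\le A/k$ closes only if $(k-2\gamma)A+c\le A(k-1)$, i.e. $c\le A(2\gamma-1)$ --- impossible when $\gamma\le 1/2$. Unrolling the recursion in that regime gives only $O(K^{-2\gamma})$, which is strictly slower than the bound $2C_{\y^*(\x)}(J+2)/(K(1-\beta))$ asserted for every $\gamma\in(0,1]$. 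This is not pedantry: in~\cite{sabach2017first} the schedule is effectively $\gamma=1$ (their $\alpha_k=\min\left\{2/(k(1-\beta)),1\right\}$), so $2\gamma-1>0$ and the induction does close; the extension to arbitrary $\gamma\in(0,1]$ with cap $1-\varepsilon$ is this paper's own modification, and neither your sketch nor the paper supplies the additional argument (or the restriction on $\gamma$) needed to cover the full stated range. The ``interplay of the step-size conditions with the averaging schedule'' and the $(J+2)$ bookkeeping for the capped phase, which you invoke, control the finitely many early iterates but do not repair the $K^{-2\gamma}$ decay of the homogeneous part when $\gamma\le 1/2$; to make the proposal airtight you must either assume $2\gamma>1$ or produce a genuinely different resolution of the recursion.
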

Proposition~\ref{prop:f_conver}, together with Lemma~\ref{lemma:tildeS_bounded}, shows that $\{\tilde{\y}_K(\x)\}$ is a bounded sequence and $\{f(\x,\tilde{\y}_K(\x))\}$ uniformly converges. We next prove the uniform convergence of $\{\tilde{\y}_K(\x)\}$ towards the solution set $\mathcal{S}(\mathbf{x})$ through the uniform convergence of $\{f(\x,\tilde{\y}_K(\x))\}$.
\begin{prop}\label{prop:yK_conver}
	Let $\mathcal{Y} \subseteq \mathbb{R}^m$ be a bounded set and $\epsilon > 0$. If $\S(\x)$ is ISC on $\X$, then there exists $\delta > 0$ such that for any $\y \in \mathcal{Y}$, $$\sup_{\x \in \X} \mathrm{dist}(\y,\S(\x)) \le \epsilon,$$ in case $\sup_{\x \in \X}\left\{ f(\x,\y) - f^{\ast}(\x) \right\} \le \delta$ is satisfied.
\end{prop}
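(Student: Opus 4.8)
The plan is to argue by contradiction, exploiting the compactness of $\X$, the boundedness of $\mathcal{Y}$, and the two semicontinuity properties that are in play here: the upper semicontinuity of $f^{\ast}$ established in Lemma~\ref{lemma:f_USC} and the inner semicontinuity of $\S$ assumed in the hypothesis. Suppose the conclusion fails for every candidate $\delta$. Then, choosing $\delta = 1/\nu$ for $\nu \in \mathbb{N}$, I obtain a sequence $\y^{\nu} \in \mathcal{Y}$ satisfying $\sup_{\x \in \X}\{f(\x,\y^{\nu}) - f^{\ast}(\x)\} \le 1/\nu$ yet $\sup_{\x \in \X} \mathrm{dist}(\y^{\nu}, \S(\x)) > \epsilon$. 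By definition of the supremum, for each $\nu$ I can select $\x^{\nu} \in \X$ with $\mathrm{dist}(\y^{\nu}, \S(\x^{\nu})) > \epsilon$, while at the same time $f(\x^{\nu}, \y^{\nu}) - f^{\ast}(\x^{\nu}) \le 1/\nu$. Since $\mathcal{Y}$ is bounded and $\X$ is compact, I pass to a common subsequence (not relabeled) along which $\y^{\nu} \to \bar{\y}$ and $\x^{\nu} \to \bar{\x} \in \X$.

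The next step is to identify the limit $\bar{\y}$ as an element of $\S(\bar{\x})$. Starting from $f(\x^{\nu}, \y^{\nu}) \le f^{\ast}(\x^{\nu}) + 1/\nu$ and passing to the limit, the joint continuity of $f$ gives $f(\x^{\nu}, \y^{\nu}) \to f(\bar{\x}, \bar{\y})$ on the left, while the upper semicontinuity of $f^{\ast}$ from Lemma~\ref{lemma:f_USC} gives $\limsup_{\nu} f^{\ast}(\x^{\nu}) \le f^{\ast}(\bar{\x})$ on the right, yielding $f(\bar{\x}, \bar{\y}) \le f^{\ast}(\bar{\x})$. Because $f^{\ast}(\bar{\x}) = \min_{\y} f(\bar{\x}, \y) \le f(\bar{\x}, \bar{\y})$ by definition, equality must hold, so $\bar{\y}$ minimizes $f(\bar{\x}, \cdot)$, i.e. $\bar{\y} \in \S(\bar{\x})$.

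Finally, I invoke inner semicontinuity to contradict the separation $\mathrm{dist}(\y^{\nu}, \S(\x^{\nu})) > \epsilon$. Since $\S$ is ISC at $\bar{\x}$ and $\bar{\y} \in \S(\bar{\x})$, the definition of $\liminf_{\x \to \bar{\x}} \S(\x)$ guarantees, for the sequence $\x^{\nu} \to \bar{\x}$, the existence of points $\mathbf{z}^{\nu} \in \S(\x^{\nu})$ with $\mathbf{z}^{\nu} \to \bar{\y}$. Then $\mathrm{dist}(\y^{\nu}, \S(\x^{\nu})) \le \|\y^{\nu} - \mathbf{z}^{\nu}\| \le \|\y^{\nu} - \bar{\y}\| + \|\bar{\y} - \mathbf{z}^{\nu}\| \to 0$, which contradicts $\mathrm{dist}(\y^{\nu}, \S(\x^{\nu})) > \epsilon$ and completes the argument.

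I expect the main obstacle to be conceptual rather than computational: recognizing that the two semicontinuity hypotheses play complementary and non-interchangeable roles. Upper semicontinuity of $f^{\ast}$ is exactly what converts approximate optimality in objective value into membership of the limit point in the solution set, whereas inner semicontinuity of $\S$ is exactly what prevents the solution sets from collapsing near $\bar{\x}$, thereby forcing approximately optimal points to remain close to $\S(\x^{\nu})$. The delicate point to execute correctly is the direction of each inequality when passing to the limit — the USC bound must be applied to $f^{\ast}(\x^{\nu})$ on the appropriate side — together with confirming that each $\S(\x^{\nu})$ is nonempty so that the distances and the ISC selection are meaningful, which is covered by the blanket nonemptiness assumption stated earlier.
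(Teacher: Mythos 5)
Your proof is correct and follows essentially the same route as the paper's: argue by contradiction, extract convergent subsequences $\x^{\nu}\to\bar{\x}$, $\y^{\nu}\to\bar{\y}$, use joint continuity of $f$ together with the USC of $f^{\ast}$ (Lemma~\ref{lemma:f_USC}) to conclude $\bar{\y}\in\S(\bar{\x})$, and then use inner semicontinuity of $\S$ to contradict $\mathrm{dist}(\y^{\nu},\S(\x^{\nu}))>\epsilon$. The only (cosmetic) difference is in the last step, where you invoke the sequential definition of ISC to produce selections $\mathbf{z}^{\nu}\in\S(\x^{\nu})$ converging to $\bar{\y}$, whereas the paper appeals to the distance-function characterization of ISC (Proposition 5.11 of Rockafellar--Wets); your version is self-contained and avoids the sign slip in the paper's displayed triangle-inequality chain.
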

Combining Lemmas~\ref{lemma:tildeS_bounded} and~\ref{lemma:f_USC}, together with Proposition~\ref{prop:yK_conver}, the \emph{LL solution set} property required in Theorem~\ref{thm:general} can be eventually derived. Let us now prove the LSC property of $\varphi$ on $\X$ in the following proposition.
\begin{prop}\label{prop:varphi_LSC}
	Suppose $F(\x,\y)$ is level-bounded in $\y$ locally uniformly in $\x\in\X$. If $\S(\x)$ is OSC at ${\x}\in\X$, then $\varphi(\x)$ is LSC at ${\x}\in\X$. 
\end{prop}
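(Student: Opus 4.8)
The plan is to establish the lower semi-continuity of $\varphi$ at $\bar{\x}$ by a direct $\liminf$ argument combined with a minimizing-sequence extraction. Fix $\bar{\x}\in\X$ and set $\ell=\liminf_{\x\to\bar{\x}}\varphi(\x)$; the goal is to show $\ell\ge\varphi(\bar{\x})$, which (since $\varphi(\bar{\x})$ is finite under Assumption~\ref{assum:F}) is immediate when $\ell=+\infty$, so I may assume $\ell<\infty$. First I would select a sequence $\x^{\nu}\to\bar{\x}$ in $\X$ along which $\varphi(\x^{\nu})\to\ell$. Since the blanket Assumption~\ref{assum:F} guarantees that $\tilde{\S}(\x)=\arg\min_{\y\in\S(\x)}F(\x,\y)$ is nonempty, I can pick $\y^{\nu}\in\tilde{\S}(\x^{\nu})\subseteq\S(\x^{\nu})$, so that $F(\x^{\nu},\y^{\nu})=\varphi(\x^{\nu})$ is an attained value rather than a mere infimum.

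The crucial second step is to show that the minimizing sequence $\{\y^{\nu}\}$ is bounded; this is exactly where the level-boundedness hypothesis enters. Because $\varphi(\x^{\nu})\to\ell<\infty$, the values $F(\x^{\nu},\y^{\nu})$ are eventually bounded above by some constant $c$. Invoking Definition~\ref{def:uniform_boundness} applied to $F$ at $\bar{\x}$, there exist $\delta>0$ and a bounded set $\mathcal{B}$ with $\{\y:F(\x,\y)\le c\}\subseteq\mathcal{B}$ for all $\x\in\mathbb{B}_{\delta}(\bar{\x})\cap\X$. For all large $\nu$ we have $\x^{\nu}\in\mathbb{B}_{\delta}(\bar{\x})\cap\X$ and $F(\x^{\nu},\y^{\nu})\le c$, which forces $\y^{\nu}\in\mathcal{B}$; hence $\{\y^{\nu}\}$ is bounded and admits a subsequence $\y^{\nu_j}\to\bar{\y}$.

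Finally I would pass to the limit using the two qualitative hypotheses. Since $\y^{\nu_j}\in\S(\x^{\nu_j})$ with $\x^{\nu_j}\to\bar{\x}$ and $\y^{\nu_j}\to\bar{\y}$, the outer semi-continuity of $\S$ at $\bar{\x}$ yields $\bar{\y}\in\S(\bar{\x})$. Joint continuity of $F$ then gives $F(\x^{\nu_j},\y^{\nu_j})\to F(\bar{\x},\bar{\y})$, while the left-hand side equals $\varphi(\x^{\nu_j})\to\ell$; therefore $\ell=F(\bar{\x},\bar{\y})\ge\inf_{\y\in\S(\bar{\x})}F(\bar{\x},\y)=\varphi(\bar{\x})$, which is the desired inequality and completes the proof.

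The main obstacle is precisely the boundedness of $\{\y^{\nu}\}$ in the second step: without it the argmin elements could escape to infinity, so that no convergent subsequence, and hence no limit point to which outer semi-continuity could be applied, would exist. Level-boundedness of $F$ locally uniformly in $\x$ is the ingredient that rules this out, and I expect the only delicate point to be verifying that a single pair $(\delta,\mathcal{B})$ is valid along the whole tail of $\{\x^{\nu}\}$; the remaining steps are routine once attainment of $\varphi(\x^{\nu})$ via nonemptiness of $\tilde{\S}$ is secured.
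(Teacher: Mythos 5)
Your proof is correct and follows essentially the same route as the paper's: extract a (near-)minimizing sequence $\y^{\nu}\in\S(\x^{\nu})$, use the level-boundedness of $F$ to trap it in a bounded set, pass to a convergent subsequence, and combine OSC of $\S$ with joint continuity of $F$ to conclude; the paper phrases this as a contradiction argument, but that difference is cosmetic. The one substantive difference is that you invoke Assumption~\ref{assum:F} to guarantee $\tilde{\S}(\x^{\nu})\neq\emptyset$, i.e.\ that the infimum defining $\varphi(\x^{\nu})$ is attained, whereas the paper sidesteps attainment entirely by choosing approximate minimizers $\y^{t}\in\S(\x^{t})$ with $F(\x^{t},\y^{t})\le\varphi(\x^{t})+\epsilon$. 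This matters beyond aesthetics: Assumption~\ref{assum:F} is not among the hypotheses of the proposition as stated, and the paper reuses this proposition in Section~\ref{subsec:revist} (the proof of Theorem~\ref{thm:revisting_conver}), where Assumption~\ref{assum:F} is not in force, so a proof that leans on it would narrow the proposition's scope. Your argument is repaired by a one-line change: take $\y^{\nu}\in\S(\x^{\nu})$ with $F(\x^{\nu},\y^{\nu})\le\varphi(\x^{\nu})+1/\nu$; then $F(\x^{\nu},\y^{\nu})\to\ell$ still holds (since also $F(\x^{\nu},\y^{\nu})\ge\varphi(\x^{\nu})$), and every subsequent step of your proof goes through verbatim.
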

Then the \emph{UL objective convergence} property required in Theorem~\ref{thm:general} can be obtained subsequently based on Proposition~\ref{prop:varphi_LSC}, In summary, we present the main convergence results of BDA in the following theorem. 
\begin{thm}\label{thm:conver_BDA}
	Suppose Assumption~\ref{assum:F} is satisfied and let $\{\y_K\}$ be defined as in Eq.~\eqref{eq:lower}, $s_l \in (0,1/L_f]$, $s_u \in (0,2/(L_F+\sigma)]$, $$\alpha_k = \min \left\{2\gamma/k(1-\beta),1-\varepsilon \right\},$$ with $k \ge 1$, $\varepsilon>0$,  $\gamma \in (0,1]$, and $$\beta = \sqrt{1-2s_u\sigma L_F/(\sigma + L_F)}.$$ Assume further that $\S(\x)$ is continuous on $\X$. Then we have that both the LL solution set and UL objective convergence properties hold. 
\end{thm}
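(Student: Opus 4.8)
The goal is to verify the two hypotheses of Theorem~\ref{thm:general} — the \emph{LL solution set property} and the \emph{UL objective convergence property} — since this is exactly what Theorem~\ref{thm:conver_BDA} asserts, and via Theorem~\ref{thm:general} they in turn yield convergence of the iterates. The whole argument hinges on upgrading the \emph{pointwise}-in-$\x$ estimates of Proposition~\ref{prop:f_conver} into statements that are \emph{uniform} over the compact set $\X$. First I would observe that Assumption~\ref{assum:F} makes $F(\x,\cdot)$ $\sigma$-strongly convex with modulus independent of $\x$, which together with the $L_0$-Lipschitz bound $\|\nabla_\y F(\x,\cdot)\|\le L_0$ forces $F$ to be level-bounded in $\y$ locally uniformly in $\x\in\X$; since continuity of $\S(\x)$ in particular gives ISC, Lemma~\ref{lemma:tildeS_bounded} yields that $\cup_{\x\in\X}\tilde{\S}(\x)$ is bounded. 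Choosing $\y^*(\x)\in\tilde{\S}(\x)$ from this bounded set and again using $\|\nabla_\y F(\x,\y^*(\x))\|\le L_0$, the constant $C_{\y^*(\x)}$ appearing in Proposition~\ref{prop:f_conver} is majorized by a single $\bar{C}:=\sup_{\x\in\X}C_{\y^*(\x)}<\infty$.

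With this uniform $\bar C$ in hand, Proposition~\ref{prop:f_conver} gives the \emph{uniform} rates
\[
\sup_{\x\in\X}\|\y_K(\x)-\tilde{\y}_K(\x)\|\le\frac{2\bar C(J+2)}{K(1-\beta)},\qquad
\sup_{\x\in\X}\bigl\{f(\x,\tilde{\y}_K(\x))-f^*(\x)\bigr\}\le\frac{2\bar C^2(J+2)}{K(1-\beta)s_l},
\]
both tending to $0$, and confines $\{\tilde{\y}_K(\x)\}_{K,\x}$ to a bounded set $\mathcal{Y}$. To obtain the LL solution set property I would feed this into Proposition~\ref{prop:yK_conver}: since $\S(\x)$ is ISC on $\X$, the vanishing of the uniform value gap forces $\sup_{\x\in\X}\mathrm{dist}(\tilde{\y}_K(\x),\S(\x))\to0$, and the uniform closeness of $\y_K$ and $\tilde{\y}_K$ then transfers this to $\sup_{\x\in\X}\mathrm{dist}(\y_K(\x),\S(\x))\to0$, which is precisely property~(1).

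For the UL objective convergence property I would argue in two parts. Lower semicontinuity of $\varphi$ on $\X$ is immediate from Proposition~\ref{prop:varphi_LSC}, whose hypotheses (level-boundedness of $F$ and OSC of $\S$) are supplied by strong convexity and by continuity of $\S$, respectively. For the pointwise limit, fix $\x\in\X$: Proposition~\ref{prop:f_conver} shows $\{\y_K(\x)\}$ converges to $\tilde{\S}(\x)$, which Assumption~\ref{assum:F} forces to be the singleton $\{\tilde{\y}(\x)\}$; hence $\y_K(\x)\to\tilde{\y}(\x)$, and continuity of $F(\x,\cdot)$ gives $\varphi_K(\x)=F(\x,\y_K(\x))\to F(\x,\tilde{\y}(\x))=\inf_{\y\in\S(\x)}F(\x,\y)=\varphi(\x)$. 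Combined with the LSC part, this establishes property~(2) and completes the verification.

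I expect the main obstacle to be the passage from pointwise to uniform control. Concretely, Proposition~\ref{prop:yK_conver} is stated for a single point $\y$ whose \emph{uniform} value gap $\sup_{\x}\{f(\x,\y)-f^*(\x)\}$ is small, whereas the quantity we actually control is $f(\x,\tilde{\y}_K(\x))-f^*(\x)$ at the \emph{matching} $\x$. Reconciling these requires either reading the proof of Proposition~\ref{prop:yK_conver} as a uniform statement, or a direct compactness argument: if $\sup_{\x}\mathrm{dist}(\y_K(\x),\S(\x))\not\to0$, extract along a subsequence $\x_j\to\bar\x$ and $\tilde{\y}_{K_j}(\x_j)\to\bar\y$ from the bounded set $\mathcal{Y}$; continuity of $f$ together with upper semicontinuity of $f^*$ (Lemma~\ref{lemma:f_USC}) forces $\bar\y\in\S(\bar\x)$, and ISC of $\S$ at $\bar\x$ then produces $\y_j\in\S(\x_j)$ with $\y_j\to\bar\y$, contradicting the assumed positive lower bound on the distance. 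Securing the uniform constant $\bar C$ and carrying out this uniform-to-pointwise reconciliation are the two places where continuity of $\S$ and strong convexity of $F$ are genuinely used.
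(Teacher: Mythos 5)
Your proof is correct and follows essentially the same route as the paper's: strong convexity plus the uniform Lipschitz bound give level-boundedness of $F(\x,\y)$ in $\y$ locally uniformly in $\x$, Lemma~\ref{lemma:tildeS_bounded} then bounds $C_{\y^*(\x)}$ uniformly so that Proposition~\ref{prop:f_conver} yields uniform rates, Proposition~\ref{prop:yK_conver} converts the vanishing value gap into the LL solution set property, and Proposition~\ref{prop:varphi_LSC} together with the singleton structure of $\tilde{\S}(\x)$ gives the UL objective convergence property. Your worry about the matched-pairs (uniform) reading of Proposition~\ref{prop:yK_conver} is well placed, but the contradiction/compactness argument you sketch to resolve it is exactly how the paper proves that proposition, so the two treatments coincide.
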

\begin{remark}
	Our proposed theoretical results are indeed general enough for BLPs in different application scenarios. For example, when the LL objective takes a nonsmooth form, e.g., $h=f + g$ with smooth $f$ and nonsmooth $g$, we can adopt the proximal operation based iteration module \cite{beck2017first} to construct $\T_k$ within our BDA framework. The convergence proofs are highly similar to that in Theorem~\ref{thm:conver_BDA}. More details on such extension can be found in our Supplemental Material.
\end{remark}

\subsection{Improving Existing LLS Theories}\label{subsec:revist}

Although with the LLS simplification on BLPs in Eqs.~\eqref{eq:blp}-\eqref{eq:lower-level}, the theoretical properties of the existing bi-level FOMs are still not very convincing. Their convergence proofs in essence depend on the strong convexity of the LL objective, which may restrict the use of these approaches in complex machine learning applications. In this subsection, by weakening the required assumptions, we improve the convergence results in~\cite{franceschi2018bilevel,shaban2018truncated} for these conventional bi-level FOMs in the LLS scenario. Specifically, we first introduce an assumption on the LL objective $f(\x,\y)$, which is needed for our analysis in this subsection.
\begin{assum}\label{assum:f} 
	$f(\x,\y) : \mathbb{R}^n \times \mathbb{R}^m \rightarrow \mathbb{R}$ is level-bounded in $\y$ locally uniformly in $\x\in\X$.
\end{assum}
In fact, Assumption~\ref{assum:f} is mild and satisfied by a large number of bi-level FOMs, when the LL subproblem is convex but not necessarily strongly convex. In contrast, theoretical results in existing literature~\cite{franceschi2018bilevel,shaban2018truncated} require the more restrictive (local) strong convexity property on the LL objective to meets the LLS condition. 

Under Assumption~\ref{assum:f}, the following lemma verifies the continuity of $\S(\x)$ in the LLS scenario.
\begin{lemma}\label{prop:revisting_continuous_S}
	Suppose  $\S(\x)$ is single-valued on $\X$ and Assumption~\ref{assum:f} is satisfied. Then $\S(\x)$ is continuous on $\X$.
\end{lemma}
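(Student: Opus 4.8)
The plan is to verify the two constituent conditions of Definition~\ref{defnitionconti}, namely outer semi-continuity (OSC) and inner semi-continuity (ISC), separately, and to exploit crucially the single-valuedness hypothesis to upgrade the generically available OSC to full continuity. The key structural input is that $\S(\x)$ is the argmin mapping of the jointly continuous function $f$ under a locally uniform level-boundedness condition, which is precisely the setting of standard parametric-minimization results in variational analysis~\cite{rockafellar2009variational}.

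First I would invoke that machinery. Since $f(\x,\y)$ is jointly continuous (in particular lower semi-continuous) and, by Assumption~\ref{assum:f}, level-bounded in $\y$ locally uniformly in $\x\in\X$ in the sense of Definition~\ref{def:uniform_boundness}, the mapping $\S(\x)=\arg\min_\y f(\x,\y)$ is nonempty, compact-valued, \emph{locally bounded}, and \emph{OSC} on $\X$. Concretely, the level-boundedness furnishes, for each $\bar{\x}\in\X$, a radius $\delta>0$ and a bounded set $\mathcal{B}$ with $\S(\x)\subseteq\mathcal{B}$ for all $\x\in\mathbb{B}_\delta(\bar{\x})\cap\X$, which is exactly local boundedness; OSC then follows because the optimal value $f^*(\x)=\inf_\y f(\x,\y)$ is upper semi-continuous (bound it above by $f(\x,\bar{\y})$ at a fixed minimizer $\bar{\y}$ of $\bar{\x}$ and pass to the limit using continuity of $f$), so any limit of minimizers along $\x^\nu\to\bar{\x}$ is again a minimizer at $\bar{\x}$.

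The heart of the argument is to promote OSC to ISC using single-valuedness. Fix $\bar{\x}\in\X$ and write $\S(\bar{\x})=\{\bar{\y}\}$. Let $\x^\nu\to\bar{\x}$ be arbitrary; since $\S$ is single-valued, $\S(\x^\nu)$ is a unique point $\y^\nu$, and by local boundedness $\{\y^\nu\}$ is bounded. To obtain ISC it suffices to show $\y^\nu\to\bar{\y}$. Suppose not; then some subsequence $\{\y^{\nu_j}\}$ converges to a point $\y'\neq\bar{\y}$. But $\x^{\nu_j}\to\bar{\x}$ together with $\y^{\nu_j}\in\S(\x^{\nu_j})$ and $\y^{\nu_j}\to\y'$ exhibits $\y'$ as an element of $\limsup_{\x\to\bar{\x}}\S(\x)$, so OSC forces $\y'\in\S(\bar{\x})=\{\bar{\y}\}$, i.e. $\y'=\bar{\y}$, a contradiction. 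Hence every convergent subsequence of the bounded sequence $\{\y^\nu\}$ has limit $\bar{\y}$, so $\y^\nu\to\bar{\y}$, which is exactly the ISC condition $\liminf_{\x\to\bar{\x}}\S(\x)\supseteq\S(\bar{\x})$. Combining OSC and ISC yields $\lim_{\x\to\bar{\x}}\S(\x)=\S(\bar{\x})$ at every $\bar{\x}\in\X$.

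The main obstacle to anticipate is not any single computation but the conceptual point that argmin mappings are in general only outer semi-continuous and can fail ISC; it is exactly the single-valuedness assumption that rules out the pathological ``jump'' of minimizers and allows the subsequence argument to close. A secondary care-point is that the local boundedness must be genuinely uniform over a neighborhood, so that the extracted subsequences live in a fixed compact set; this is guaranteed by the \emph{locally uniform} level-boundedness of Definition~\ref{def:uniform_boundness} rather than by mere pointwise level-boundedness.
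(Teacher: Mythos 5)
Your proof is correct and takes essentially the same route as the paper: first establish that the argmin mapping $\S(\x)$ is OSC and locally bounded on $\X$ (via joint continuity of $f$ and the locally uniform level-boundedness of Assumption~2), then use single-valuedness to upgrade OSC plus local boundedness to ISC, hence continuity. The only difference is cosmetic: you prove both steps inline (OSC via upper semi-continuity of $f^*$, the ISC upgrade via a bounded-subsequence argument), whereas the paper simply cites them as Proposition 4.4 of Bonnans--Shapiro and Proposition 5.20 of Rockafellar--Wets.
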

As can be seen from the proof of Theorem~\ref{thm:revisting_conver} in our Supplemental Material, Lemma \ref{prop:revisting_continuous_S} and the uniform convergence of $\{f(\x,\y_K(\x))\}$ actually imply the \emph{LL solution set} and \emph{UL objective convergence} properties. Hence Theorem~\ref{thm:general} is applicable, which inspires an improved version of the convergence results for the existing bi-level FOMs as follows.
\begin{thm}\label{thm:revisting_conver}
	Suppose  $\S(\x)$ is single-valued on $\X$ and Assumption~\ref{assum:f} is satisfied, $\{\y_K(\x)\}$ is uniformly bounded on $\X$, and  $\{f(\x,\y_K(\x))\}$ converges uniformly to $f^*(\x)$ on $\X$ as $K \rightarrow \infty$. Then we have that both the LL solution set and UL objective convergence properties hold. 
\end{thm}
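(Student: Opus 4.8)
The plan is to verify the two hypotheses of Theorem~\ref{thm:general}, the \emph{LL solution set} property and the \emph{UL objective convergence} property, under the single-valuedness of $\S$. The first move is to apply Lemma~\ref{prop:revisting_continuous_S}: since $\S$ is single-valued on $\X$ and Assumption~\ref{assum:f} holds, $\S$ is continuous on $\X$. Writing $\S(\x)=\{\y^*(\x)\}$, continuity of the single-valued mapping is precisely continuity of the selection $\y^*(\cdot)$ on $\X$. The remaining ingredients I would rely on are the compactness of $\X$, the uniform boundedness of $\{\y_K(\x)\}$ (which confines the whole family to a fixed bounded set), the joint continuity of $f$ and $F$, the hypothesized uniform convergence $\sup_{\x\in\X}\{f(\x,\y_K(\x))-f^*(\x)\}\to 0$, and the upper semi-continuity of $f^*$ from Lemma~\ref{lemma:f_USC}.

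For the LL solution set property I would argue by contradiction. Suppose it fails; then there exist $\epsilon_0>0$, a subsequence $K_j\to\infty$, and points $\x_j\in\X$ with $\mathrm{dist}(\y_{K_j}(\x_j),\S(\x_j))>\epsilon_0$. Using compactness of $\X$ and uniform boundedness of $\{\y_K(\x)\}$, I pass to a further subsequence so that $\x_j\to\bar{\x}\in\X$ and $\y_{K_j}(\x_j)\to\bar{\y}$. Since $0\le f(\x_j,\y_{K_j}(\x_j))-f^*(\x_j)\le \sup_{\x\in\X}\{f(\x,\y_{K_j}(\x))-f^*(\x)\}\to 0$ and $f(\x_j,\y_{K_j}(\x_j))\to f(\bar{\x},\bar{\y})$ by continuity of $f$, it follows that $f^*(\x_j)\to f(\bar{\x},\bar{\y})$. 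Combining this with the USC of $f^*$ (so $f(\bar{\x},\bar{\y})=\lim_j f^*(\x_j)\le f^*(\bar{\x})$) and the trivial bound $f(\bar{\x},\bar{\y})\ge f^*(\bar{\x})$, I get $f(\bar{\x},\bar{\y})=f^*(\bar{\x})$, i.e. $\bar{\y}\in\S(\bar{\x})$, hence $\bar{\y}=\y^*(\bar{\x})$. Finally, continuity of $\y^*$ gives $\y^*(\x_j)\to\y^*(\bar{\x})=\bar{\y}$, so $\mathrm{dist}(\y_{K_j}(\x_j),\S(\x_j))=\|\y_{K_j}(\x_j)-\y^*(\x_j)\|\to 0$, contradicting the bound $>\epsilon_0$.

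For the UL objective convergence property, LSC of $\varphi$ is immediate: because $\S$ is single-valued, $\varphi(\x)=F(\x,\y^*(\x))$, which is continuous (hence LSC) as a composition of the continuous maps $F$ and $\y^*$. The pointwise convergence $\varphi_K(\x)\to\varphi(\x)$ then follows from the LL solution set property specialized to a fixed $\x$: since $\S(\x)$ is a singleton, $\mathrm{dist}(\y_K(\x),\S(\x))\to 0$ is the same as $\y_K(\x)\to\y^*(\x)$, and joint continuity of $F$ yields $\varphi_K(\x)=F(\x,\y_K(\x))\to F(\x,\y^*(\x))=\varphi(\x)$.

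The main obstacle, and the point where the hypotheses must be used together, is upgrading pointwise information into the \emph{uniform} estimate demanded by the LL solution set property. Uniform convergence of the objective values $f(\x,\y_K(\x))$ controls only function gaps, not distances to $\S(\x)$, and turning the former into the latter uniformly in $\x$ is exactly what the contradiction-plus-compactness argument accomplishes. It is crucial that $\S$ be continuous, via Lemma~\ref{prop:revisting_continuous_S} (which itself needs Assumption~\ref{assum:f}), so that $\y^*(\x_j)\to\y^*(\bar{\x})$ closes the argument; without this continuity the distance could remain bounded away from zero even as the function gap vanishes, which is precisely the failure mechanism behind Example~\ref{CExam}.
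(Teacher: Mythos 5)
Your proof is correct, and its skeleton coincides with the paper's: both reduce the claim to the two properties required by Theorem~\ref{thm:general}, and both obtain the crucial continuity of $\S$ from Lemma~\ref{prop:revisting_continuous_S}. The differences lie in how the two properties are then discharged. For the \emph{LL solution set} property, the paper simply cites Proposition~\ref{prop:yK_conver}, applied to the uniformly bounded family $\{\y_K(\x)\}$ and the uniformly vanishing gaps $f(\x,\y_K(\x))-f^*(\x)$; your contradiction-plus-compactness argument re-derives exactly that proposition in the single-valued setting, with the same ingredients (joint continuity of $f$, USC of $f^*$ from Lemma~\ref{lemma:f_USC}), except that you close the contradiction with continuity of the selection $\y^*(\cdot)$ where the paper's proposition instead uses ISC of $\S$ together with a distance-estimate result from variational analysis --- equivalent mechanisms once $\S$ is a continuous singleton. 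The more substantive divergence is the LSC of $\varphi$: the paper invokes Proposition~\ref{prop:varphi_LSC}, whose stated hypothesis includes level-boundedness of $F(\x,\y)$ in $\y$ locally uniformly in $\x\in\X$, a condition that Theorem~\ref{thm:revisting_conver} never assumes, so the paper's appeal to it is formally loose; your direct observation that $\varphi(\x)=F(\x,\y^*(\x))$ is continuous (hence LSC) as a composition of the continuous maps $F$ and $\y^*$ sidesteps that hypothesis entirely and is the cleaner route in the single-valued setting. The final step --- singleton $\S(\x)$ plus $\mathrm{dist}(\y_K(\x),\S(\x))\to 0$ plus joint continuity of $F$ giving $\varphi_K(\x)\to\varphi(\x)$ --- matches the paper's concluding sentence.
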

\begin{remark}
Theorem~\ref{thm:revisting_conver} actually improves the converge results in \cite{franceschi2018bilevel}. In fact, the uniform convergence assumption of $\{\y_K(\x)\}$ towards $\y^*(\x)$ required in \cite{franceschi2018bilevel} is essentially based on the strong convexity assumption (see Remark 3.3 of \cite{franceschi2018bilevel}). Instead of assuming such strong convexity, we only need to assume a weaker condition that $\{f(\x,\y_K(\x))\}$ converges uniformly to $f^*(\x)$ on $\X$ as $K \rightarrow \infty$. 
\end{remark}

It is natural for us to illustrate our improvement in terms of concrete applications. Specifically, we take the gradient-based bi-level scheme summarized in Section~\ref{sec:rhg} (which has been used in \cite{franceschi2018bilevel,jenni2018deep,shaban2018truncated,zugner2019adversarial,rajeswaran2019meta}). In the following two propositions, we assume that $f(\x,\cdot):\mathbb{R}^m\to\mathbb{R}$ is $L_f$-smooth and convex, and $s_l \leq 1/L_f$. 

Inspired by Theorems 10.21 and 10.23 in \cite{beck2017first}, we first derive the following proposition. 
\begin{prop}\label{pg}
	Let $\{\y_K\}$ be defined as in Eq.~\eqref{eq:gradient_f}. Then it holds that $$\|\y_K(\x) - \y^*(\x)\| \le \|\y_0 - \y^*(\x)\|,$$ and $$f(\y_K(\x)) - f^*(\x) \le \frac{\|\y_0-\y^*(\x)\|^2}{2s_lK},$$ with $\y^*(\x) \in \S(\x)$ and $\x \in \X$.
\end{prop}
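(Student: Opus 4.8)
The plan is to fix $\x \in \X$ and work with the single-variable convex function $g(\cdot) = f(\x, \cdot)$, whose gradient descent recursion reads $\y_{k+1} = \y_k - s_l \nabla g(\y_k)$; since both claimed estimates are stated pointwise in $\x$, no uniformity argument is needed and the entire proof reduces to the classical analysis of gradient descent on an $L_f$-smooth convex objective. The two ingredients I would assemble first are (i) the descent lemma, which for $L_f$-smooth $g$ and step $s_l \le 1/L_f$ yields the sufficient-decrease estimate $g(\y_{k+1}) \le g(\y_k) - \tfrac{s_l}{2}\|\nabla g(\y_k)\|^2$, and (ii) the gradient inequality from convexity, $g(\y_k) \le g(\y^*) + \langle \nabla g(\y_k), \y_k - \y^*\rangle$, where $\y^*(\x) \in \S(\x)$ satisfies $\nabla g(\y^*) = 0$.

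For the first bound (Fej\'er monotonicity), I would expand $\|\y_{k+1} - \y^*\|^2 = \|\y_k - \y^*\|^2 - 2 s_l \langle \nabla g(\y_k), \y_k - \y^*\rangle + s_l^2 \|\nabla g(\y_k)\|^2$ and control the cross term via co-coercivity of the gradient of an $L_f$-smooth convex function, namely $\langle \nabla g(\y_k), \y_k - \y^*\rangle \ge \tfrac{1}{L_f}\|\nabla g(\y_k)\|^2$ (using $\nabla g(\y^*)=0$). Substituting gives $\|\y_{k+1}-\y^*\|^2 \le \|\y_k-\y^*\|^2 - s_l(\tfrac{2}{L_f} - s_l)\|\nabla g(\y_k)\|^2$, and since $s_l \le 1/L_f$ forces $\tfrac{2}{L_f}-s_l \ge \tfrac{1}{L_f} > 0$, the distance to $\y^*$ is non-increasing in $k$; a trivial telescoping from $0$ to $K$ then yields $\|\y_K - \y^*\| \le \|\y_0 - \y^*\|$.

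For the rate bound, I would combine the two ingredients into $g(\y_{k+1}) - g(\y^*) \le \langle \nabla g(\y_k), \y_k - \y^*\rangle - \tfrac{s_l}{2}\|\nabla g(\y_k)\|^2$, then recognize the right-hand side as a perfect telescoping difference: $\langle \nabla g(\y_k), \y_k - \y^*\rangle - \tfrac{s_l}{2}\|\nabla g(\y_k)\|^2 = \tfrac{1}{2 s_l}\big(\|\y_k - \y^*\|^2 - \|\y_{k+1} - \y^*\|^2\big)$, which follows directly from the recursion $\y_{k+1} = \y_k - s_l \nabla g(\y_k)$. Summing over $k = 0, \ldots, K-1$ collapses the right-hand side to $\tfrac{1}{2 s_l}\|\y_0 - \y^*\|^2$, and because the sufficient-decrease estimate makes $\{g(\y_k)\}$ non-increasing, the last iterate dominates the running average, giving $g(\y_K) - g(\y^*) \le \tfrac{1}{K}\sum_{k=0}^{K-1}\big(g(\y_{k+1}) - g(\y^*)\big) \le \tfrac{\|\y_0 - \y^*\|^2}{2 s_l K}$.

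These are textbook estimates, following exactly the route of Theorems 10.21 and 10.23 of \cite{beck2017first}, so I do not anticipate a genuine obstacle. The only step requiring care is the algebraic identity that rewrites the combined convexity/descent bound as the consecutive-distance telescoping difference, together with the monotonicity of the objective values, since it is precisely this monotonicity that lets one pass from the averaged guarantee to a bound on the terminal iterate $\y_K$ rather than on the running minimum over the trajectory.
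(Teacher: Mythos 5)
Your proposal is correct and is essentially the same argument as the paper's: the paper's entire proof is a one-line citation to Theorems 10.21 and 10.23 of \cite{beck2017first}, and what you have written out (sufficient decrease, co-coercivity giving Fej\'er monotonicity, the telescoping identity, and monotonicity of $\{g(\y_k)\}$ to pass from the averaged bound to the last iterate) is precisely the standard proof of those two theorems. All steps check out, including the exact identity $\langle \nabla g(\y_k), \y_k - \y^*\rangle - \tfrac{s_l}{2}\|\nabla g(\y_k)\|^2 = \tfrac{1}{2s_l}\bigl(\|\y_k - \y^*\|^2 - \|\y_{k+1} - \y^*\|^2\bigr)$ on which the rate bound rests.
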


Then in the following proposition we can immediately verify our required assumption on $\{f(\x,\y_K(\x))\}$ in the absence of the strong convexity property on the LL objective.
\begin{prop}\label{prop:fK-conver}
	Suppose that $\S(\x)$ is single-valued on $\X$ and Assumption~\ref{assum:f} is satisfied. Let $\{\y_K\}$ be defined as in Eq.~\eqref{eq:gradient_f}. Then $\{\y_K(\x)\}$ is uniformly bounded on $\X$ and $\{f(\x,\y_K(\x))\}$ converges uniformly to $f^*(\x)$ on $\X$ as $K \rightarrow \infty$.
\end{prop}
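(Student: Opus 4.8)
The plan is to reduce both assertions to the two quantitative estimates already recorded in Proposition~\ref{pg}, whose right-hand sides depend on $\x$ only through the single quantity $\|\y_0 - \y^*(\x)\|$, where $\y^*(\x)$ denotes the unique element of $\S(\x)$. Consequently everything hinges on controlling $\sup_{\x \in \X}\|\y^*(\x)\|$, and I would establish this uniform bound first.

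First I would invoke Lemma~\ref{prop:revisting_continuous_S}: since $\S(\x)$ is single-valued on $\X$ and Assumption~\ref{assum:f} holds, $\S$ is continuous on $\X$, so the single-valued map $\x \mapsto \y^*(\x)$ is continuous. As $\X$ is compact, its image $\{\y^*(\x) : \x \in \X\}$ is compact, hence bounded; set $M = \sup_{\x \in \X}\|\y^*(\x)\| < \infty$. (Alternatively, one could bound $\cup_{\x \in \X}\S(\x)$ directly from the local uniform level-boundedness of Assumption~\ref{assum:f} together with the boundedness from above of $f^*$ on the compact set $\X$ furnished by Lemma~\ref{lemma:f_USC}; but routing through the already-established continuity of $\S$ is cleaner.)

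For the uniform boundedness of $\{\y_K(\x)\}$, I would combine the first estimate of Proposition~\ref{pg}, namely $\|\y_K(\x) - \y^*(\x)\| \le \|\y_0 - \y^*(\x)\|$, with the triangle inequality to obtain
\begin{equation*}
\|\y_K(\x)\| \le \|\y^*(\x)\| + \|\y_0 - \y^*(\x)\| \le 2\|\y^*(\x)\| + \|\y_0\| \le 2M + \|\y_0\|,
\end{equation*}
uniformly in $K$ and in $\x \in \X$, which is exactly uniform boundedness on $\X$ since $\y_0$ is a fixed initialization.

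For the uniform convergence of $\{f(\x,\y_K(\x))\}$, I would use the second estimate of Proposition~\ref{pg}. Since $\|\y_0 - \y^*(\x)\|^2 \le (\|\y_0\| + M)^2$ uniformly over $\X$, and since $f(\x,\y_K(\x)) - f^*(\x) \ge 0$ by the definition $f^*(\x) = \min_\y f(\x,\y)$, it follows that
\begin{equation*}
0 \le \sup_{\x \in \X}\bigl(f(\x,\y_K(\x)) - f^*(\x)\bigr) \le \frac{(\|\y_0\| + M)^2}{2 s_l K} \xrightarrow[K \to \infty]{} 0,
\end{equation*}
which is precisely the asserted uniform convergence to $f^*$. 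The only genuinely nontrivial step is securing the finite uniform bound $M$; once the continuity of $\S$ (via Lemma~\ref{prop:revisting_continuous_S}) and the compactness of $\X$ deliver it, the remainder is a direct transcription of Proposition~\ref{pg} with $\x$-independent constants.
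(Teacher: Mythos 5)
Your proof is correct and takes essentially the same route as the paper: both reduce the claim to a uniform bound on $\cup_{\x\in\X}\S(\x)$ over the compact set $\X$ and then apply Proposition~\ref{pg} directly. The only cosmetic difference is that the paper derives this bound from local boundedness of $\S$ (re-using the argument inside the proof of Lemma~\ref{prop:revisting_continuous_S}) plus compactness, whereas you invoke the continuity conclusion of Lemma~\ref{prop:revisting_continuous_S}; the alternative you sketch in parentheses is precisely the paper's argument, and your explicit estimates simply spell out what the paper summarizes as ``follows from Proposition~\ref{pg} directly.''
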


\begin{remark}
	When the LL subproblem is convex, but not necessarily strongly convex, a large number of gradient-based methods, including accelerated gradient methods such as FISTA \cite{beck2009fast} and block coordinate descent method \cite{tseng2001convergence}, automatically meet our assumption, i.e., the uniform convergence of optimal values $\{f(\x,\y_K(\x))\}$ towards $f^*(\x)$ on $\X$.
\end{remark}

\begin{figure*}[t]
	\centering \begin{tabular}{c@{\extracolsep{0.2em}}c@{\extracolsep{0.2em}}c@{\extracolsep{0.2em}}c}
		\includegraphics[width=0.225\textwidth]{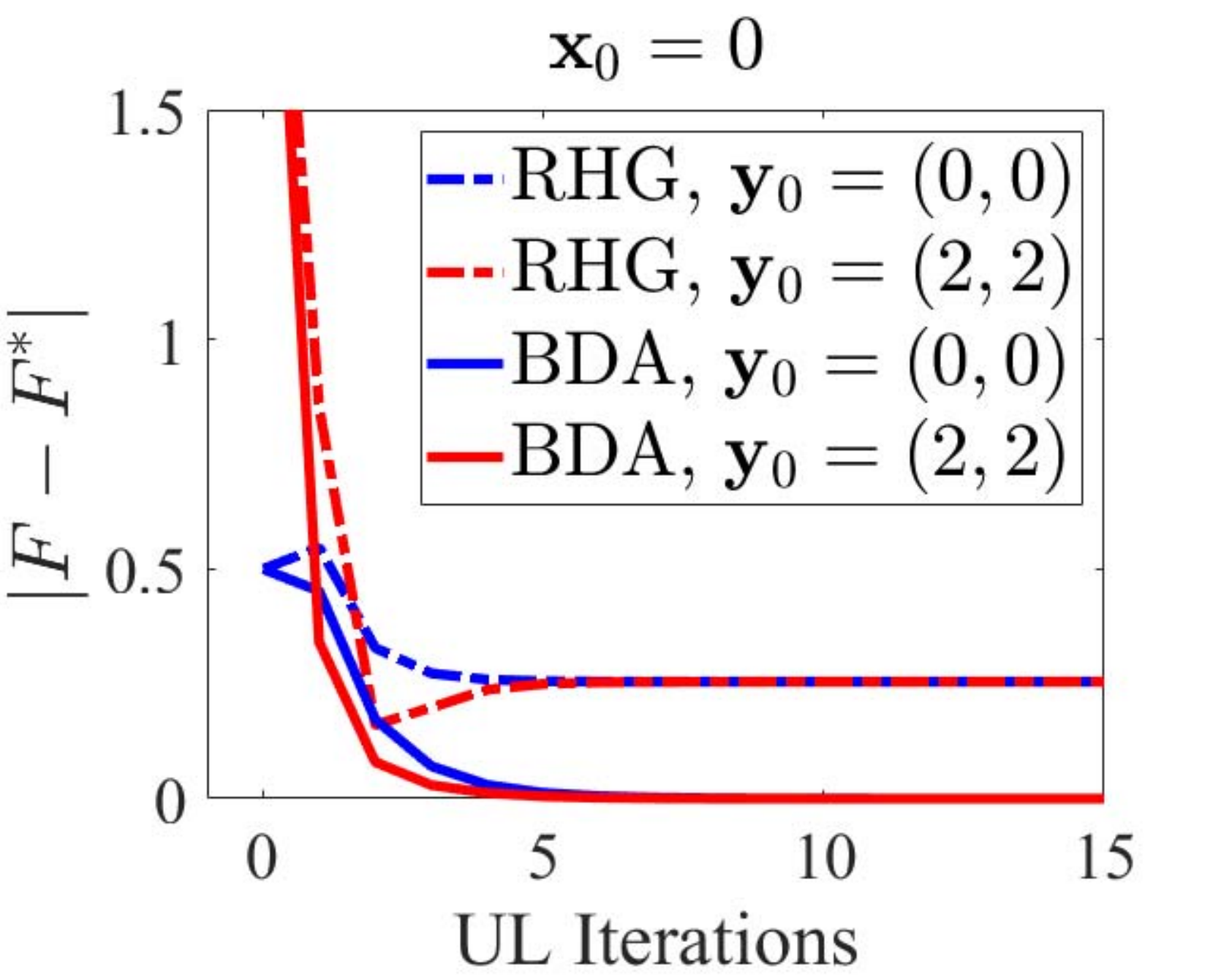}
		&\includegraphics[width=0.225\textwidth]{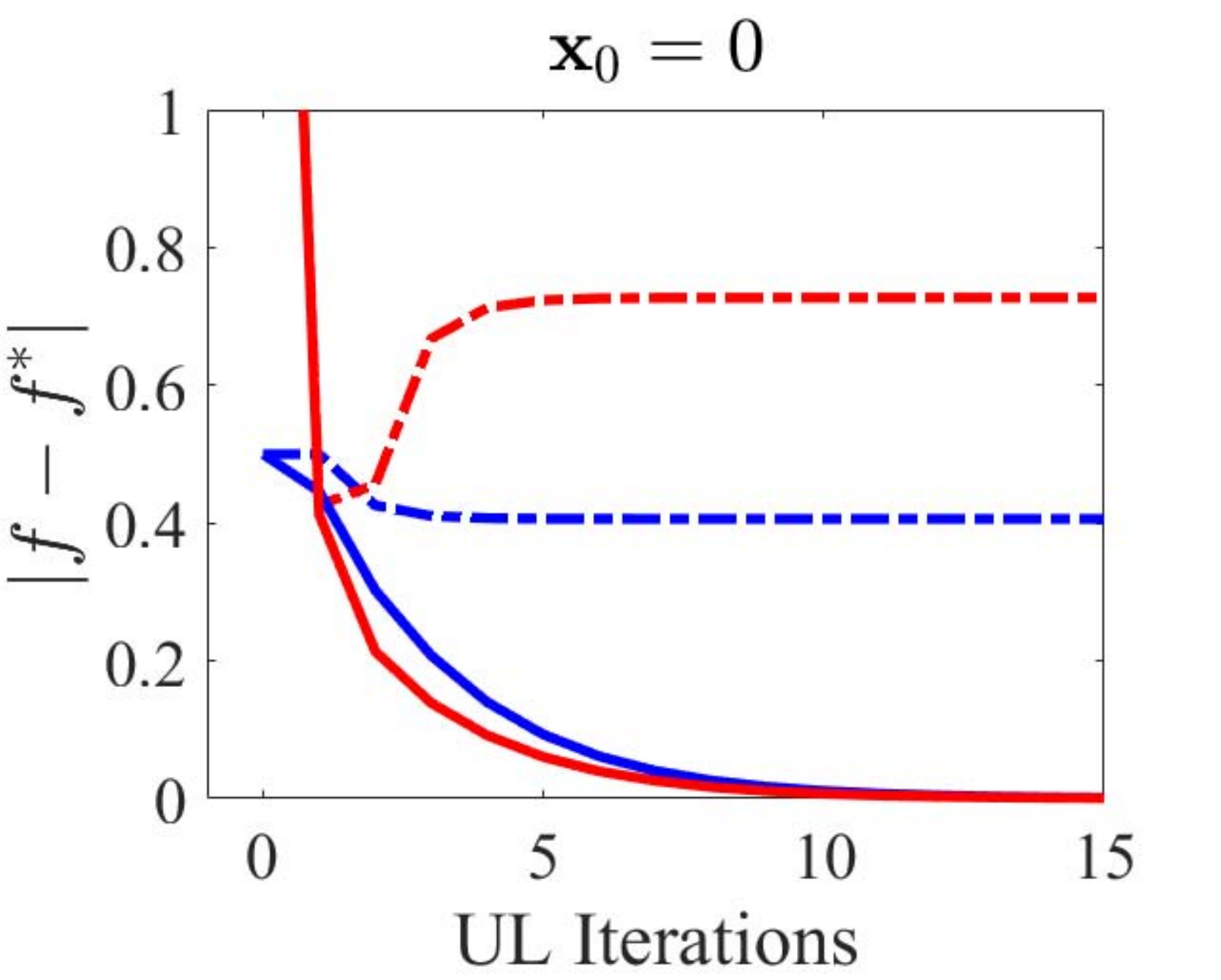}
		&\includegraphics[width=0.225\textwidth]{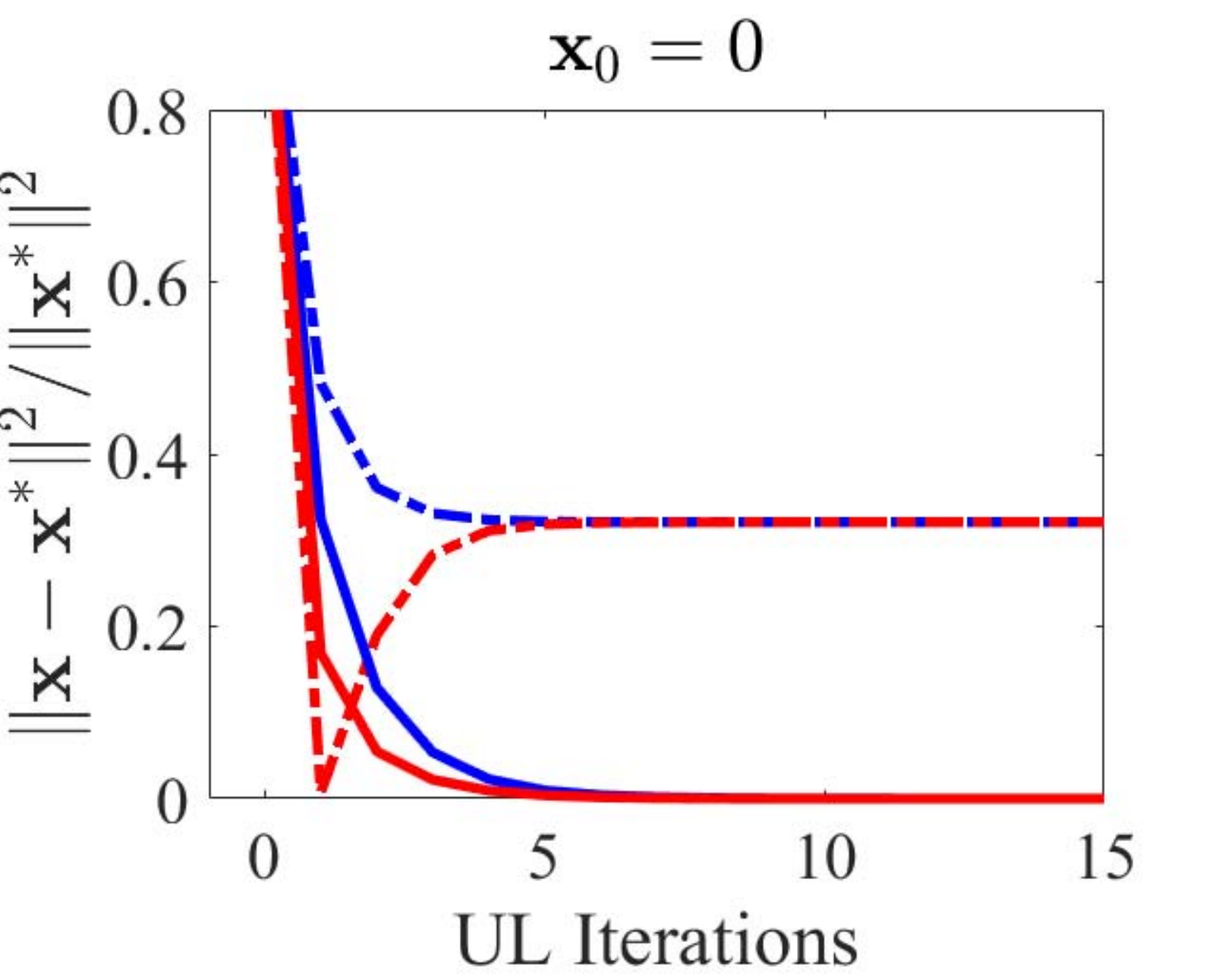}
		&\includegraphics[width=0.225\textwidth]{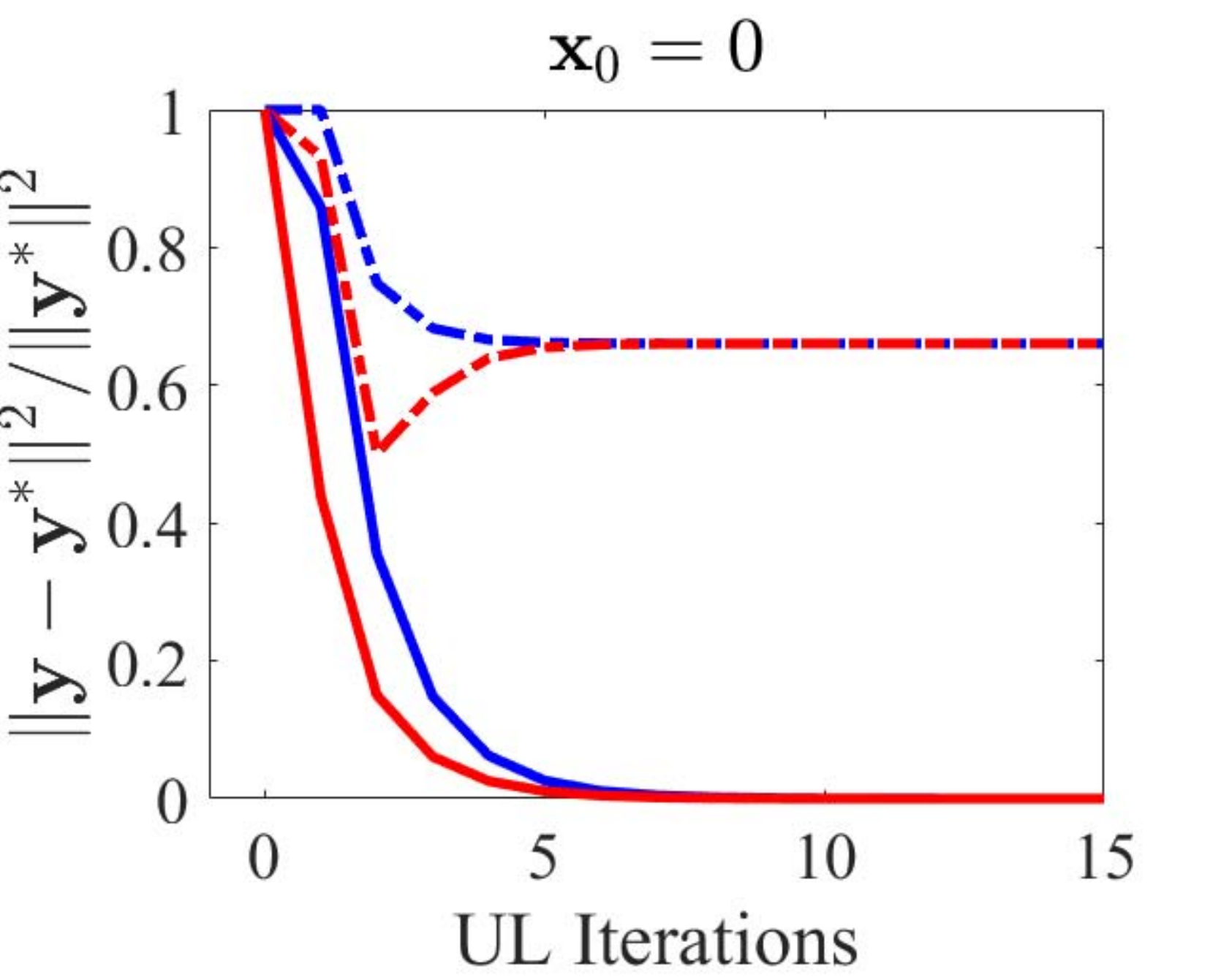}\\
		\includegraphics[width=0.225\textwidth]{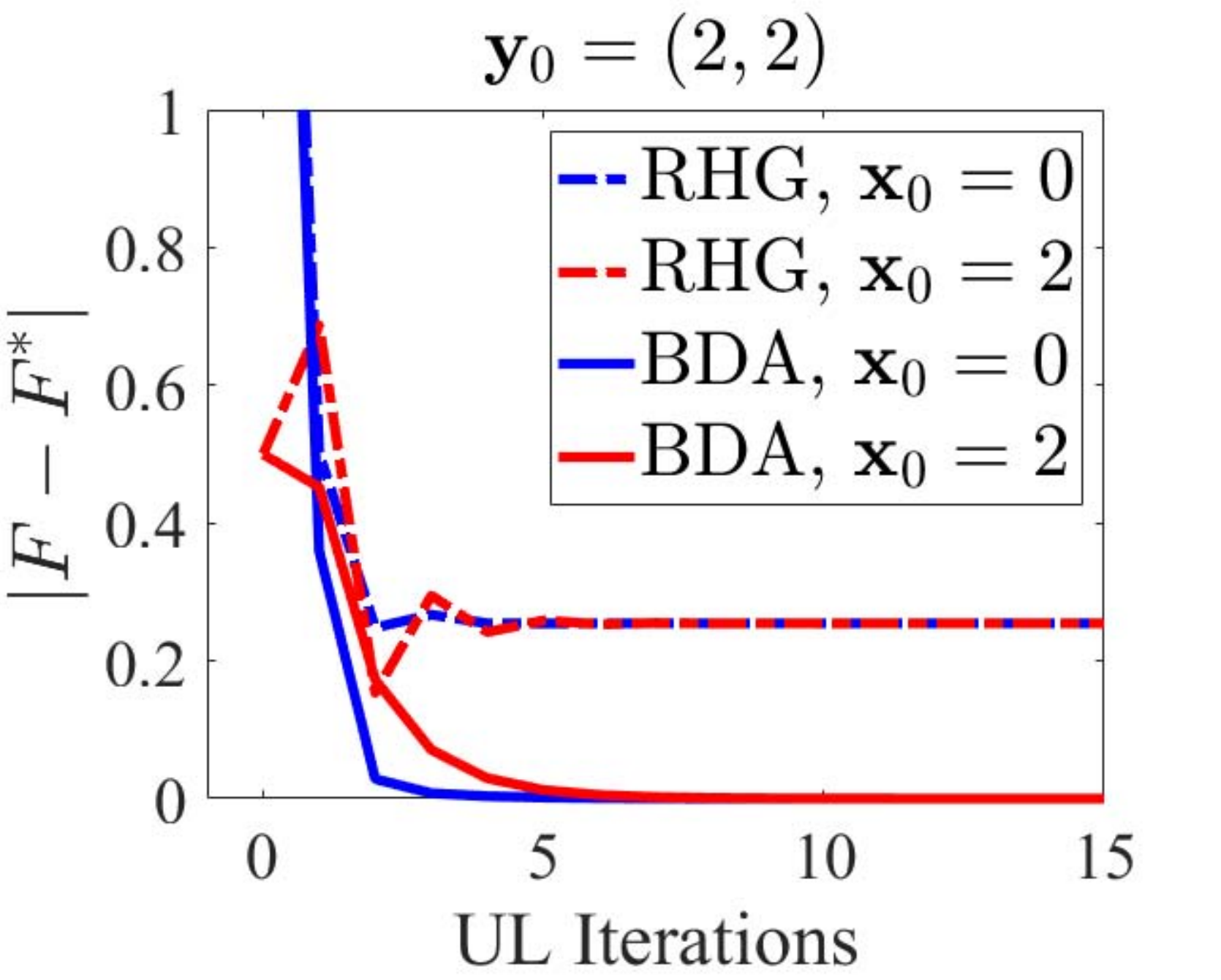}
		&\includegraphics[width=0.225\textwidth]{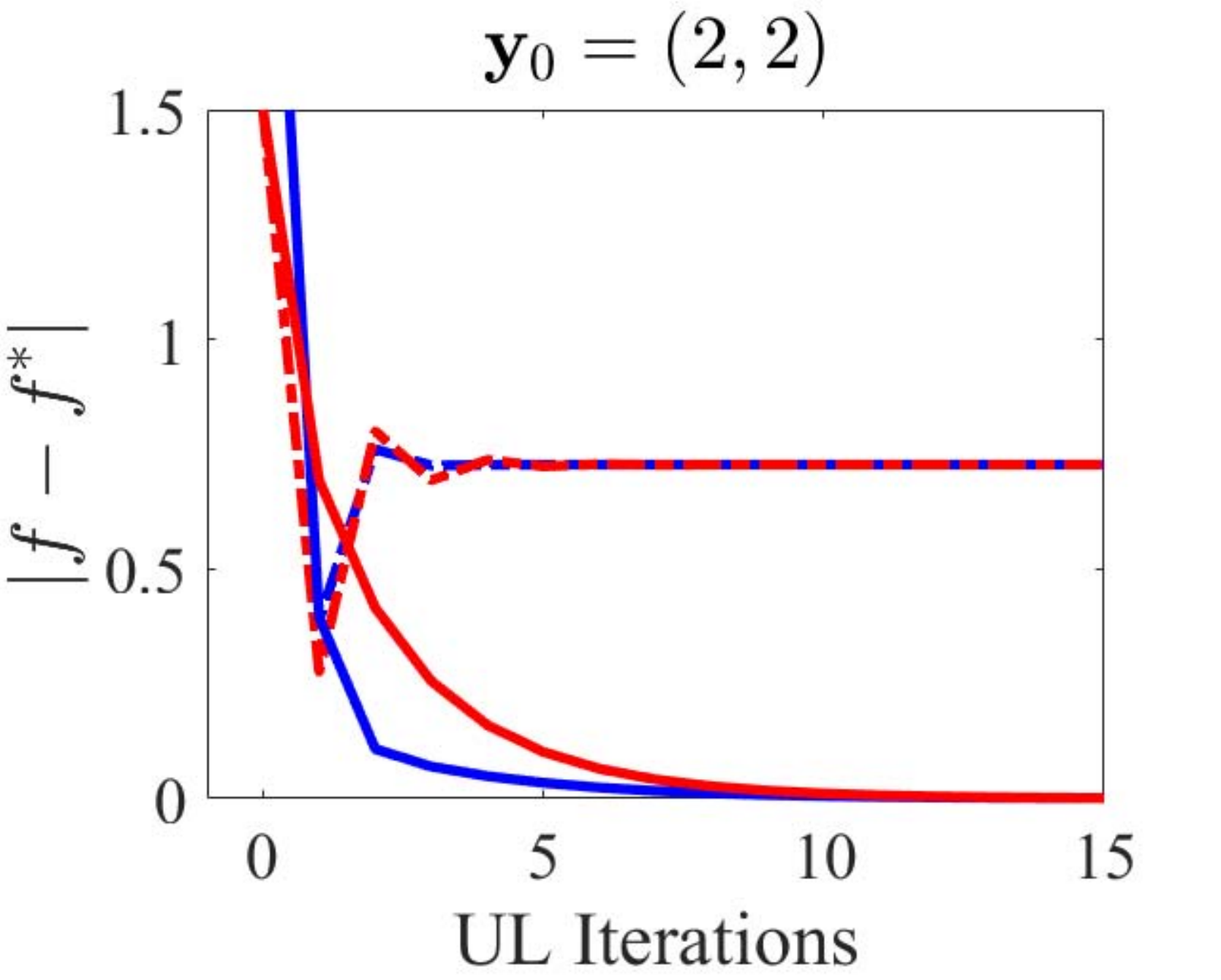}
		&\includegraphics[width=0.225\textwidth]{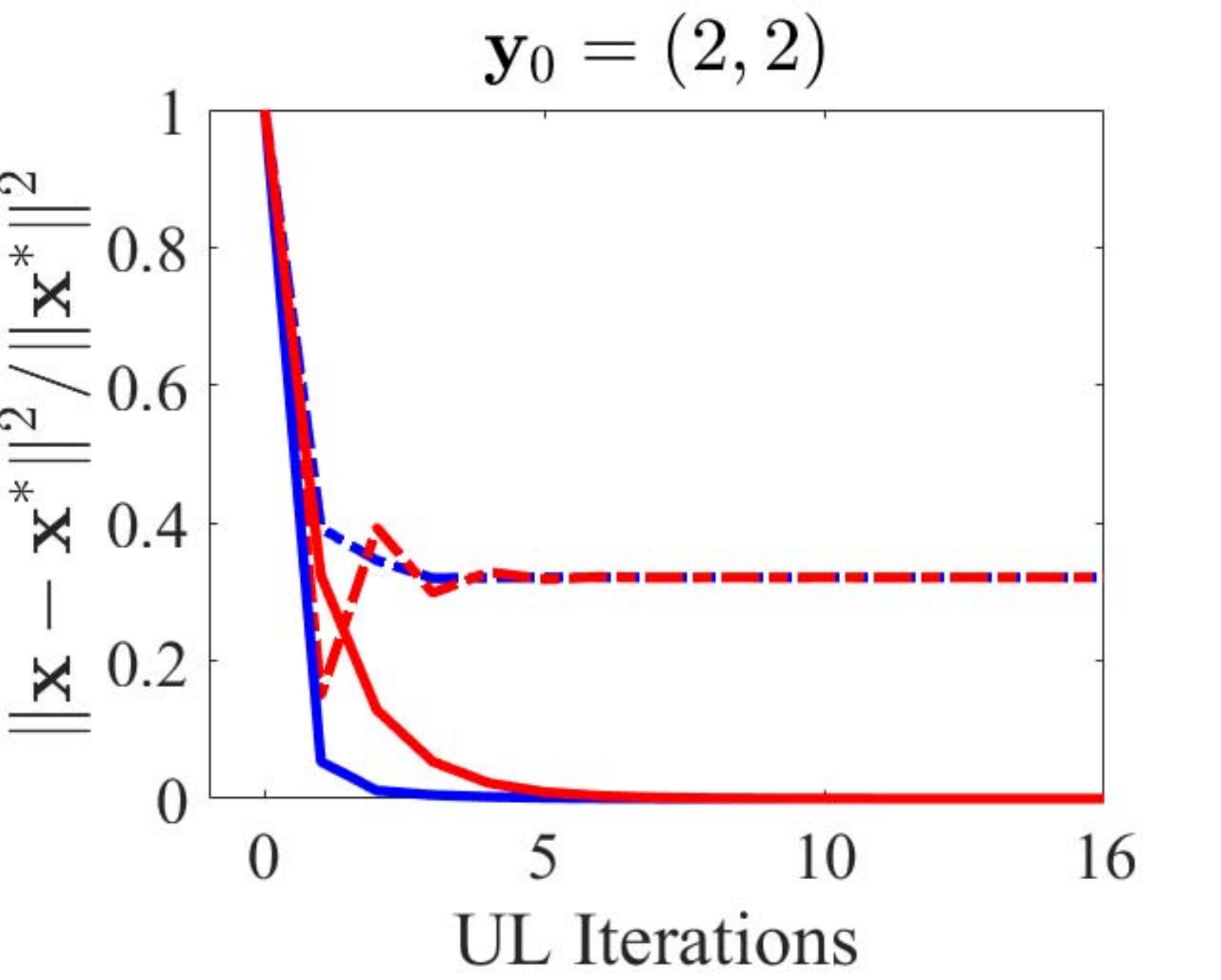}
		&\includegraphics[width=0.225\textwidth]{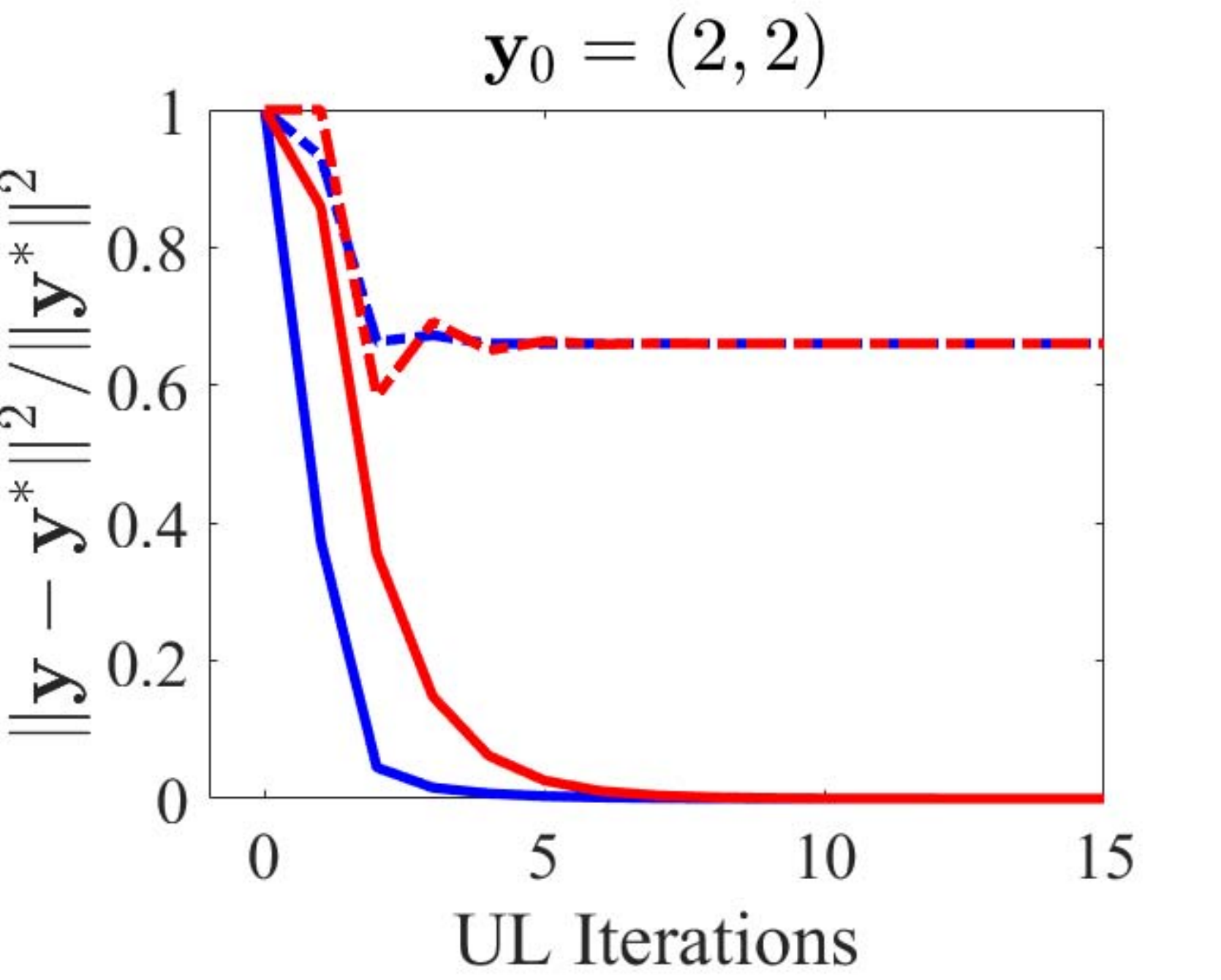}\\	
	\end{tabular}
	\caption{Illustrating the numerical performance of first-order BLPs algorithms with different initialization points. Top row: fix $\x_0=0$ and vary $\y_0=(0,0), (2,2)$. Bottom row: fix $\y_0=(2,2)$ and vary $\x_0=0, 2$. We fix $K=16$ for UL iterations. The dashed and solid curves denote the results of RHG and BDA, respectively. The legend is only plotted in the first subfigure.} \label{fig:toy_initial}
\end{figure*}

\begin{figure*}[t]
	\centering \begin{tabular}{c@{\extracolsep{0.2em}}c@{\extracolsep{0.2em}}c@{\extracolsep{0.2em}}c}
		\includegraphics[width=0.225\textwidth]{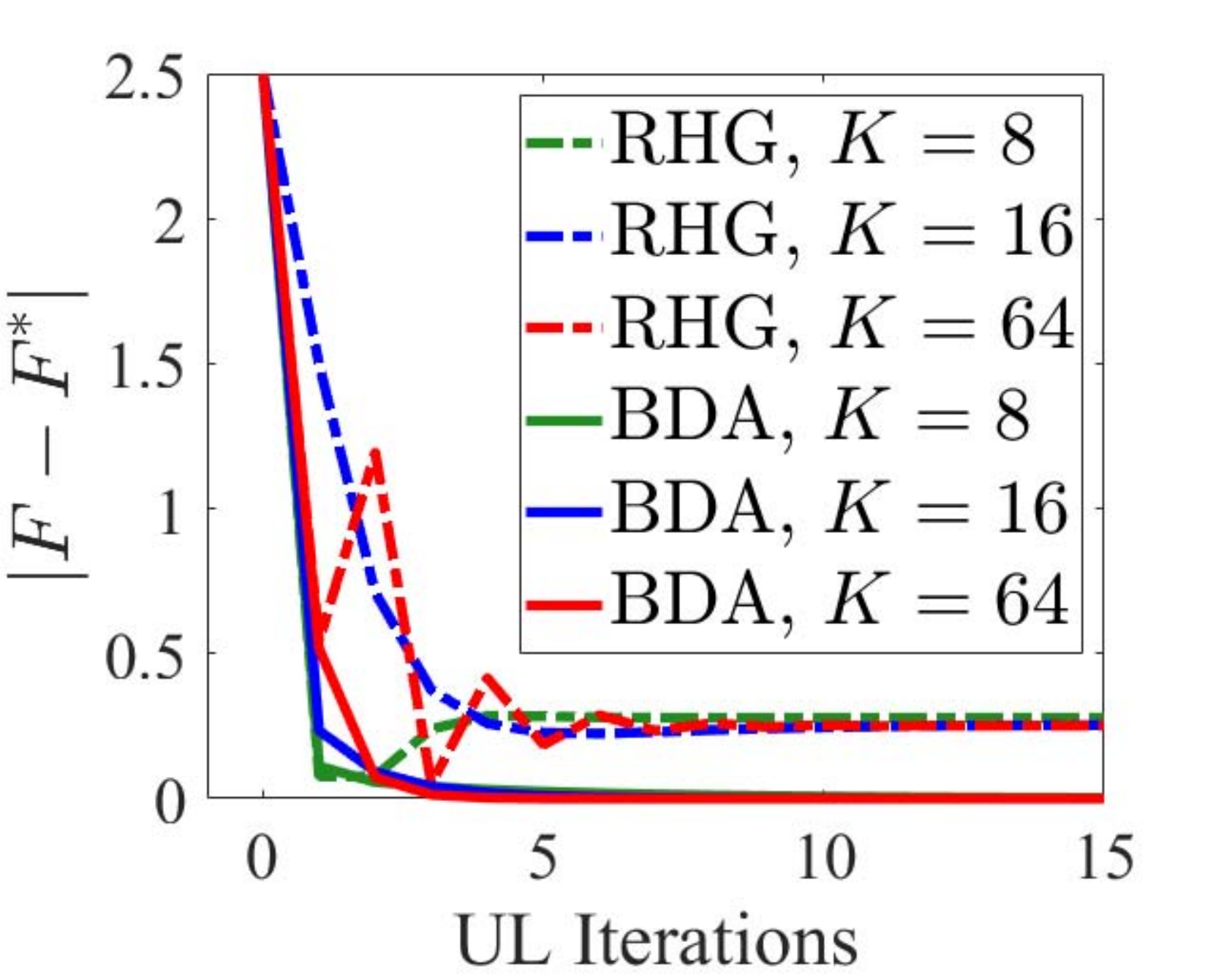}
		& \includegraphics[width=0.225\textwidth]{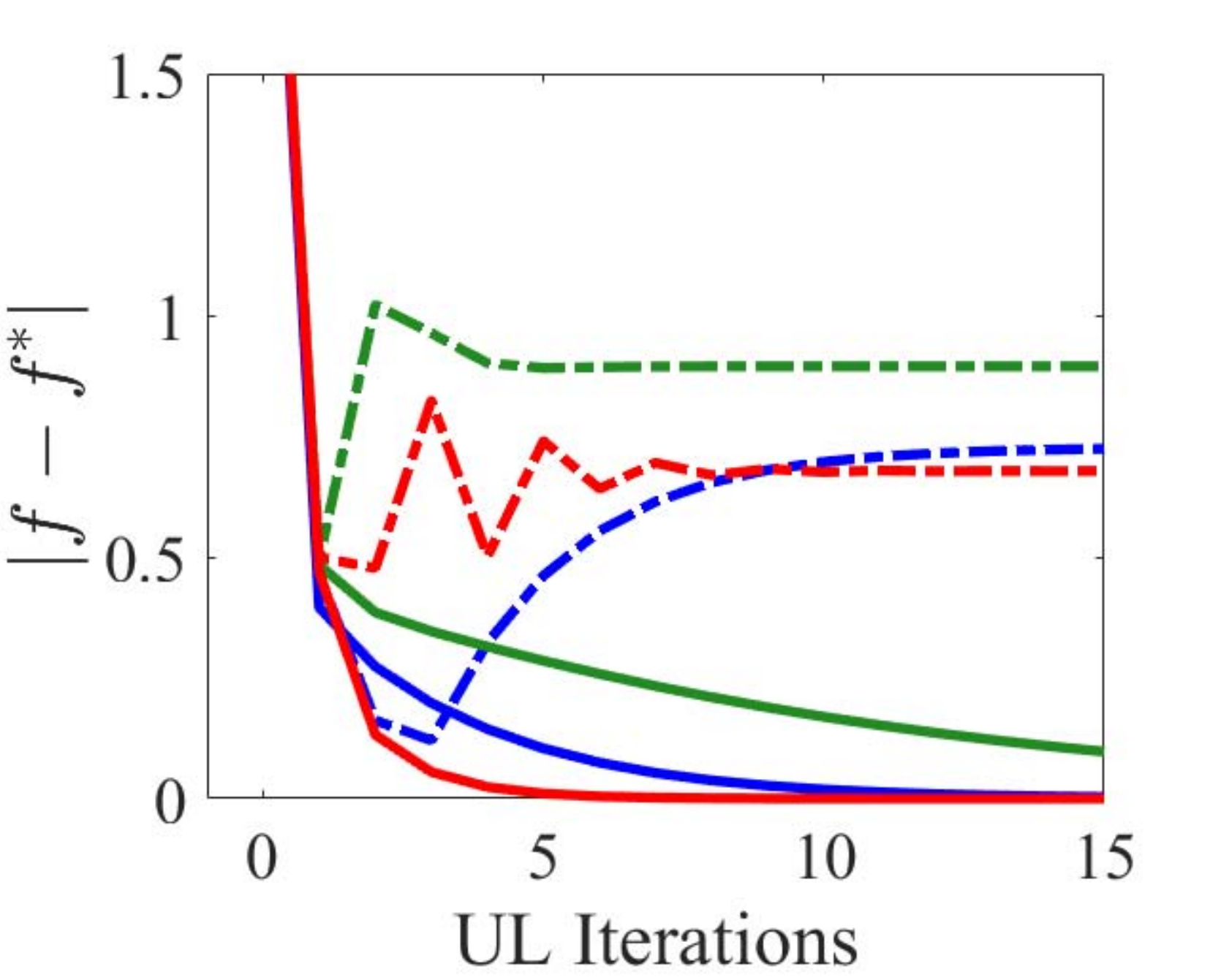}
		& \includegraphics[width=0.225\textwidth]{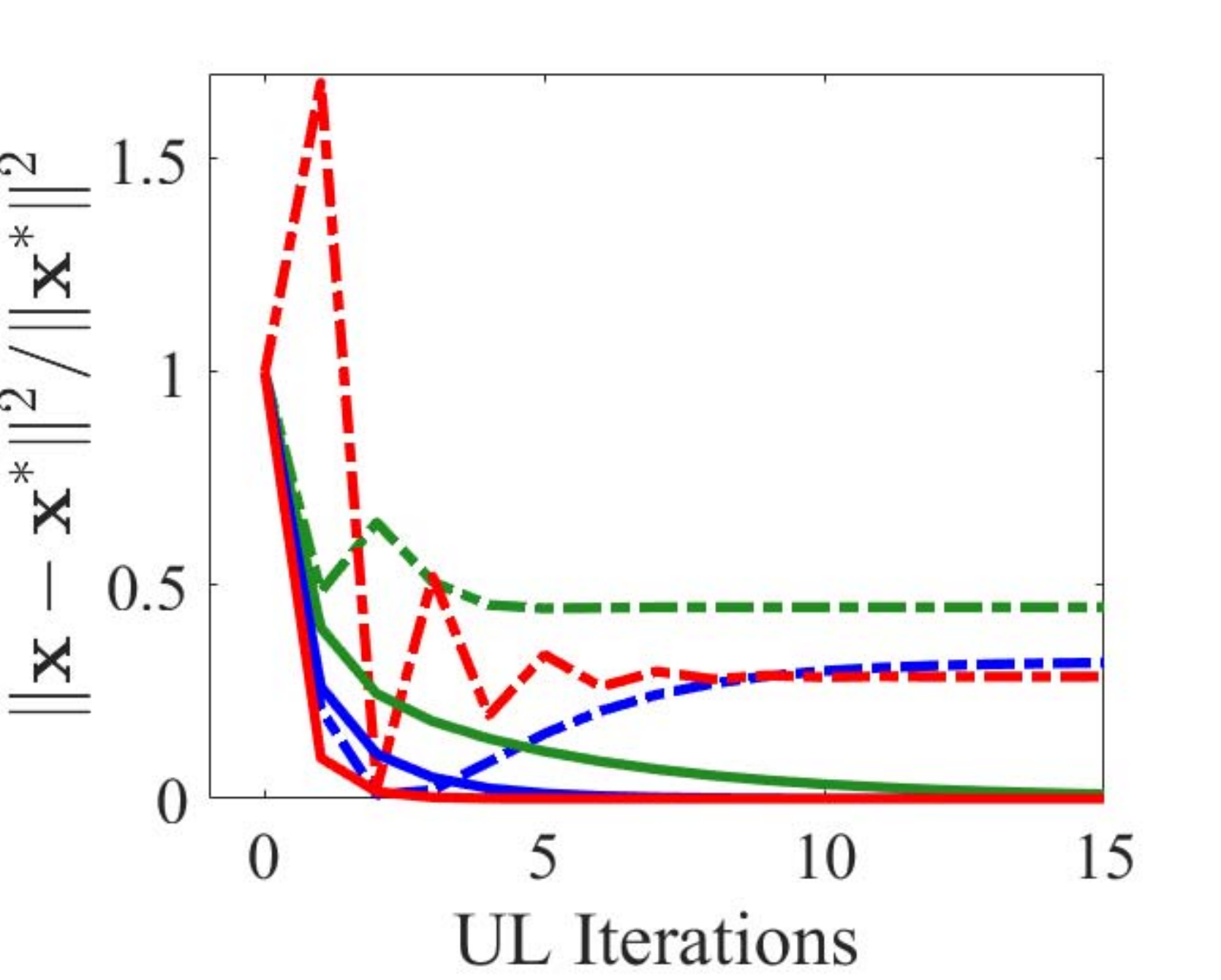}
		& \includegraphics[width=0.225\textwidth]{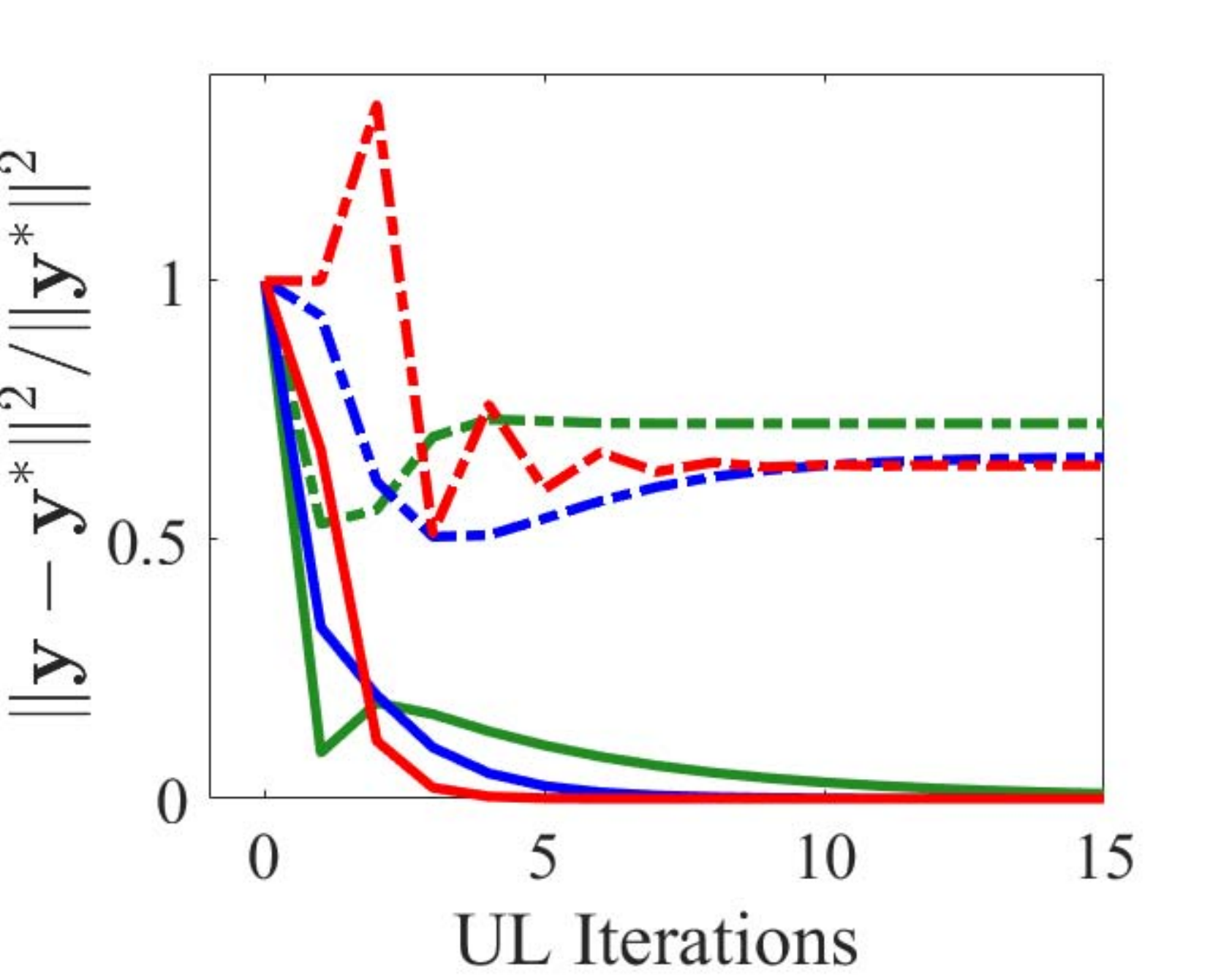}\\
	\end{tabular}
	\caption{Illustrating the numerical performance of first-order BLPs algorithms with different LL iterations (i.e., $K=8,16,64$). We fix initialization as $\mathbf{x}_0=0$ and $\mathbf{y}_0=(2,2)$. The dashed and solid curves denotes the results of RHG and BDA, respectively. The legend is only plotted in the first subfigure.} \label{fig:toy_inner}
\end{figure*}

\section{Experimental Results}\label{sec:exp}

In this section, we first verify the theoretical findings and then evaluate the performance of our proposed method on different problems, such as hyper-parameter optimization and meta learning. We conducted these experiments on a PC with Intel Core i7-7700 CPU (3.6 GHz), 32GB RAM and an NVIDIA GeForce RTX 2060 6GB GPU.

\subsection{Synthetic BLPs}

Our theoretical findings are investigated based on the synthetic BLPs described in Section~\ref{sec:ce}. As stated above, this deterministic bi-level formulation satisfies all the assumptions required in Section~\ref{sec:theory}, but it cannot meet the LLS condition considered in ~\cite{finn2017model,franceschi2017forward,franceschi2018bilevel,shaban2018truncated}. Here, we fix the learning rate parameters $s_u=0.7$ and $s_l=0.2$ in this experiment.

In Figure~\ref{fig:toy_initial}, we plotted numerical results of BDA and one of the most representative bi-level FOMs (i.e., Reverse Hyper-Gradient (RHG)~\cite{franceschi2017forward,franceschi2018bilevel}) with different initialization points. We considered different numerical metrics, such as $|F-F^*|$, $|f-f^*|$, ${\|\x-\x^*\|^2/\|\x^*\|^2}$, and ${\|\y-\y^*\|^2/\|\y^*\|^2}$, for evaluations. It can be observed that RHG is always hard to obtain correct solution, even start from different initialization points. This is mainly because that the solution set of the LL subproblem in Eq.~\eqref{eq:ce} is not a singleton, which does not satisfy the fundamental assumption of RHG. In contrast, our BDA aggregated the UL and LL information to perform the LL updating, thus we are able to obtain true optimal solution in all these scenarios. The initialization actually only slightly affected on the convergence speed of our iterative sequences.

Figure~\ref{fig:toy_inner} further plotted the convergence behaviors of BDA and RHG with different LL iterations (i.e., $K$). We observed that the results of RHG cannot be improved by increasing $K$. But for BDA, the three iterative sequences (with $K=8,16,64$) are always converged and the numerical performance can be improved by performing relatively more LL iterations. In the above two figures, we set $\alpha_k=0.5/k$, $k= 1,\cdots, K$. 

Figure~\ref{fig:toy_alpha} evaluated the convergence behaviors of BDA with different choices of $\alpha_k$. By setting $\alpha_k=0$, we was unable to use the UL information to guide the LL updating, thus it is hard to obtain proper feasible solutions for the UL subproblem. When choosing a fixed $\alpha_k$ in $(0,1)$ (e.g., $\alpha_k=0.5$), the numerical performance can be improved but the convergence speed was still slow. Fortunately, we followed our theoretical findings and introduced an adaptive strategy to incorporate UL information into LL iterations, leading to nice convergence behaviors for both UL and LL variables.

\subsection{Hyper-parameter Optimization} 

Hyper-parameter optimization aims choosing a set of optimal hyper-parameters for a given machine learning task. In this experiment, we consider a specific hyper-parameter optimization example, known as data hyper-cleaning~\cite{franceschi2017forward,shaban2018truncated}, to evaluate our proposed bi-level algorithm. In this task, we need to train a linear classifier on a given image set, but part of the training labels are corrupted. Following~\cite{franceschi2017forward,shaban2018truncated}, here we consider softmax regression (with parameters $\y$) as our classifier and introduce hyper-parameters $\x$ to weight samples for training.

Specifically, let $\ell(\y;\u_i,\v_i)$ be the cross-entropy function with the classification parameter $\y$ and data pairs $(\u_i,\v_i)$ and denote $\mathcal{D}_{\mathtt{tr}}$ and $\mathcal{D}_{\mathtt{val}}$ as the training and validation sets, respectively. Then we can define the LL objective as the following weighted training loss:  $$f(\x,\y)=\sum_{(\u_i,\v_i)\in\mathcal{D}_{\mathtt{tr}}}[\sigma(\x)]_i\ell(\y;\u_i,\v_i),$$ where $\x$ is the hyper-parameter vector to penalize the objective for different training samples. Here $\sigma(\x)$ denotes the element-wise sigmoid function on $\x$ and is used to constrain the weights in the range $[0,1]$. For the UL subproblem, we define the objective as the cross-entropy loss with $\ell_2$ regularization on the validation set, i.e., $$F(\x,\y)=\sum_{(\u_i,\v_i)\in\mathcal{D}_{\mathtt{val}}}\ell(\y(\x);\u_i,\v_i)+\lambda\|\y(\x)\|^2,$$ where $\lambda>0$ is the trade-off parameter and  fixed as $10^{-4}$.

\begin{figure}[h]
	\centering \begin{tabular}{c@{\extracolsep{0.2em}}c}
		\includegraphics[width=0.23\textwidth]{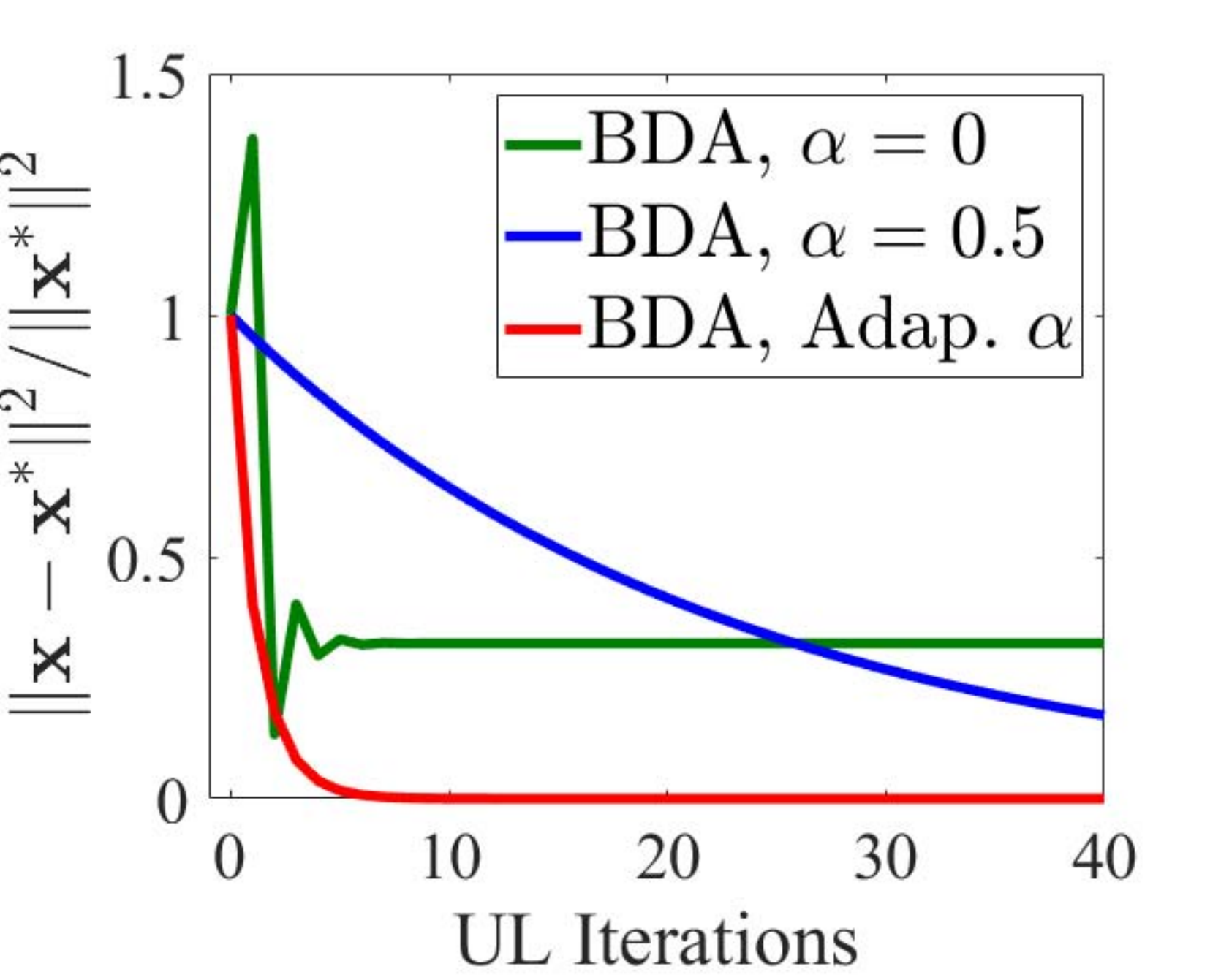}
		& \includegraphics[width=0.23\textwidth]{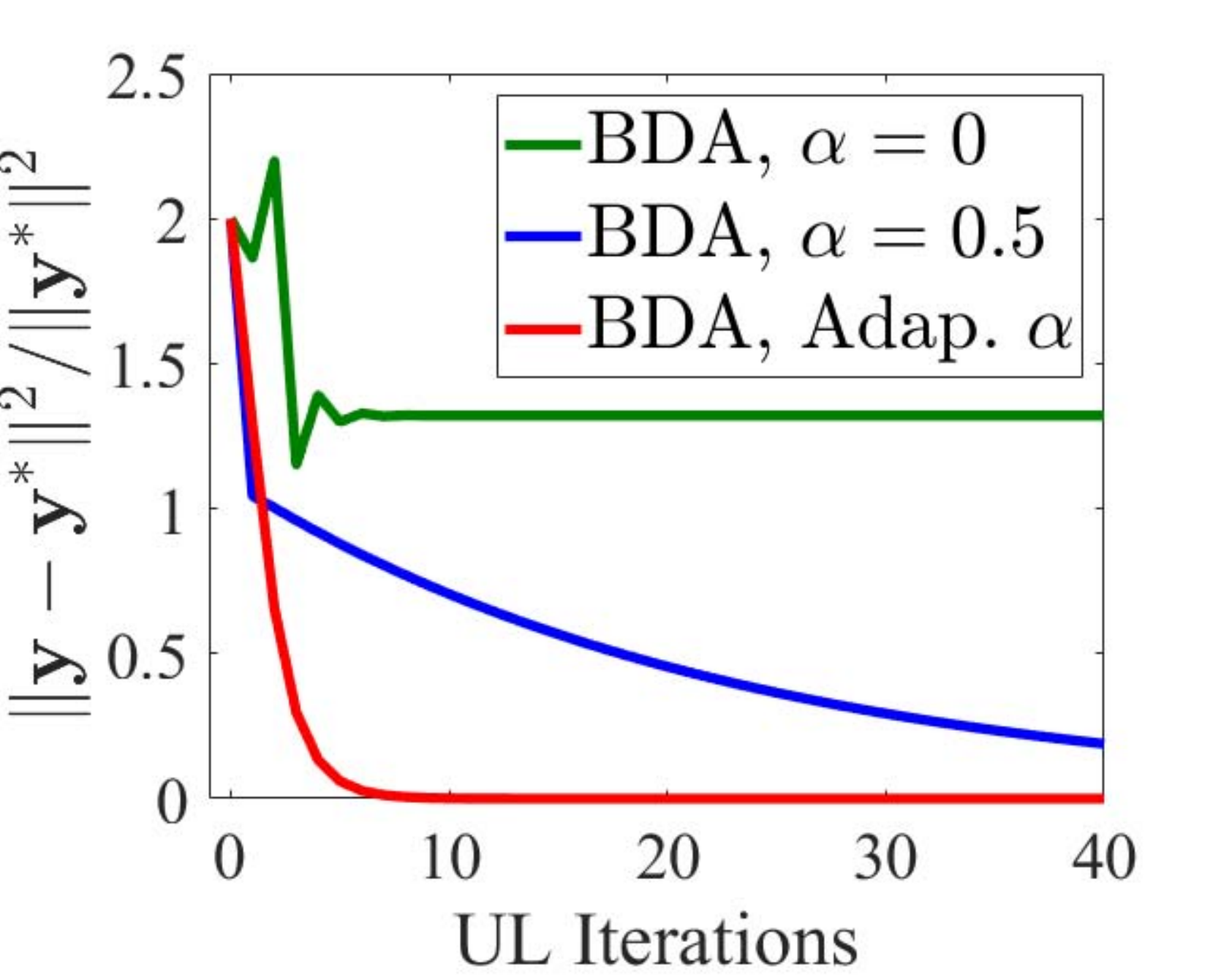}\\
	\end{tabular}
	\caption{Illustrating the numerical performance of BDA with fixed $\alpha_k$ (e.g., $\alpha_k=0, 0.5$) and adaptive $\alpha_k$ (e.g., $\{\alpha_k=0.9/k\}$, denoted as ``Adap. $\alpha$''). The initialization and LL iterations are fixed as $\mathbf{x}_0=0$, $\mathbf{y}_0=(2,2)$, and $K=16$, respectively. } \label{fig:toy_alpha}
\end{figure}

\begin{table}[t]
	\caption{Data hyper-cleaning accuracy of the compared methods with different number of LL iterations (i.e., $K=50, 100, 200, 400, 800$) on MNIST.}
	\label{tab:HyperClearning_1}
	\centering
	\vskip 0.12in
	\begin{tabular}{ | c | c | c | c | c | c |}
		\hline
		\multicolumn{1}{|c|}{\multirow{2}*{Method}}
		& \multicolumn{5}{c|}{No. of LL Iterations ($K$)}\\
		\cline{2-6}
		& 50 & 100 & 200 & 400 & 800\\
		\hline\hline
		RHG  & 88.96 &  89.73 & 90.13 &  90.19 & 90.15  \\
		\hline
		T-RHG  & 87.90 & 88.28 & 88.50 & 88.52 & 89.99 \\
		\hline		
		BDA & \textbf{89.12} & \textbf{90.12} & \textbf{90.57} & \textbf{90.81}& \textbf{90.86}\\
		\hline
	\end{tabular}
\end{table}

We applied our BDA together with the baselines, RHG and Truncated RHG (T-RHG)~\cite{shaban2018truncated}, to solve the above BLPs problem on MNIST database~\cite{lecun1998gradient}. Both the training and the validation sets consist of 7000 class-balanced samples and the remaining 56000 samples are used as the test set. We adopted the architectures used in RHG as the feature extractor for all the compared methods. For T-RHG, we chose $25$-step truncated back-propagation to guarantee its convergence. Table~\ref{tab:HyperClearning_1} reported the averaged accuracy for all these compared methods with different number of LL iterations (i.e., $K=50, 100, 200, 400, 800$). We observed that RHG outperformed T-RHG. While BAD consistently achieved the highest accuracy. Our theoretical results suggested that most of the improvements in BDA should come from the aggregations of the UL and LL information. The results also showed that more LL iterations are able to improve the final performances in most cases.

\subsection{Meta Learning}

The aim of meta learning is to learn an algorithm that should work well on novel tasks. In particular, we consider the few-shot learning problem \cite{vinyals2016matching,qiao2018few}, where each task is a $N$-way classification and it is to learn the hyper-parameter $\x$  such that each task can be solved only with $M$ training samples (i.e., $N$-way $M$-shot). 

Following the experimental protocol used in recent works, we separate the network architecture into two parts: the cross-task intermediate representation layers (parameterized by $\x$) outputs the meta features and the multinomial logistic regression layer (parameterized by $\y^j$) as our ground classifier for the $j$-th task. We also collect a meta training data set $\mathcal{D}=\{\mathcal{D}^j\}$, where $\mathcal{D}^j=\mathcal{D}_{\mathtt{tr}}^j\cup\mathcal{D}_{\mathtt{val}}^j$ is linked to the $j$-th task. Then for the $j$-th task, we consider the cross-entropy function $\ell(\x,\y^j;\mathcal{D}_{\mathtt{tr}}^j)$ as the task-specific loss and thus the LL objective can be defined as 
\begin{equation*}
\begin{array}{c}
f(\x,\{\y^j\})=\sum\limits_{j}\ell(\x,\y^j;\mathcal{D}_{\mathtt{tr}}^j).
\end{array}
\end{equation*}
As for the UL objective, we also utilize cross-entropy function but define it based on $\{\mathcal{D}_{\mathtt{val}}^j\}$ as 
\begin{equation*}
\begin{array}{c}
F(\x,\{\y^j\})=\sum\limits_{j}\ell(\x,\y^j;\mathcal{D}_{\mathtt{val}}^j).
\end{array}
\end{equation*}

Our experiments are conducted on two widely used benchmarks, i.e., Ominglot~\cite{lake2015human}, which contains 1623 hand written characters from 50 alphabets and MiniImageNet~\cite{vinyals2016matching}, which is a subset of ImageNet~\cite{deng2009imagenet} and includes 60000 downsampled images from 100 different classes. We followed the experimental protocol used in MAML~\cite{finn2017model} and compared our BDA to several state-of-the-art approaches, such as MAML~\cite{finn2017model}, Meta-SGD~\cite{li2017meta}, Reptile~\cite{nichol2018first}, RHG, and T-RHG.
\begin{table}[t]
	\caption{The averaged few-shot classification accuracy on Omniglot ($N=5,20$ and $M=1,5$). }
	\label{tab:omniglot}
	\centering 	
	\vskip 0.12in
	\begin{tabular}{|p{1.6cm}<{\centering}|p{1.2cm}<{\centering}|p{1.0cm}<{\centering}|p{1.2cm}<{\centering}|p{1.0cm}<{\centering}|}
		\hline
		\multirow{2}*{Method}
		&\multicolumn{2}{c|}{ $5$-way}&\multicolumn{2}{c|}{$20$-way}\\
		\cline{2-5}
		& $1$-shot & $5$-shot & $1$-shot & $5$-shot\\
		\hline\hline
		MAML &98.70&\textbf{99.91}&95.80&98.90 \\
		\hline
		Meta-SGD&97.97&98.96&93.98&98.40\\
		\hline
		Reptile &97.68&99.48 &89.43&97.12\\
		\hline\hline
		RHG&98.60 &99.50&95.50&98.40\\
		\hline
		T-RHG  & 98.74 & 99.52 & 95.82 & 98.95\\
		\hline
		BDA&\textbf{99.04}&99.62&\textbf{96.50}&\textbf{99.10}\\
		\hline
	\end{tabular}
\end{table}
\begin{table}[h]
	\caption{The few-shot classification performances on MiniImageNet ($N=5$ and $M=1$). The second column reported the averaged accuracy after converged. The rightmost two columns compared the UL Iterations (denoted as ``UL Iter.''), when achieving almost the same accuracy ($\approx 44\%$). Here  ``Ave. $\pm$ Var. (Acc.)'' denotes the averaged accuracy and the corresponding variance.}
	\label{tab:mini}
	\centering 
	\vskip 0.12in
	\begin{tabular}{|c | c ||c| c |}
		\hline
		Method & Acc. & Ave. $\pm$ Var. (Acc.)  & UL Iter. \\
		\hline\hline
		RHG&$48.89$  & 44.46 $\pm$ 0.78 & 3300\\
		\hline
		T-RHG & $47.67$ & 44.21 $\pm$ 0.78 & 3700\\
		\hline
		BDA & \textbf{49.08} & 44.24 $\pm$ 0.79 & \textbf{2500} \\
		\hline
	\end{tabular}
\end{table}
\begin{figure}[t]
	\centering \begin{tabular}{c@{\extracolsep{0.2em}}c}
		\includegraphics[width=0.23\textwidth]{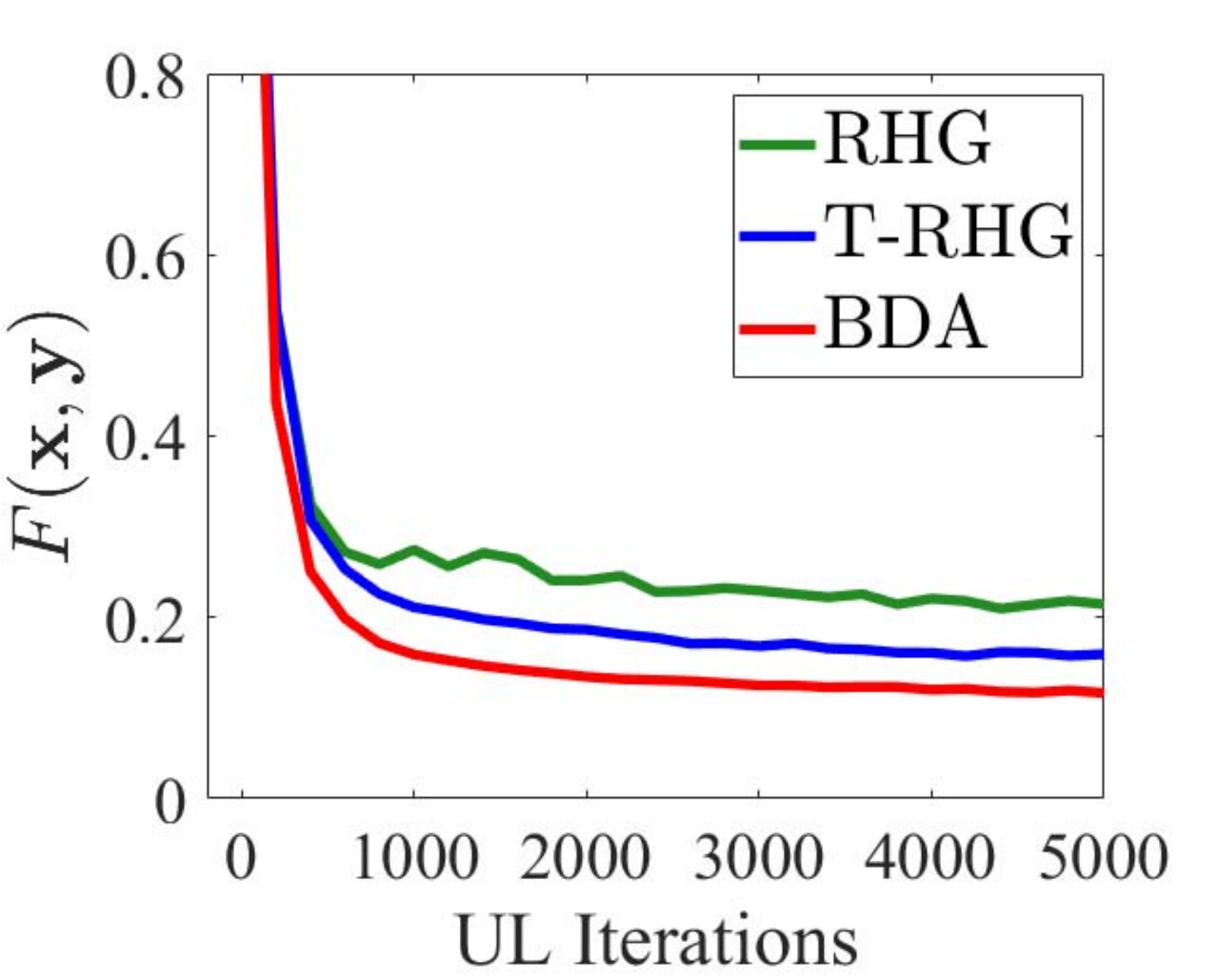}
		&\includegraphics[width=0.23\textwidth]{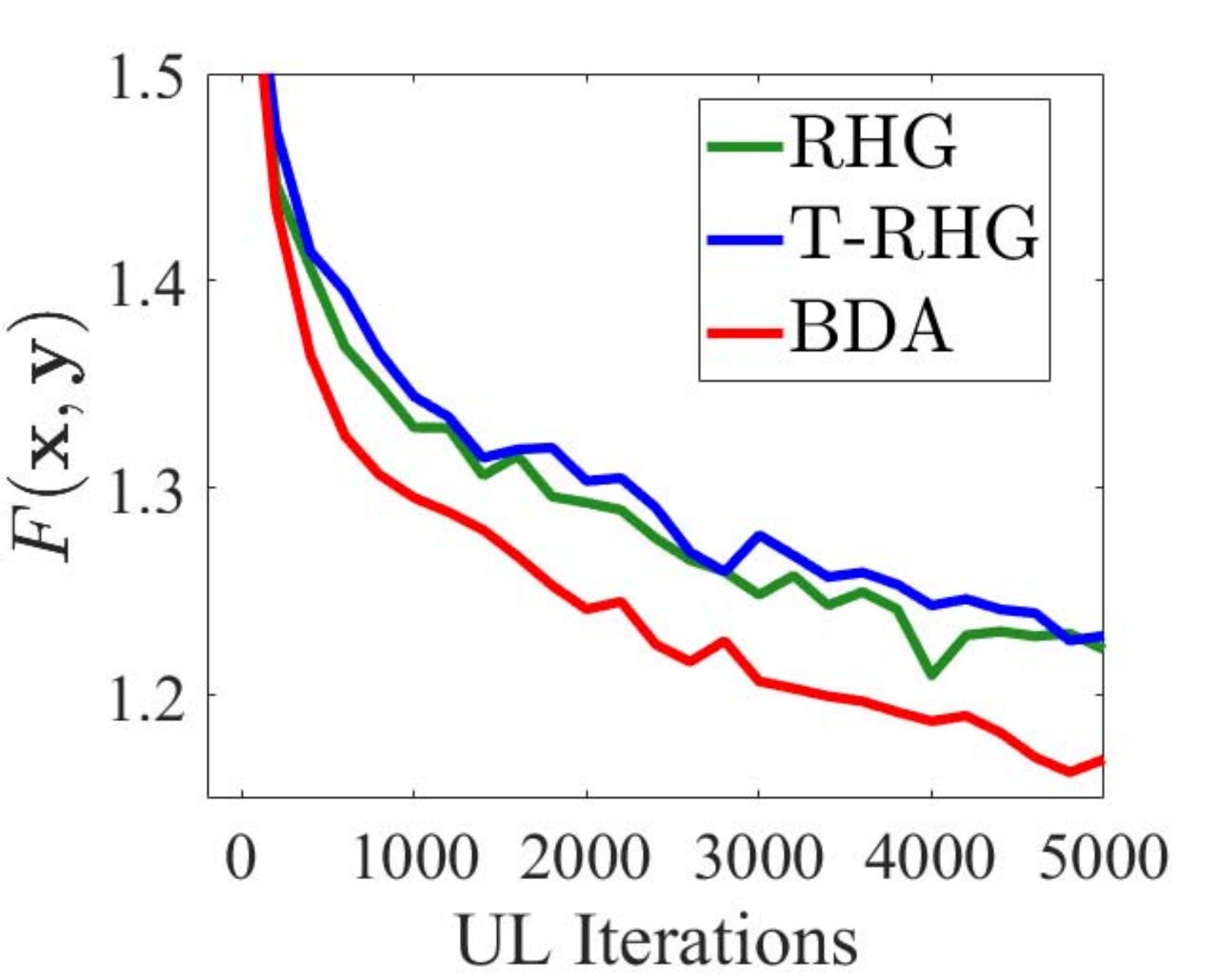}\\
	\end{tabular}
	\caption{Illustrating the validation loss (i.e., UL objectives $F(\mathbf{x},\mathbf{y})$) for three BLPs based methods on few-shot classification task. The curves in left and right subfigures are based on 5-way 1-shot results in Tables~\ref{tab:omniglot} and \ref{tab:mini}, respectively.}
	\label{fig:few-shot}
\end{figure}

It can be seen in Table~\ref{tab:omniglot} that BDA compared well to these methods and achieved the highest classification accuracy except in the 5-way 5-shot task. In this case, practical performance of BDA was slightly worse than MAML. We further conducted experiments on the more challenging MiniImageNet data set. In the second column of Table~\ref{tab:mini}, we reported the averaged accuracy of three first-order BLPs based methods (i.e., RHG, T-RHG and BDA). Again, the performance of BDA is better than RHG and T-RHG. In the rightmost two columns, we also compared the number of averaged UL iterations when they achieved almost the same accuracy ($\approx 44\%$). These results showed that BDA needed the fewest iterations to achieve such accuracy. The validation loss on Omnglot and MiniImageNet about 5-way 1-shot are plotted in Figure.~\ref{fig:few-shot}

\section{Conclusions}
	
The proposed BDA is a generic first-order algorithmic scheme to address BLPs. We first designed a counter-example to indicate that the existing bi-level FOMs in the absence of the LLS condition may lead to incorrect solutions. Considering BLPs from the optimistic bi-level viewpoint, BDA could reformulate the original models in Eqs.~\eqref{eq:blp}-\eqref{eq:lower-level} as the composition of a single-level subproblem (w.r.t. $\x$) and a simple bi-level subproblem (w.r.t., $\y$). We established a general proof recipe for bi-level FOMs and proved the convergence of BDA without the LLS assumption. As a nontrivial byproduct, we further improved convergence results for those existing schemes. Extensive evaluations showed the superiority of BDA for different applications.

\section*{Acknowledgements}

This work was supported by the National Natural Science Foundation of China (Nos. 61922019, 61672125, 61733002, 61772105 and 11971220), LiaoNing Revitalization Talents Program (XLYC1807088), the Fundamental Research Funds for the Central Universities and the Natural Science Foundation of Guangdong Province 2019A1515011152. This work was also supported by the General Research Fund 12302318 from Hong Kong Research Grants Council. 
	

\nocite{langley00}

\bibliography{reference}
\bibliographystyle{icml2020}

\appendix

\section*{Appendix}

This supplemental material is organized as follows. In Section~\ref{sec:proofThm1}, we present the detailed proof of subsection 4.1. Section~\ref{sec:profTheorem2} and Section~\ref{sec:profTheorem3} provide detailed proofs of subsection 4.2 and subsection 4.3 respectively. Section~\ref{sec:nonsmooth} then proves some extended theoretical results, including the local convergence behaviors of BDA, the algorithmic scheme and convergence properties of BDA for BLPs with nonsmooth LL objective.

\section{Proof of Section 4.1 }\label{sec:proofThm1}



\subsection{Proof of Theorem 1}

\begin{proof}
	Since $\X$ is compact, we can assume without loss of generality that $\x_K \rightarrow \bar{\x} \in \X$. For any $\epsilon > 0$, there exists $k(\epsilon) > 0$ such that whenever $K > k(\epsilon) $, we have
	\begin{equation*}
	\sup_{\x \in \X} \mathtt{dist}(\y_{K}(\x),\S(\x)) \le \frac{\epsilon}{2L_0}.
	\end{equation*}
	Thus, for any $\x \in \X$, there exists $\y^*(\x) \in \S(\x)$ such that
	\begin{equation*}
	\|\y_{K}(\x) - \y^*(\x)\| \le \frac{\epsilon}{L_0}.
	\end{equation*}
	Therefore, for any $\x \in \X$, we have 
	\begin{equation*}
	\begin{aligned}
	\varphi(\x) &= \inf_{\y \in \S(\x) } F(\x, \y) \\
	&\le F(\x, \y^*(\x)) \\
	&\le F(\x, \y_{K}(\x)) + L_0\|\y_{K}(\x) - \y^*(\x)\|\\
	&\le \varphi_K(\x) + \epsilon.
	\end{aligned}
	\end{equation*}
	This implies that,
	for any $\epsilon > 0$, there exists $k(\epsilon) > 0$ such that whenever $K> k(\epsilon) $,it holds
	\begin{equation*}
	\varphi(\x_{K}) \le \varphi_K(\x_{K}) + \epsilon \le \varphi_K(\x) + \epsilon,\quad \forall \x \in \X.
	\end{equation*}
	Taking $K \rightarrow \infty$ and by the LSC of $\varphi$, we have
	\begin{equation*}
	\begin{aligned}
	\varphi(\bar{\x}) &\le \liminf_{K \rightarrow \infty}\varphi(\x_{K})\\
	& \le \liminf_{K\to\infty}\varphi_K(\x_{K}) + \epsilon \\
	& \le \lim_{K \rightarrow \infty}\varphi_K(\x) + \epsilon = \varphi(\x) + \epsilon, \quad \forall \x \in \X.
	\end{aligned}
	\end{equation*}
	By taking $\epsilon \rightarrow 0$, we have
	\begin{equation*}
	\varphi(\bar{\x}) \le \varphi(\x), \quad \forall \x \in \X,
	\end{equation*}
	which implies $\bar{\x} \in \arg\min_{\x \in \X} \varphi(\x)$. 
	
	We next show that $\inf_{\x \in \X}\varphi_K(\x) \rightarrow \inf_{\x \in \X} \varphi(\x)$ as $K \rightarrow \infty$. If this is not true, then there exist $\delta > 0$ and sequence $\{l \} \subseteq \mathbb{N}$ such that
	\begin{equation}\label{eq:contra_dist}
	\left|\inf_{\x \in \X}\varphi_{l}(\x) - \inf_{\x \in \X} \varphi(\x) \right| > \delta, \quad \forall l.
	\end{equation}
	For each $l$, there exists $\x_{l} \in \X$ such that $$\varphi_{l}(\x_{l}) \le \inf_{\x \in \X} \varphi_{l}(\x) + \delta/2.$$ Since $\X$ is compact, we can assume without loss of generality that $\x_{l} \rightarrow \tilde{\x} \in \X$. For any $\epsilon > 0$, there exists $k(\epsilon) > 0$ such that whenever $l > k(\epsilon) $, the following holds
	\begin{equation*}
	\begin{aligned}
	\varphi(\x_{l}) &\le \varphi_{l}(\x_{l}) + \epsilon \\
	&\le \inf_{\x \in \X} \varphi_{l}(\x) + \delta/2 + \epsilon \\
	&\le \varphi_{l}(\x) + \delta/2 + \epsilon, \quad \forall \x \in \X.
	\end{aligned}
	\end{equation*} 
	By taking $l \rightarrow \infty$ and with the LSC of $\varphi$, we have
	\begin{equation*}
	\begin{aligned}
	\varphi(\tilde{\x}) &\le \liminf_{{l} \rightarrow \infty}\varphi(\x_{l}) \\
	&\le \liminf_{l \rightarrow \infty}\left(\inf_{\x \in \X} \varphi_{l}(\x)\right) + \delta/2 + \epsilon \\
	&\le \limsup_{l \rightarrow \infty}\left(\inf_{\x \in \X} \varphi_{l}(\x)\right) + \delta/2 + \epsilon \\
	&\le \varphi(\x)+ \delta/2  + \epsilon, \quad \forall \x \in \X.
	\end{aligned}
	\end{equation*}
	Then, by taking $\epsilon \rightarrow 0$, we have
	\begin{equation*}
	\begin{aligned}
	\inf_{\x \in \X} \varphi(\x) &\le \liminf_{l \rightarrow \infty}\left(\inf_{\x \in \X} \varphi_{l}(\x)\right) + \delta/2 \\
	&\le \limsup_{l \rightarrow \infty}\left(\inf_{\x \in \X} \varphi_{l}(\x)\right) + \delta/2 \\
	& \le \inf_{\x \in \X} \varphi(\x)+ \delta/2,
	\end{aligned}
	\end{equation*}
	which implies a contradiction to Eq.~\eqref{eq:contra_dist}. Thus we have $\inf_{\x \in \X}\varphi_K(\x) \rightarrow \inf_{\x \in \X} \varphi(\x)$ as $K \rightarrow \infty$.
\end{proof}

\section{Proofs of Section 4.2}\label{sec:profTheorem2}


\subsection{Proof of Lemma \ref{lemma:tildeS_bounded}}

\begin{proof}
	We prove this result by providing a contradiction, that is, we have $\{\x^t\} \subseteq \X$ and $\y^t \in \tilde{\S}(\x^t)$ such that $\|\y^t\| \rightarrow +\infty$. As $\X$ is compact, we can assume without loss of generality that $\x^t \rightarrow \bar{\x} \in \X$. Since $F(\x,\y)$ is level-bounded in $\y$ locally uniformly in $\x\in\X$, we must have $\varphi(\x^t)  = F(\x^t,\y^t) \rightarrow + \infty$. On the other hand, for any $\epsilon > 0$, let $\bar{\y} \in \S(\bar{\x})$ satisfy $F(\bar{\x},\bar{\y})\le \varphi(\bar{\x}) + \epsilon$. As $F$ is continuous at $(\bar{\x},\bar{\y})$, there exists $\delta_0 > 0$ such that 
	\begin{equation*}
	F(\x,\y) \le F(\bar{\x},\bar{\y}) + \epsilon,~\forall (\x,\y) \in \mathbb{B}_{\delta_0}(\bar{\x},\bar{\y}).
	\end{equation*} 
	As $\S(\x)$ is ISC at $\bar{\x}$ relative to $\X$, then it follows that there exists $\frac{\sqrt{2}}{2}\delta_0 \ge \delta > 0$ satisfying $$\S(\x) \cap \mathbb{B}_{\frac{\sqrt{2}}{2}\delta_0}(\bar{\y}) \neq \varnothing,\ \forall \x \in \mathbb{B}_{\delta}(\bar{\x}) \cap \X.$$ Therefore, for any $\x \in \mathbb{B}_{\delta}(\bar{\x}) \cap \X$, there exists $\y \in \S(\x)$ satisfying $(\x,\y) \in \mathbb{B}_{\delta_0}(\bar{\x},\bar{\y})$ and thus $F(\x,\y) \le F(\bar{\x},\bar{\y}) + \epsilon$. Consequently, for any $\x \in \mathbb{B}_{\delta}(\bar{\x}) \cap \X$, we have 
	\begin{equation*}
	\varphi(\x) = \min_{\y \in \S(\x) } ~~ F(\x,\y) \le F(\bar{\x},\bar{\y}) + \epsilon = \varphi(\bar{\x})+ 2\epsilon,
	\end{equation*}
	which contradicts to $\varphi(\x^t) \rightarrow \infty$.
\end{proof}


\subsection{Proof of Lemma \ref{lemma:f_USC}}

\begin{proof}
	For any sequence $\{\x^t\} \subseteq \X$ satisfying $\x^t \rightarrow \bar{\x} \in \X$, given any $\epsilon > 0$, let $\bar{\y} \in \mathbb{R}^m$ satisfy $f(\bar{\x},\bar{\y}) \le f^*(\bar{\x})+ \epsilon$. As $f$ is continuous at $(\bar{\x},\bar{\y})$, there exists $T > 0$ such that 
	\begin{equation*}
	f^*(\x^t) \le f(\x^t,\bar{\y}) \le f(\bar{\x},\bar{\y}) + \epsilon \le f^*(\bar{\x})+ 2\epsilon,\quad \forall t > T,
	\end{equation*}
	and thus
	\begin{equation*}
	\limsup_{t \rightarrow \infty} f^*(\x^t) \le f^*(\bar{\x}) + 2\epsilon.
	\end{equation*}
	By taking $\epsilon \rightarrow 0$, we get $\limsup_{k \rightarrow \infty} f^*(\x^t) \le f^*(\bar{\x})$.
\end{proof}


We next prove the uniform convergence of $\{\tilde{\y}_K(\x)\}$ towards the solution set $\mathcal{S}(\mathbf{x})$ through the uniform convergence of $\{f(\x,\tilde{\y}_K(\x))\}$.

\subsection{Proof of Proposition \ref{prop:yK_conver}}

\begin{proof}
	We are going to prove this statement by a contradiction. We assume that there exist bounded set $\mathcal{Y} \subseteq \mathbb{R}^m$, $\epsilon > 0$, sequences $\{(\x^t,\y^t)\} \subseteq \X \times \mathcal{Y}$ and $\{\delta_k\}$ with $\delta_k \rightarrow 0$ satisfying 
	\begin{equation*}
	f(\x^t,\y^t) - f^*(\x^t) \le \delta_k \ \text{and}\ \mathtt{dist}(\y^t,\S(\x^t)) > \epsilon.
	\end{equation*}
	Without loss of generality, we can assume that $\x^t \rightarrow \bar{\x} \in \X$ and $\y^t \rightarrow \bar{\y} \in \mathbb{R}^m$ as $t\to\infty$. According to the continuity of $f$ and the USC of $f^*$ from Lemma~\ref{lemma:f_USC}, we have
	\begin{equation*}
	0 \le f(\bar{\x},\bar{\y}) - f^*(\bar{\x}) \le \liminf_{t \rightarrow \infty}f(\x^t,\y^t) - f^*(\x^t) \le 0,
	\end{equation*}
	which implies $\bar{\y} \in \S(\bar{\x})$. However, as  $\mathtt{dist}(\y^t,\S(\x^t)) > \epsilon$, following from the ISC of $\S(\x)$ at $\bar{\x}$ and Proposition 5.11 of~\cite{rockafellar2009variational}, we have
	\begin{equation*}
	\begin{aligned}
	\mathtt{dist}(\bar{\y},\S(\bar{\x})) &\ge \limsup_{t \rightarrow \infty} \mathtt{dist}(\bar{\y},\S(\x^t)) \\
	&= \limsup_{t \rightarrow \infty} \left(\mathtt{dist}(\y^t,\S(\x^t)) + \|\y^t - \bar{\y}\|\right) \\
	&\ge \liminf_{t \rightarrow \infty} \mathtt{dist}(\y^t,\S(\x^t)) \ge \epsilon,
	\end{aligned}
	\end{equation*}
	which contradicts to $\bar{\y} \in \S(\bar{\x})$.
\end{proof}



\subsection{Proof of Proposition \ref{prop:varphi_LSC}}

\begin{proof}
	We assume that there exists $\bar{\x}\in\X$ satisfying $\x^t\to\bar{\x}$ as $t\to\infty$, then the following 
	\begin{equation*}
	\liminf_{\x \rightarrow \bar{\x}} \varphi(\x) < \varphi(\bar{\x}),
	\end{equation*}
	holds. Next, there exist $\epsilon > 0$ and sequences $\x^t \rightarrow \bar{\x}\in\X$ and $\y^t \in \S(\x^t)$ satisfying
	\begin{equation*}
	F(\x^t,\y^t) \le \varphi(\x^t) + \epsilon < \varphi(\bar{\x}) - \epsilon.
	\end{equation*}
	Furthermore, 
	since $F(\x,\y)$ is level-bounded in $\y$ locally uniformly in $\x\in\X$, we have that $\{ \y^t \}$ is bounded. Take a subsequence $\{\y^\nu\}$ of $\{\y^t\}$ such that $\y^\nu \rightarrow \hat{\y}$ and it follows from the OSC of $\S$ that $\hat{\y} \in \S(\bar{\x})$. Then we have
	\begin{equation*}
	\begin{aligned}
	\varphi(\bar{\x}) \le F(\bar{\x},\hat{\y}) \le \limsup_{t \rightarrow \infty} F(\x^t,\y^t) &= \limsup_{t \rightarrow \infty} \varphi(\x^t) \\
	&\le \varphi(\bar{\x}) - \epsilon,
	\end{aligned}
	\end{equation*}
	which implies a contradiction. Thus $$\varphi(\bar{\x}) \le \liminf_{\x \rightarrow \bar{\x}} \varphi(\x)$$ and we get the conclusion.
\end{proof}



\subsection{Proof of Theorem \ref{thm:conver_BDA}}

\begin{proof}
	We first show that $F(\x,\y)$ is level-bounded in $\y$ locally uniformly in $\x\in\X$. For any $\bar{\x} \in \X$, let $\{\x^t\} \subseteq \X$ with $\x^t \rightarrow \bar{\x}$ and $\{\y^t\} \in \mathbb{R}^m$ with $\|\y^t\| \rightarrow +\infty$. Then, with Assumption~1 
	we have
	\begin{equation*}
	\begin{aligned}
	F(\x^t,\y^t) \ge & F(\x^t,\y^1) + \langle \nabla_\y F(\x^t,\y^1), \y^t - \y^1 \rangle\\ 
	& + \frac{\sigma}{2}\| \y^t - \y^1 \|^2.
	\end{aligned}
	\end{equation*}
	As $F(\x,\cdot) : \mathbb{R}^m \rightarrow \mathbb{R}$ is Lipschitz continuous with uniform constant $L_0$ for any $\x \in \X$, we have $\| \nabla_\y F(\x^t,\y^1)\| \le L_0$. Then, by the continuity of $F$, with $\x^t \rightarrow \bar{\x} \in \X$, and $\|\y^t\| \rightarrow +\infty$, we have $F(\x^t,\y^t) \rightarrow +\infty$. Thus $F(\x,\y)$ is level-bounded in $\y$ locally uniformly in $\x\in\X$. Then with Proposition~\ref{prop:varphi_LSC} and assumptions in Theorem~\ref{thm:conver_BDA}, we get the LSC property of $\varphi$ on $\X$. And according to Lemma~\ref{lemma:tildeS_bounded}, there exists $M > 0$ such that $C_{\y^*(\x)} \le M$ for any $\y^*(\x) \in \tilde{\S}(\x)$ and $\x \in \X$. 
	Following Proposition~\ref{prop:f_conver}, there exists $C > 0$ such that for any $\x \in \X$ we have
	\begin{align*}
	&\|\y_K(\x)\| \le C, \quad \forall K \ge 0,\\
	&\|\y_K(\x) - \tilde{\y}_K(\x)\| \le \frac{C}{K},
	\end{align*}
	and 
	\begin{equation*}
	f(\tilde{\y}_K(\x)) - f^*(\x) \le \frac{C}{K}, \quad \forall K \ge 0.
	\end{equation*}
	Next, according to Proposition~\ref{prop:yK_conver}, for any $\epsilon > 0$, there exists $k(\epsilon) > 0$ such that whenever $K > \max \{2C/\epsilon, k(\epsilon) \}$ we have
	\begin{equation*}
	\begin{aligned}
	&\quad\sup_{\x \in \X} \mathtt{dist}(\y_{K}(\x),\S(\x)) \\
	&\le \|\y_K(\x) - \tilde{\y}_K(\x)\| + \sup_{\x \in \X} \mathtt{dist}(\tilde{\y}_{K}(\x),\S(\x)) \le \epsilon.
	\end{aligned}
	\end{equation*}
	Then it follows from Proposition~\ref{prop:f_conver} that $\varphi_K(\x) \rightarrow \varphi(\x)$ when $K \rightarrow \infty$ for any $\x \in \X$. 
\end{proof}

\section{Proofs of Section 4.3}\label{sec:profTheorem3}


\subsection{Proof of Lemma \ref{prop:revisting_continuous_S}}

\begin{proof}
	First, according to Proposition 4.4 of~\cite{bonnans2013perturbation}, we know that if $f(\x,\y) : \mathbb{R}^n \times \mathbb{R}^m \rightarrow \mathbb{R}$ is continuous on $\X \times \mathbb{R}^m$, level-bounded in $\y$ locally uniformly in $\x\in\X$, then $f^*(\x)$ is continuous on $\X$, $\S(\x)$ is OSC on $\X$ and locally bounded at $\bar{\x}$. Thus, for any $\bar{\x} \in \X$, $f^*(\x)$ is locally bounded at $\bar{\x}$. 
	As $\S(\x)$ is a single-valued mapping on $\X$ and $\S(\x)$ is OSC at $\bar{\x} \in \X$ and locally bounded at $\bar{\x}$, Upon Proposition 5.20 of~\cite{rockafellar2009variational}, we conclude that $\S(\x)$ is ISC at $\bar{\x}$, and thus continuous at $\bar{\x}$. This completes the proof.
\end{proof}


\subsection{Proof of Theorem \ref{thm:revisting_conver}}
\begin{proof}
	First, we get the continuity of $\S(\x)$ on $\X$ from Lemma~\ref{prop:revisting_continuous_S}. Then, by Proposition~\ref{prop:varphi_LSC}, we obtain the LSC of $\varphi(\x)$ on $\X$. From Proposition~\ref{prop:yK_conver} and Lemma~\ref{prop:revisting_continuous_S}, we have that for any $\epsilon>0$, there exists $k(\epsilon)>0$ such that whenever $K>k(\epsilon)$,  
	\begin{equation*}
	\sup_{\mathbf{x}\in\X}\mathtt{dist}(\mathbf{y}_{K}(\mathbf{x}),\S(\mathbf{x}))\leq\epsilon.
	\end{equation*}
	As $\S(\x)$ is a single-valued mapping on $\X$, we have $\varphi_K(\x) \rightarrow \varphi(\x)$ for any $\x \in \X$ as $K \rightarrow \infty$. 
\end{proof}

In the following two propositions, we assume that $f(\x,\cdot):\mathbb{R}^m\to\mathbb{R}$ is $L_f$-smooth and convex, $s_l \leq 1/L_f$. 

\subsection{Proof of Proposition \ref{pg}}
\begin{proof}
	This proposition can be directly obtained from Theorem 10.21 and Theorem 10.23 of~\cite{beck2017first}. 
\end{proof}
Then in the following proposition we can immediately verify our required assumption on $\{f(\x,\y_K(\x))\}$ in the absence of the strong convexity property on the LL objective.

\subsection{Proof of Proposition \ref{prop:fK-conver}}

\begin{proof}
	By the same arguments given in proof of Lemma~\ref{prop:revisting_continuous_S}, we can show that $\S(\x)$ is locally bounded at each point on $\X$ under Assumption~2. As $\X$ is compact, thus $\cup_{\x\in \X} \S(\x)$ is bounded. Then the conclusion follows from Proposition~\ref{pg} directly.
\end{proof}

\section{Extended Theoretical Results}\label{sec:nonsmooth}

\subsection{Local Convergence Results}\label{sec:local}

In this part, we analyze the local convergence behaviors of BDA. 
In fact, even if $\x_K$ is a local minimum of $\varphi_{K}(\x)$ with uniform neighborhood modulus $\delta > 0$, we can still obtain similar convergence results as that in Theorem~\ref{thm:general}. Such properties are summarized in the following theorem.
\begin{thm}\label{thm:local}
	Suppose both the LL solution set and UL objective convergence properties (stated in Section \ref{subsec:proofThm1}) hold and let $\x_K$ be a local minimum of $\varphi_{K}(\x)$ with uniform neighborhood modulus $\delta > 0$. Then we have that any limit point $\bar{\x}$ of the sequence $\{\x_K\}$ is a local minimum of $\varphi$, i.e., there exists $\tilde{\delta} > 0$ such that
	\[
	\varphi(\bar{\x}) \le \varphi(\x), \quad \forall \x \in \mathbb{B}_{\tilde{\delta}} (\bar{\x})\cap \X.
	\]
\end{thm}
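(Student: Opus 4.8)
The plan is to adapt the proof of Theorem~\ref{thm:general}(1) to the localized setting, the only genuinely new ingredient being a careful comparison of balls centered at $\x_K$ versus at the limit point $\bar{\x}$. Since $\X$ is compact, I would pass to a subsequence and assume without loss of generality that $\x_K \to \bar{\x} \in \X$. The uniform neighborhood modulus hypothesis supplies a single $\delta > 0$, independent of $K$, with $\varphi_K(\x_K) \le \varphi_K(\x)$ for every $\x \in \mathbb{B}_{\delta}(\x_K) \cap \X$. Setting $\tilde{\delta} = \delta/2$, I would note that once $K$ is large enough that $\|\x_K - \bar{\x}\| \le \delta/2$, the triangle inequality gives $\mathbb{B}_{\tilde{\delta}}(\bar{\x}) \cap \X \subseteq \mathbb{B}_{\delta}(\x_K) \cap \X$; hence the local optimality at $\x_K$ transfers to every test point $\x$ in the \emph{fixed} ball $\mathbb{B}_{\tilde{\delta}}(\bar{\x}) \cap \X$.

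Next I would reproduce the uniform comparison between $\varphi$ and $\varphi_K$ exactly as in the proof of Theorem~\ref{thm:general}. By the LL solution set property, for any $\epsilon > 0$ there is $k(\epsilon)$ so that $\sup_{\x \in \X} \mathtt{dist}(\y_K(\x), \S(\x)) \le \epsilon/(2L_0)$ whenever $K > k(\epsilon)$; choosing $\y^*(\x) \in \S(\x)$ within $\epsilon/L_0$ of $\y_K(\x)$ and invoking the $L_0$-Lipschitz continuity of $F(\x,\cdot)$ yields $\varphi(\x) \le \varphi_K(\x) + \epsilon$ on all of $\X$. Applied at $\x = \x_K$ and combined with the inclusion above, this gives, for all sufficiently large $K$,
\[
\varphi(\x_K) \le \varphi_K(\x_K) + \epsilon \le \varphi_K(\x) + \epsilon, \quad \forall \x \in \mathbb{B}_{\tilde{\delta}}(\bar{\x}) \cap \X.
\]

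I would then let $K \to \infty$ along the subsequence. The LSC of $\varphi$ (part of the UL objective convergence property) gives $\varphi(\bar{\x}) \le \liminf_{K \to \infty} \varphi(\x_K)$, while the pointwise convergence $\varphi_K(\x) \to \varphi(\x)$ handles the right-hand side at each fixed $\x$. This produces $\varphi(\bar{\x}) \le \varphi(\x) + \epsilon$ for every $\x \in \mathbb{B}_{\tilde{\delta}}(\bar{\x}) \cap \X$, and sending $\epsilon \to 0$ delivers the claimed local minimality with modulus $\tilde{\delta}$.

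The internal estimates are identical to the global case; the only genuinely new point — and the step I would be most careful about — is the ball-centering argument. The uniformity of the modulus $\delta$ in $K$ is precisely what allows a single fixed neighborhood $\mathbb{B}_{\tilde{\delta}}(\bar{\x})$ to be absorbed into $\mathbb{B}_{\delta}(\x_K)$ for all large $K$; without that uniformity, $\tilde{\delta}$ could degenerate to zero along the sequence and the conclusion would break down. I would therefore keep the dependence of the constants explicit and confirm that the threshold on $K$ — the larger of $k(\epsilon)$ and the index ensuring $\|\x_K - \bar{\x}\| \le \delta/2$ — does not interfere with taking the limit at a fixed test point $\x$.
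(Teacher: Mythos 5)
Your proposal is correct and follows essentially the same route as the paper's own proof: the same uniform estimate $\varphi(\x) \le \varphi_K(\x) + \epsilon$ obtained from the LL solution set property and the $L_0$-Lipschitz continuity of $F(\x,\cdot)$, the same ball-centering step (the paper writes it as $\mathbb{B}_{\delta/2}(\bar{\x}) \subseteq \mathbb{B}_{\delta/2+\|\x_K - \bar{\x}\|}(\x_K) \subseteq \mathbb{B}_{\delta}(\x_K)$, which is exactly your triangle-inequality inclusion with $\tilde{\delta}=\delta/2$), and the same limit passage using LSC of $\varphi$ on the left and pointwise convergence $\varphi_K(\x)\to\varphi(\x)$ on the right before sending $\epsilon \to 0$. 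Your closing remark about why the uniformity of $\delta$ in $K$ is indispensable is also precisely the point the paper's argument hinges on.
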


\begin{proof}
	Since $\X$ is compact, we can assume without loss of generality that $\x_K \rightarrow \bar{\x} \in \X$ and $\x_K \in \mathbb{B}_{\delta/2} (\bar{\x})$ by considering a subsequence of $\{\x_K\}$. For any $\epsilon > 0$, there exists $k(\epsilon) > 0$ such that whenever $K > k(\epsilon) $, we have
	\begin{equation*}
	\sup_{\x \in \X} \mathtt{dist}(\y_{K}(\x),\S(\x)) \le \frac{\epsilon}{2L_0}.
	\end{equation*}
	Thus, for any $\x \in \X$, there exists $\y^*(\x) \in \S(\x)$ such that
	\begin{equation*}
	\|\y_{K}(\x) - \y^*(\x)\| \le \frac{\epsilon}{L_0}.
	\end{equation*}
	Therefore, for any $\x \in \X$, we have 
	\begin{equation*}
	\begin{aligned}
	\varphi(\x) &= \inf_{\y \in \S(\x) } F(\x, \y) \\
	&\le F(\x, \y^*(\x)) \\
	&\le F(\x, \y_{K}(\x)) + L_0\|\y_{K}(\x) - \y^*(\x)\|\\
	&\le \varphi_K(\x) + \epsilon.
	\end{aligned}
	\end{equation*}
	This implies that, for any $\epsilon > 0$, there exists $k(\epsilon) > 0$ such that whenever $K> k(\epsilon) $, we have
	\begin{equation*}\label{lem0}
	\varphi(\x_{K}) \le \varphi_K(\x_{K}) + \epsilon \le \varphi_K(\x) + \epsilon , \quad \forall \x \in \X.
	\end{equation*}
	Next, as $\x_K$ is a local minimum of $\varphi_{K}(\x)$ with uniform neighborhood modulus $\delta$, it follows
	\[
	\varphi_K(\x_K) \le \varphi_K(\x), \ \forall \x \in \mathbb{B}_{\delta} (\x_K)\cap \X.
	\]
	Since $\mathbb{B}_{\delta/2} (\bar{\x}) \subseteq \mathbb{B}_{\delta/2+\|\x_k - \bar{\x}\|}(\x_K) \subseteq \mathbb{B}_{\delta} (\x_K)$, we have that for any $\epsilon > 0$, $\forall \x \in \mathbb{B}_{\delta/2} (\bar{\x}) \cap \X,$ there exists $k(\epsilon) > 0$ such that whenever $K > k(\epsilon) $, 
	\[
	\begin{aligned}
	\varphi(\x_K) \le \varphi_K(\x_K) + \epsilon \le \varphi_K(\x) + \epsilon.
	\end{aligned}
	\]
	Taking $K \rightarrow \infty$ and by the LSC of $\varphi$, $\forall \x \in \mathbb{B}_{\delta/2} (\bar{\x}) \cap \X$, we have
	\begin{equation*}
	\begin{aligned}
	\varphi(\bar{\x}) &\le \liminf_{K \rightarrow \infty}\varphi(\x_K)\\
	& \le \liminf_{K \rightarrow \infty}\varphi_K(\x_K) + \epsilon \\
	& \le \lim_{K \rightarrow \infty}\varphi_K(\x) + \epsilon = \varphi(\x) + \epsilon.
	\end{aligned}
	\end{equation*}
	By taking $\epsilon \rightarrow 0$, we have
	\begin{equation*}
	\varphi(\bar{\x}) \le \varphi(\x), \quad \forall \x \in \mathbb{B}_{\delta/2} (\bar{\x})\cap \X,
	\end{equation*}
	which implies $\bar{\x} \in \arg\min_{\x \in \mathbb{B}_{\delta/2} (\bar{\x})\cap \X} \varphi(\x)$, i.e, $\bar{x}$ is a local minimum of $\varphi$.
\end{proof}

\subsection{Nonsmooth LL Objective}

It is well-known that a variety of nonsmooth regularization techniques (e.g., $\ell_1$-norm regularization) have been utilized in learning and vision areas. So in this section, we briefly discuss a potential extension of BDA for BLPs with the nonsmooth LL objective, e.g.,
\begin{equation}\label{eq:llobj}
\S(\x)=\arg\min\limits_{\y}h(\x,\y)=f(\x,\y) + g(\x, \y).
\end{equation}
Here we consider $f$ as a function with the same properties as that in our above analysis, while $g$ is convex but not necessarily smooth, w.r.t. $\y$ and continuous w.r.t. $(\x,\y)$. Since $g$ is not necessarily differentiable w.r.t. $\y$, these existing gradient-based first-order BLP methods are not available for this problem. Fortunately, we demonstrate that by slightly modifying our inner updating rule, BDA can be directly extended to address BLPs with the nonsmooth LL objective in Eq.~\eqref{eq:llobj}. Specifically, we first write the descent direction of the LL subproblem as 
\begin{equation*}
{\mathbf{d}}^{{h}}_k(\mathbf{x})=\mathbf{y}_k-\mathtt{prox}_{s_lg(\x,\cdot)}( \y_k-s_l\nabla_{\mathbf{y}} f(\mathbf{x},\mathbf{y}_{k})),
\end{equation*}
where $\mathtt{prox}_{s_lg(\x,\cdot)}$ denotes the proximal operator w.r.t. the nonsmooth function $g(\x,\cdot)$ and step size $s_l$. Then by aggregating $\mathbf{d}^{F}_k(\mathbf{x})$ and $\mathbf{d}^{h}_k(\mathbf{x})$, we derive a new $\T_k$ to handle BLPs with the nonsmooth composite LL objective $h$, i.e., 
\begin{equation}
\T_{k+1}(\x,\y_k(\x))=\y_k -\left( \alpha_k\mathbf{d}^{{h}}_k(\mathbf{x})+(1-\alpha_k)\mathbf{d}^{{F}}_k(\mathbf{x})\right),\label{eq:lower-non}
\end{equation}
where $\alpha_k\in(0,1]$. In fact, since explicitly estimating the subgradient information of some proximal operators may be computationally infeasible in practice, one may apply automatic differentiation through the dynamical system with approximation techniques~\cite{wang2016proximal,rajeswaran2019meta} to obtain $\frac{d\varphi_{K}}{d\x}$, where $\varphi_{K}(\mathbf{x})=F(\mathbf{x},\mathbf{y}_K(\mathbf{x}))$.

We are now in the position to extend the converge properties of BDA for BLPs in Eq.~\eqref{eq:lower-non} from smooth LL case to nonsmooth LL case. Similar to the discussion in the smooth case, our analysis could follow the following roadmap:

Step 1: Denoting $\tilde{\S}(\x) = \arg\min_{\y \in \S(\x) } F(\x,\y)$ and further $h^{\ast}(\x)=\min_{\y}h(\x,\y)$, as extensions to Lemma~\ref{lemma:tildeS_bounded} and Lemma~\ref{lemma:f_USC}, we shall derive the boundedness of $\tilde{\S}(\x)$ and the USC of $h^{\ast}(\x)$ for the nonsmooth LL case, respectively. The proofs are indeed straightforward and purely technical, thus omitted here. 

Step 2: As an extension to Proposition~\ref{prop:f_conver} which focuses on the smooth case, we may derive the following convergence results regarding $\{\y_K(\x)\}$ in the light of the general fact stated in~\cite{sabach2017first}.
\begin{prop}\label{prop:f+g_conver}
	Suppose Assumption 1 is satisfied, $g$ is continuous and convex w.r.t. $\y$, and let $\{\y_K\}$ be defined as in Eq.~\eqref{eq:lower-non}, $s_l \in (0,1/L_f]$, $s_u \in (0,2/(L_F+\sigma)]$, $$\alpha_k = \min \left\{2\gamma/k(1-\beta),1-\varepsilon \right\},$$ with $k \ge 1$, $\varepsilon>0$, $\gamma \in (0,1]$ and $$\beta = \sqrt{1-2s_u\sigma L_F/(\sigma + L_F)}.$$ Denoting $$\tilde{\y}_K(\x) = \mathrm{prox}_{s_lg(\x,\cdot)}(\y_K(\x)-s_l\nabla_{\y}f(\x,\y_K(\x))),$$ and $$C_{\y^*(\x)} = \max \left\{ \|\y_0 - \y^*(\x)\|, \frac{s_u}{1-\beta}\|\nabla_\y F(\x,\y^*(\x))\|  \right\},$$ with $\y^*(\x) \in \tilde{\S}(\x)$ and $\x \in \X$. Then it holds that
	\begin{align*}
	\|\y_K(\x) - \y^*(\x)\| &\le C_{\y^*(\x)},\\
	\|\y_K(\x) - \tilde{\y}_K(\x)\| &\le \frac{2C_{\y^*(\x)}(J+2)}{K(1-\beta)},\\
	h(\x,\tilde{\y}_K(\x)) - h^*(\x) &\le \frac{2C_{\y^*(\x)}^2(J+2)}{K(1-\beta)s_l},
	\end{align*}
	where $J = \lfloor 2/(1-\beta) \rfloor$. Further, $\y_K(\x)$ converges to $\tilde{\S}(\x)$ as $K \rightarrow \infty$ for any $\x \in \X$. 
\end{prop}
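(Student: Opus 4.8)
The plan is to recognize the update \eqref{eq:lower-non} as the bi-level gradient averaging scheme of \cite{sabach2017first} applied, for each fixed $\x$, to the simple bi-level problem $\min_{\y\in\S(\x)}F(\x,\y)$, but with the smooth lower-level gradient step of Proposition~\ref{prop:f_conver} replaced by the composite \emph{prox-gradient} step. Writing $T_F(\y)=\y-s_u\nabla_\y F(\x,\y)$ for the upper-level operator and $T_h(\y)=\mathrm{prox}_{s_lg(\x,\cdot)}\!\left(\y-s_l\nabla_\y f(\x,\y)\right)$ for the lower-level operator, the iteration averages $T_F(\y_k)$ and $T_h(\y_k)$ through $\alpha_k$ exactly as in Proposition~\ref{prop:f_conver}, but with $T_h$ in place of the smooth lower-level gradient map, and $\tilde{\y}_K(\x)=T_h(\y_K(\x))$. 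The whole argument of Proposition~\ref{prop:f_conver} then transfers, because the only operator-specific facts it uses admit exact analogues in the composite setting.

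First I would record these analogues. Under Assumption~\ref{assum:F} and $s_u\in(0,2/(L_F+\sigma)]$, the co-coercivity inequality for $\sigma$-strongly convex, $L_F$-smooth functions shows $T_F$ is a contraction with modulus $\beta=\sqrt{1-2s_u\sigma L_F/(\sigma+L_F)}$. Since $f(\x,\cdot)$ is $L_f$-smooth and convex and $s_l\in(0,1/L_f]$, the operator $T_h$ is nonexpansive with $\mathrm{Fix}(T_h)=\S(\x)$, the fixed-point identity being precisely the optimality condition $0\in\nabla_\y f+\partial_\y g$ (see \cite{beck2017first}). With these two facts the first bound follows by induction: from $\|T_h(\y_k)-\y^*(\x)\|\le\|\y_k-\y^*(\x)\|$ and $\|T_F(\y_k)-\y^*(\x)\|\le\beta\|\y_k-\y^*(\x)\|+s_u\|\nabla_\y F(\x,\y^*(\x))\|$ one obtains
\[
\|\y_{k+1}(\x)-\y^*(\x)\|\le\bigl(1-\alpha_k(1-\beta)\bigr)\|\y_k(\x)-\y^*(\x)\|+\alpha_k(1-\beta)\tfrac{s_u}{1-\beta}\|\nabla_\y F(\x,\y^*(\x))\|,
\]
a convex combination of $\|\y_k-\y^*(\x)\|$ and $\tfrac{s_u}{1-\beta}\|\nabla_\y F(\x,\y^*(\x))\|$ for which $C_{\y^*(\x)}$ is the invariant, giving $\|\y_K(\x)-\y^*(\x)\|\le C_{\y^*(\x)}$.

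The core of the proof, and where I expect the main difficulty, is the second bound: the $O(1/K)$ decay of the prox-gradient residual $\|\y_K-\tilde{\y}_K\|=\|\y_K-T_h(\y_K)\|$. This needs the refined averaging analysis of \cite{sabach2017first}: exploiting the firm-nonexpansiveness inequality of the prox-gradient map to extract a $\|(\mathrm{I}-T_h)\y_k\|^2$ term, combining it with the $\sim 2\gamma/(k(1-\beta))$ decay of $\alpha_k$, and running the induction over windows of length $J=\lfloor 2/(1-\beta)\rfloor$ (which is where the factor $J+2$ originates) to upgrade the harmonic decay of the step sizes into a genuine $1/K$ rate at the $K$-th iterate. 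Granting this, the third bound is cheap: applying the fundamental prox-gradient inequality $h(T_h(\y))-h^*(\x)\le\frac{1}{2s_l}\bigl(\|\y-\y^*(\x)\|^2-\|T_h(\y)-\y^*(\x)\|^2\bigr)$ at $\y=\y_K$, factoring the right-hand side as a difference of squares, and bounding it by $\frac{1}{2s_l}\|\y_K-T_h(\y_K)\|\bigl(\|\y_K-\y^*(\x)\|+\|T_h(\y_K)-\y^*(\x)\|\bigr)$ with the first and third bounds together with $\|\y_K-\y^*(\x)\|,\|T_h(\y_K)-\y^*(\x)\|\le C_{\y^*(\x)}$ yields exactly $\frac{2C_{\y^*(\x)}^2(J+2)}{K(1-\beta)s_l}$. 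Finally, since the residual tends to $0$ and $h(\tilde{\y}_K)\to h^*(\x)$, uniform boundedness from Lemma~\ref{lemma:tildeS_bounded} and closedness of $\S(\x)$ force $\y_K(\x)$ to converge to $\tilde{\S}(\x)$, completing the statement.
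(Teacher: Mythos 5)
Your proposal takes essentially the same route as the paper; in fact, the paper supplies no detailed proof of this proposition at all. Its ``Step 2'' simply asserts that the result can be derived ``in the light of the general fact stated in \cite{sabach2017first}'', which is exactly the BiG-SAM transfer you describe: your operator-level verification (contraction of $T_F$ with modulus $\beta$ from strong convexity, smoothness, and $s_u \le 2/(L_F+\sigma)$; nonexpansiveness of the prox-gradient map $T_h$ with $\mathrm{Fix}(T_h)=\S(\x)$), the invariance induction giving the first bound, and the derivation of the third bound from the second via the fundamental prox-gradient inequality and a difference of squares are precisely the ingredients that adaptation requires. (A side remark: Eq.~\eqref{eq:lower-non} as printed swaps the weights $\alpha_k$ and $1-\alpha_k$ relative to the smooth scheme in Eq.~\eqref{eq:lower}; your recursion uses the smooth-case convention $\y_{k+1}=\alpha_k T_F(\y_k)+(1-\alpha_k)T_h(\y_k)$, which is clearly what is intended since $\alpha_k\to 0$ must shift the weight onto the lower-level step.) One genuine caveat: your closing argument claims that residual decay, $h(\x,\tilde{\y}_K(\x))\to h^*(\x)$, boundedness, and closedness of $\S(\x)$ ``force'' $\y_K(\x)\to\tilde{\S}(\x)$. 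Those facts only force $\mathtt{dist}(\y_K(\x),\S(\x))\to 0$; convergence to the $F$-optimal point \emph{within} $\S(\x)$ is the hard part of the Sabach--Shtern theorem, obtained by a viscosity/sequential-averaging argument that exploits the contraction $T_F$ together with $\sum_k \alpha_k = \infty$, not by a compactness-plus-closedness argument. So for that final claim you must lean on the cited analysis itself --- exactly as the paper does --- rather than on the elementary argument in your last sentence.
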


Step 3: Taking a closer look at the proofs for Proposition~\ref{prop:yK_conver} and Proposition~\ref{prop:varphi_LSC}, we observe that the techniques we used barely rely on the smoothness of the LL objective. Therefore, straightforward extensions of Proposition~\ref{prop:yK_conver} and Proposition~\ref{prop:varphi_LSC} to the nonsmooth case can yield the desired uniform convergence of $\tilde{\y}_K(\x)$ and the UL objective convergence, respectively. 

Step 4: Similar to the arguments in the proof of Theorem~\ref{thm:conver_BDA}, by combining Step 1 and Step 2, we eventually meet the \emph{LL solution set} and \emph{UL objective convergence} properties, and hence the analysis framework in Theorem~\ref{thm:general} has been activated. Therefore, the same convergence results concerning $\{\x_K\}_{K\in\mathbb{N}}$ and $\{\varphi_{K}(\x)\}$ can be achieved as following. 

\begin{thm}
	Suppose Assumption 1 is satisfied, $g$ is continuous and convex w.r.t. $\y$, and let $\{\y_K\}$ be defined as in Eq.~\eqref{eq:lower-non}, $s_l \in (0,1/L_f]$, $s_u \in (0,2/(L_F+\sigma)]$, $$\alpha_k = \min \left\{2\gamma/k(1-\beta),1-\varepsilon \right\},$$ with $ k \ge 1$, $\varepsilon>0$, $\gamma \in (0,1]$ and $$\beta = \sqrt{1-2s_u\sigma L_F/(\sigma + L_F)}.$$ Assume further that $\S(\x)$ is nonempty for any $\x \in \X$ and $\S(\x)$ is continuous on $\X$.
	Then
	\begin{itemize}
		\item[(1)] if $\x_{K}\in\arg\min_{\x\in\X}\varphi_{K}(\x)$, we have the same results as in Theorem \ref{thm:general};
		\item[(2)] if $\x_K$ is a local minimum of $\varphi_{K}(\x)$ with uniform neighborhood modulus $\delta > 0$, we have that any limit point $\bar{\x}$ of the sequence $\{\x_K\}$ is a local minimum of $\varphi$. 
	\end{itemize} 
\end{thm}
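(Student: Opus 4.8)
The plan is to reduce both conclusions to the master convergence result Theorem~\ref{thm:general} and its local counterpart Theorem~\ref{thm:local}, by verifying the two elementary requirements—the \emph{LL solution set} property and the \emph{UL objective convergence} property—exactly as in the proof of Theorem~\ref{thm:conver_BDA}, but with the proximal surrogate in place of the gradient surrogate. The observation driving the whole argument is that $\tilde{\y}_K(\x)=\mathrm{prox}_{s_lg(\x,\cdot)}(\y_K(\x)-s_l\nabla_\y f(\x,\y_K(\x)))$ plays precisely the role the gradient-step point did before, and that the gap $h(\x,\tilde{\y}_K(\x))-h^*(\x)$ replaces $f(\x,\tilde{\y}_K(\x))-f^*(\x)$ throughout. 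First I would assemble the two static variational facts of Step~1: since $F(\x,\cdot)$ is $\sigma$-strongly convex under Assumption~\ref{assum:F}, $\tilde{\S}(\x)=\arg\min_{\y\in\S(\x)}F(\x,\y)$ is a singleton, and the proof of Lemma~\ref{lemma:tildeS_bounded}—which uses only joint continuity of $F$, level-boundedness of $F$ in $\y$, and the ISC of $\S$ supplied here by the continuity of $\S$—yields a uniform bound $C_{\y^*(\x)}\le M$ over $\x\in\X$; in parallel, because $h=f+g$ is jointly continuous, the argument of Lemma~\ref{lemma:f_USC} carries over verbatim to show $h^*(\x)=\min_\y h(\x,\y)$ is USC on $\X$.

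Next I would feed the bound $M$ into Proposition~\ref{prop:f+g_conver} to obtain, uniformly in $\x\in\X$, the estimates $\|\y_K(\x)-\tilde{\y}_K(\x)\|\le C/K$ and $h(\x,\tilde{\y}_K(\x))-h^*(\x)\le C/K$. The second estimate supplies the hypothesis of the nonsmooth extension of Proposition~\ref{prop:yK_conver}: since that proof rests only on joint continuity of the LL objective, the USC of its value function, and the ISC of $\S$—none of which use smoothness—it applies with $h,h^*$ replacing $f,f^*$, giving $\sup_{\x\in\X}\mathtt{dist}(\tilde{\y}_K(\x),\S(\x))\to 0$. Combining this with $\|\y_K-\tilde{\y}_K\|\le C/K$ through the triangle inequality then yields the \emph{LL solution set} property.

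For the \emph{UL objective convergence} property, the proof of Proposition~\ref{prop:varphi_LSC} again invokes only continuity of $F$, level-boundedness of $F$, and OSC of $\S$, so it delivers the LSC of $\varphi$ on $\X$; and the convergence $\y_K(\x)\to\tilde{\S}(\x)$ from Proposition~\ref{prop:f+g_conver}, together with continuity of $F$ and the singleton-ness of $\tilde{\S}(\x)$, gives $\varphi_K(\x)=F(\x,\y_K(\x))\to F(\x,\tilde{\S}(\x))=\varphi(\x)$ for each fixed $\x$. With both elementary properties established, conclusion~(1) follows immediately from Theorem~\ref{thm:general} and conclusion~(2) from Theorem~\ref{thm:local}.

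The only genuinely new content—and hence where I expect the main obstacle—is the rate estimate of Proposition~\ref{prop:f+g_conver}, whose derivation must track the aggregated proximal-gradient iteration through the abstract sequence-convergence machinery of~\cite{sabach2017first}, handling the nonexpansiveness of the proximal operator and the contraction induced by the strongly convex UL direction. Everything downstream is a routine transcription in which the $\nabla f$-step is replaced by the proximal step and the continuity-only proofs of Propositions~\ref{prop:yK_conver} and~\ref{prop:varphi_LSC} are reused unchanged.
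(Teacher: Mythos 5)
Your proposal is correct and follows essentially the same route as the paper: the paper's own argument is exactly this four-step roadmap—(i) extend Lemma~\ref{lemma:tildeS_bounded} and Lemma~\ref{lemma:f_USC} to get boundedness of $\tilde{\S}(\x)$ and USC of $h^*$, (ii) invoke Proposition~\ref{prop:f+g_conver} (via the machinery of \cite{sabach2017first}) for the rate estimates on $\y_K$, $\tilde{\y}_K$, and $h(\x,\tilde{\y}_K(\x))-h^*(\x)$, (iii) reuse Propositions~\ref{prop:yK_conver} and~\ref{prop:varphi_LSC}, whose proofs never use smoothness, to get the \emph{LL solution set} and \emph{UL objective convergence} properties, and (iv) conclude via Theorem~\ref{thm:general} for global minimizers and Theorem~\ref{thm:local} for local ones. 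You also correctly identify Proposition~\ref{prop:f+g_conver} as the only genuinely new technical ingredient, which is precisely the one piece the paper likewise states without a detailed proof.
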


\end{document}